\documentclass{article}

\usepackage[preprint]{neurips_2026}

\usepackage{amsmath}
\usepackage{amssymb}
\usepackage{graphicx}
\usepackage[utf8]{inputenc}    % allow utf-8 input
\usepackage[T1]{fontenc}       % use 8-bit T1 fonts
\usepackage{hyperref}          % hyperlinks
\usepackage{url}               % simple URL typesetting
\usepackage{booktabs}          % professional-quality tables
\usepackage{amsfonts}          % blackboard math symbols
\usepackage{nicefrac}          % compact symbols for 1/2, etc.
\usepackage{microtype}         % microtypography
\usepackage[ruled,linesnumbered]{algorithm2e}
\usepackage{multirow} 

\usepackage{amsthm}
\newtheorem{theorem}{Theorem}[section]
\newtheorem{proposition}[theorem]{Proposition}
\newtheorem{lemma}[theorem]{Lemma}
\newtheorem{corollary}[theorem]{Corollary}
\theoremstyle{definition}
\newtheorem{definition}[theorem]{Definition}
\newtheorem{example}[theorem]{Example}

\theoremstyle{remark}
\newtheorem{remark}[theorem]{Remark}

\title{Exact Algebraic Computation of Learning Coefficients for Two-Dimensional Singular Models}
\author{%
  Gregoire Sergeant-Perthuis \\
  CQSB, Sorbonne Universit\'e \\
  \texttt{gregoire.sergeant-perthuis@sorbonne-universite.fr} \\
  \And
  Elias Tsigaridas \\
  Ouragan team, Inria Paris \\
  \texttt{elias.tsigaridas@inria.fr} \\
  \And
  Jules Tsukahara \\
  Ouragan team, Inria Paris \\
  \texttt{jules.tsukahara@inria.fr} \\
}

\usepackage{subcaption}
\usepackage{mathtools}
\usepackage{float}
\usepackage[capitalize,noabbrev]{cleveref}
\usepackage{comment}
\usepackage{xcolor}
\usepackage{bm}
\usepackage{tikz-cd}
\usepackage{units}
\usepackage{enumitem}
\newenvironment{sproof}{%
  \proof}{\endproof}

\newcommand{\RR}{\mathbb{R}}

\newcommand{\QQ}{\mathbb{Q}}
\newcommand{\KK}{\mathbb{K}}
\newcommand{\ZZ}{\mathbb{Z}}
\newcommand{\NN}{\mathbb{N}}
\newcommand{\newton}{\mathcal{N}}

\newcommand{\puiseux}[2]{\mathbb{#1}\langle\langle#2\rangle\rangle}
\newcommand{\trunc}[2]{\lceil #1 \rceil^#2}
\newcommand{\finitepart}[2]{P^{\text{fin}}_#1(#2)}
\newcommand{\abs}[1]{|#1|}

\newcommand{\rlct}{\operatorname{RLCT}}
\newcommand{\ord}{\operatorname{ord}}
\newcommand{\res}{\operatorname{res}}
\newcommand{\disc}{\operatorname{disc}}

\newcommand{\Syl}{\operatorname{Syl}}
\newcommand{\Conv}{\operatorname{Conv}}

\usepackage[normalem]{ulem}

\SetArgSty{upshape}
\SetAlCapHSkip{0pt}
\DontPrintSemicolon

\begin{document}

\maketitle

\begin{abstract}%
 Classical information criteria such as the Bayesian Information Criterion (BIC) rely on regularity assumptions that break down for singular models, leading to incorrect model selection in settings such as deep learning. The Widely Applicable Bayesian Information Criterion (WBIC) relies on  local learning coefficients $\lambda$, which in the analytic case coincides with local Real Log Canonical Thresholds (RLCT) of the Kullback-Leibler divergence of the model, to capture correct marginal likelihood asymptotics. Exact computation of the learning coefficients has been limited to special cases, and only sampling-based estimation methods are generally applicable. We present the first deterministic algorithm that  computes local RLCTs exactly for any two-dimensional model whose Kullback-Leibler distance is contact equivalent to a polynomial, derive a bound on its complexity, and demonstrate its effectiveness for a broad class of models, with applications including polynomial neural networks. Beyond providing ground truth to calibrate sampling-based estimators, exact computation reveals algebraic structure in learning coefficients that sampling cannot and out-speeds it in the shallow regime.
\end{abstract}

\section{Introduction}

\textbf{Information criteria (IC):} IC are model selection methods in statistics and machine learning \citep{konishi2008information}. They aim to strike a balance between prediction accuracy and model complexity. They are ubiquitous in diverse domains, ranging from statistics~\citep{zhang2023information}, especially in the presence of structured learning~\citep{FoygelBarberDrton2015,de2011efficient}, and dimensionality reduction~\citep{sclove_using_2021,tomarchio2025number},
to biology~\citep{susko_use_2020}, neuroscience~\citep{penny2012comparing},
and dynamical systems~\citep{thanasutives_adaptive_2024}. In particular, for regular models, for which the smoothness in the parameter space implies identifiability and the existence of the Fisher information metric \citep{liu2025learning}, 
the celebrated Bayesian information
criterion (BIC) asymptotically approximates the log marginal likelihood and thus carries statistical meaning for posterior inference. Through the Laplace approximation \citep{schwarz1978estimating}, the BIC has an explicit formula we can compute directly from the input~data.

\textbf{Singular models:} Regularity assumptions are generically violated in deep learning due to the overparameterization of modern architectures \citep{wei2022deep, liu2025learning}.
Hence, for non-regular models, the various statistical quantities are estimated using approximate and costly methods such as sampling \citep{liu2025learning,newman2026fast,lau2025the}. The real log canonical threshold (RLCT) is a key quantity from algebraic geometry that characterizes the asymptotic behavior of the log marginal likelihood for non-regular models, generalizing the role of model dimension in the BIC \citep{watanabe2009algebraic}. Unlike the explicit BIC formula, computing the RLCT requires resolving singularities in the parameter space; a very challenging computational and mathematical problem.

\textbf{Polynomial neural networks and identifiability:} A class of neural networks whose activation functions are monomials $\sigma(x) = x^r$, coined \emph{polynomial neural networks} (PNNs), are gaining interest for their theoretical properties. Their appeal stems from the fact that a PNN defines an \emph{algebraic} map from weights to functions: the composition of monomial activations and linear layers is a polynomial, so the full machinery of algebraic geometry applies. This structure makes it possible to characterize expressivity via algebraic dimension \citep{kileel2019expressive} and degree \citep{kubjas2024geometry}, and gives rise to a rich theory of identifiability \citep{usevich2025identifiability,shahverdi2025learning}. In particular, the asymptotic behavior of the log marginal likelihood, and hence the identifiability of the model under Bayesian inference, is entirely governed by the RLCT \citep{watanabe2009algebraic}.

\textbf{Our Contribution:} We propose, to our knowledge, the first deterministic algorithm that exactly computes the RLCT for any polynomial of two variables, and derive an upper bound on its arithmetic complexity (Prop.~\ref{prop:unique}, Thm.~\ref{thm:finite_part_main}, and Thm.~\ref{thm:complexity_main}). As an application, we compute exact RLCT for polynomial neural networks (PNNs) with repeated weights of increasing depth. We show that the effective model complexity, as measured by the RLCT, can counter-intuitively decrease with the number of layers, implying that already in the two-dimensional case, this gives rise to a non-trivial theory of identifiability. We advocate that such exact algorithms are new tools for the study of loss landscapes in learning theory.

\paragraph{Notation}

Let $\KK$ be a field of characteristic zero. We let $\bar{\KK}$ be the algebraic closure of $\KK$. We denote by $\KK[x,y]$ the ring of polynomials in two variables. We let $\KK[[x]]$ be the ring of formal power series, $\KK((x))$ the field of Laurent series and $\puiseux{K}{x}$ the field of Puiseux series with coefficients in $\KK$. For an integer $n$ and a subset $S \subset \mathbb{R}^n$, we let $\Conv(S)$ be the convex hull of $S$. For any two subsets $A\subseteq\mathbb{R}^n$ and $B\subseteq\mathbb{R}^n$, we let $A\oplus B \coloneq \{ a+b \mid  a\in A, \,b\in B\}$ be their Minkowski sum. For any $n\in\NN_{>0}$, we let $[n] = \{1,\ldots,n\}$. We use bold roman lowercase letters for vectors in $\RR^2$. 

\subsection{Background}

\paragraph{Information Criteria for Model Selection}

Information criteria are model selection methods in statistics and machine learning that balance predictive accuracy against model complexity. A celebrated example of such a criterion is the Bayesian information criterion (BIC) \citep{schwarz1978estimating}, which emanates from the asymptotic behavior of log marginal likelihoods under regularity assumptions on the models. Concretely, consider a family of parametric probability distributions $P(X_1, \dots, X_N \mid \theta, M)$, one for each parameter value \(\theta\in \Theta_M\), with possible values in a Borel subset $\Theta_M \subseteq \mathbb{R}^{d_M}$; each family of model $M$ therefore provides the likelihood of a collection of $N$ samples  \(X_1, \dots, X_N\). For a given prior \(\pi_M(\theta)\) over the parameter space $\Theta_M$,  the marginal likelihood of the data under model \(M\) is  
$
P(x_1, \dots, x_N \mid M) = \int P(x_1, \dots, x_N, \mid\theta,  M)\,\pi_M(\theta)\,\mathrm{d}\theta$. Under appropriate regularity assumptions \citep{schwarz1978estimating,konishi2008information}, we can approximate that integral, for large sample size \(n\), as follows:  
\begin{align}\label{intro:usual-BIC}
-\log P(x_1, \dots, x_N \mid M) \;\approx\; - \log P(x_1, \dots, x_N \mid \hat\theta_M, M)+ \frac{d_M}{2} \log N, \tag{BIC}
\end{align} 
where \(\hat\theta_M\) is the maximum-likelihood estimator (MLE) under model \(M\). The right hand side of Eq.~\ref{intro:usual-BIC} is the expression of the celebrated Bayesian Information Criterion (BIC); it penalizes model complexity (larger \(d_M\)) while rewarding goodness of fit. Several other criteria that balance model fit and model complexity are available, such as the Akaike Information Criterion (AIC), the Generalized Information Criterion (GIC), and the Takeuchi Information Criterion (TIC), as well as Bayesian extensions like Akaike's Bayesian Information Criterion (ABIC).

However, regularity conditions are not satisfied for deep learning architectures \citep{wei2022deep,watanabe2009algebraic}.  A false asymptotic estimate of the log marginal likelihood, and more generally of the posterior, given a prior on models, can lead to erroneous decisions in model selection~\citep{drton2017bayesian}, and so corrections to the BIC have been proposed. The widely applicable Bayesian information criterion (WBIC) relies on the correct asymptotics of the log marginal likelihood,
\begin{align*}\label{intro:WBIC}
- \log P(x_1, \dots, x_N \mid M) \;\approx\;- \log P(x_1, \dots, x_N \mid \hat\theta, M) + \lambda \log N, \tag{WBIC}
\end{align*} 
where \(\lambda\) is called the \emph{learning coefficient}. The WBIC is asymptotically consistent in selecting the most parsimonious model \citep{drton2017bayesian}.

\paragraph{From Regular to Singular Models}
To understand when the BIC is sufficient and when WBIC is needed, we formalize the notion of regularity. The asymptotic expansion of the log marginal likelihood requires it to be re-expressed as follows,

\[
P(x_1, \dots, x_N \mid M) = \int e^{N\cdot \frac{1}{N}\sum_{i}\ln P(x_i\mid\theta, M)}\,\pi_M(\theta)\,\mathrm{d}\theta.
\]

When the sample size $N$ tends to infinity, we get 
\begin{align*}
\frac{1}{N}\sum\nolimits_{i=1}^N &\ln P(x_i\mid \theta, M)
\xrightarrow[N\to\infty]{}
\mathbb{E}_{X\sim P}\bigl[\ln P(X\mid \theta, M)\bigr]
\quad P\text{ a.s.},
\end{align*}
where $P$ is the distribution from which we draw the iid samples $x_1,\dots,x_N$ 
and $X\sim P$. Up to a term that does not depend on $\theta$, the previous limit is the negative of the Kullback Leibler divergence,
$$D_{KL}(P\Vert P_\theta)= -\mathbb{E}_{X\sim P}\bigl[\ln P(X\mid \theta, M)\bigr]+ \mathbb{E}_{X\sim P}[\ln P(X)]$$

If there exists $\rho\in\Theta$ such that $P(X)=P(X\mid \rho, M)$, then the function $g\colon \theta\mapsto D_{\mathrm{KL}}(P_{\rho}\,\|\,P_{\theta})$
is minimized at $\theta^*=\rho$ with $g(\theta^*)=0$;
we say that such a model is \textit{realizable}. If the map $\Theta_M\ni \theta \mapsto \ P(X_1,\ldots,X_N \mid \theta, M)$ is injective, then we say that the model is \textit{identifiable}. Moreover, if the Hessian of $g$ at $\theta$ is positive definite at every $\theta\in\Theta_M$, we say that the model itself is \textit{positive definite}. To make it simple, if a model is injective and positive definite, it is said to be \textit{regular}. In the regular case, BIC provides the correct asymptotic expansion of the log marginal likelihood under standard regularity conditions. If a model is not regular, then it is called \textit{singular}. In the singular case, WBIC is the correct expression. In this setting the learning coefficient is the RLCT, as we will explain just after. We now introduce the RLCT and review state-of-the-art methods and assumptions for its computation, which stem from the study of singularities of analytic functions.

\paragraph{Real Log Canonical Threshold}\label{par:RLCT} If the log-likelihood is a real analytic function, the learning coefficient coincides with the \emph{real log canonical threshold} (RLCT) of the Kullback–Leibler divergence at the realizable model, i.e., the true model that generated the samples. The RLCT is a purely algebraic quantity that is well-studied in singularity theory. 

In statistical learning, the RLCT of the Kullback-Leibler divergence of a model-truth-prior triplet is the learning coefficient, and its local version is the local learning coefficient (LLC). The global learning coefficient determines the asymptotic behavior of the log marginal likelihood, while the local learning coefficient characterizes the learning dynamics near specific parameter configurations \citep{lau2025the}. We now give a formal definition. 

\begin{definition}[Real Log Canonical Threshold] \label{rlctalg}
    Let $f:\RR^n \to \RR$ be a real analytic function defined on an open set $O \subset \RR^n$. Let $C$ be a compact subset of $O$. Then, for each $x \in C$ such that $f(x)=0$ there exists, by Theorem \ref{thm:ros},  $W \subseteq \RR^n$, an open set containing $\mathbf{0}$, an $n$-dimensional real analytic manifold  $U$, and a real analytic map $\rho : U \to W$, such that:

    $$f(\rho(u))-x)= \pm u_1^{k_1} \cdots u_n^{k_n},
    \qquad \text{ and } \qquad
    \text{Jac}(\rho) = b(u) u_1^{h_1}\cdots u_n^{h_n}.$$

    The local real log canonical threshold at $x$ is given by:
        $$\rlct_{x}(f) = \min_{1 \leq i \leq n} \frac{h_i +1}{k_i}.$$

The global RLCT is the minimum local RLCT over all points of $C$, that is to say, $$\text{RLCT}(f) \coloneq \min_{x\in C} \text{RLCT}_x(f).$$
\end{definition}

Note that the RLCT is a rational number. The RLCT is defined using a real analytic birational map, called the resolution of singularities, which, roughly speaking, represents $f$ locally as a normal crossing function. The existence of such a map for real analytic functions was famously proved by \cite{hironaka1964resolution}. We recall this theorem in Appendix~\ref{app:resolution}. Although the literature on RLCTs remains sparse, as opposed to that of its complex counterpart, the log canonical threshold (LCT) \citep{mustata2012impanga}, its algebraic properties have been studied in \cite[Chapter 4]{lin2011algebraic} and \cite{saito2007real}, and were used to classify real hyperplane singularities in \cite{kosta2024classification}. While Hironaka's theorem guarantees the existence of a resolution of singularities, computing this resolution explicitly is notoriously difficult and generally impractical \citep{bierstone2011effective}. Our work is motivated by this computational bottleneck.

\subsection{Prior Work}

The computation of learning coefficients (RLCTs of the Kullback-Leibler divergence for a given sample, model, and generating distribution) remains an active area of research. We review three main categories of work related to this problem: model-specific theoretical results, sampling-based estimation, and algebraic methods for computing RLCTs, with a particular focus on the two-dimensional case where models are parameterized by two real numbers $\theta \in \mathbb{R}^2$. 

Learning coefficients have been computed exactly for a range of statistical models through dedicated theoretical analysis. These include mixture models \citep{yamazaki2003singularities}, three-layered neural perceptrons \citep{aoyagi2005resolution}, restricted Boltzmann machines \citep{aoyagi2013learning}, as well as Bayesian networks \citep{rusakov2005asymptotic}. More recently, learning coefficients for deep linear networks of arbitrary depth \citep{aoyagi2024consideration} and factor analysis models \citep{drton2025singular} have also been calculated.  \cite{lin2011algebraic} used a Newton polygon based method to compute local learning coefficients, when the Newton polygon satisfies strong geometric constraints. Our work removes these restrictions for the two-dimensional case, providing a general algorithm for arbitrary 2D polynomials.

When exact theoretical results are unavailable, the local learning coefficient can be estimated via sampling methods. \cite{lau2025the} introduced a scalable estimator $\hat{\lambda}^{\text{SGLD}}(\theta^*)$ for the LLC, based on stochastic gradient Langevin dynamics (SGLD) \citep{welling2011bayesian} sampling of the posterior distribution and on Watanabe's widely applicable Bayesian Information Criterion (WBIC) \citep{watanabe2013widely}. While SGLD-based estimation has shown empirical success, it provides no theoretical guarantees for sampler convergence \citep{hitchcock2025global}. In the absence of known values of the learning coefficient, measuring the convergence of SGLD chains can be difficult and calibrating their hyperparameters can be costly \citep{vehtari2021rank}.

For analytic functions in two variables, a line of work beginning with \cite{varchenko1976newton} and continuing through \cite{phong1999growth} and \cite{collins2018log} established that the local RLCT of a polynomial $f(x,y)$ can be computed using the geometry of the Newton polygons of certain transformations $\tilde{f}(x,y)$ of $f(x,y)$ (see Appendix~\ref{app:newton_polygon}). \cite[Remark 3.11]{paemurru2024reading} proposes an algorithmic method, based on \cite{boehm2020classification}, but it does not terminate for certain polynomial classes (see Section) and lacks complexity bounds where it does. We resolve these issues by adapting the proofs of \cite{phong1999growth,collins2018log} to show that finitely many transformations of $f(x,y)$ suffice to compute the local RLCT of any bivariate polynomial. Our Algorithm~\ref{alg:RLCT_main} computes the exact local RLCT and terminates for any $f(x,y)$, with an explicit upper bound on the number of steps.

\section{Theoretical results}\label{sec:main_results}

In this section, we present our main theoretical results. 
We present an algorithm (Algorithm~\ref{alg:RLCT_main}) to compute the local RLCT of a bivariate polynomial at the origin and we also give a quadratic bound (Theorem~\ref{thm:complexity_main}) on its arithmetic complexity, 
depending on its degree.
We sketch the proofs of the various results that support the correctness and the complexity bound of the algorithm. The full proofs appear in the Appendix~\ref{sec:proof_of_main}. 
The algorithm combines geometric and algebraic techniques and is based
on properties of the Newton polygon of a bivariate polynomial (Appendix~\ref{app:newton_polygon}).

\subsection{Preliminaries: Newton Polygons and RLCTs}\label{subsec:newton_polygons_rlcts}

Computing the resolution of singularities of a given analytic function or polynomial is notoriously expensive \citep{bierstone2011effective}. Thus it is desirable to find alternative ways to compute the RLCT. In \cite{varchenko1976newton}, Varchenko introduced a method to compute the local RLCT of an analytic function $f:\RR^2\to\RR$ at an isolated singularity, based on the geometry of its Newton polygon $\newton(f)$. This method was later generalized to arbitrary singularities \cite[Theorem 5]{phong1999growth}. A modern treatment is given in \cite{collins2018log}. The core idea of the above results is to construct an RLCT-preserving automorphism $\Phi:\KK\{x,y\}\to\KK\{x,y\}$, such that the RLCT of an analytic function $f(x,y)$ can be read on the Newton polygon $\newton(\Phi(f))$. The main theoretical contribution of our work is to make Varchenko's method effective. Indeed, in some cases, Varchenko's method may not terminate. 

We circumvent this issue by identifying precisely the cases where Varchenko's method fails to terminate, and handle them separately. We first define the Newton polygon of a function $f$ and its corresponding Newton distance $\delta_f$. 

\begin{definition}[Newton Polygon \citep{casas2000singularities}]
Let $\KK$ be a field of characteristic zero. Let $f\in \KK[[x,y]]$ be a formal power series in two variables:
\[f(x,y)= \sum\nolimits_{\alpha,\beta=0}^\infty c_{\alpha\beta} x^\alpha y^\beta \in \KK[x,y].\]
Let $D(f)=\{(\alpha,\beta)\mid c_{\alpha\beta}\neq0\} \subseteq \NN^2$ be the support of $f$. The Newton polygon, $\newton(f)$, is obtained by attaching a copy of the positive quadrant $\RR_{\geq 0}^2$ to each point of $D(f)$ and considering the convex hull of the union; that is  
	$\newton(f) \coloneqq \mathop{Conv}( D(f)\oplus \RR_{\geq0}^2)$,
	where $\oplus$ denotes the Minkowski sum.
\end{definition}

The Newton distance is a positive rational number associated to the Newton polygon $\newton(f)$ of $f(x,y)$, which measures how far the Newton polygon is from the origin.

\begin{definition}[Main face and Newton distance]
    The face $\Delta$ at which the diagonal $\mathcal{D}\{(\alpha,\beta) \in \RR^2_{\geq 0}\mid \alpha=\beta \}$ intersects the boundary $\partial \newton(f)$ of the Newton polygon is called the \textit{main face} of $f(x,y)$.
    Let $\bm{p} = (\delta,\delta) \in \QQ_{\geq0}^2$ be the point of intersection of $\mathcal{D}$ with $\partial\newton(f)$. Then, $\delta_f \coloneq \delta \in \QQ_{\geq0}$ is the \textit{Newton distance} of $f$.
\end{definition}

 We remind the reader of the notion of right equivalence of power series.

\begin{definition}[Right equivalence of power series]
Let $f,g \in \KK[[x,y]]$ be formal power series. We say that $f$ and $g$ are right-equivalent if there exists an isomorphism $\Phi : \KK[[x,y]] \to \KK[[x,y]]$, such that $f \circ \Phi = g$. 
\end{definition}

Next, we state the normalization condition on the Newton polygon $\newton(f)$ of $f(x,y)$. If a polynomial or analytic function $f(x,y)$ satisfies the normalization condition, then $\rlct_0(f)=\frac{1}{\delta_f}$.

\begin{definition}[Normalization condition \cite{phong1999growth,collins2018log}] \label{def:normalization_condition}
    Let $f(x,y) \in \RR\{x,y\}$, and let $\newton(f)$ be its Newton polygon. Let $\Delta_i$ for $i\in[K]$ be the facets of $\newton(f)$. 
    Let $F_{\Delta_i}(z)$ be its facet polynomials and let $a_{ij}$ and $m_{ij}$ for $j\in[r_i]$ be its distinct roots, and their corresponding multiplicities, respectively.
    We say that $\Delta_i$ is normalized if $\delta_f \geq m_{ij}$ for all $j\in[r_i]$.
    We say that $f(x,y)$ is normalized if the facet $\Delta_i$ of its Newton polygon $\newton(f)$ is normalized for all $i\in[K]$.
\end{definition}

\begin{proposition}[\cite{collins2018log}] \label{prop:normalized}
    Let $f\in\KK\{x,y\}$. If $f(x,y)$ is right equivalent to a normalized power series $g(x,y)$, then $\rlct_0(f)=\frac{1}{\delta_g}$.
\end{proposition}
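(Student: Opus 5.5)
The proof has two steps. First, the RLCT is invariant under right equivalence. Second, for a normalized power series the RLCT equals the reciprocal of the Newton distance.

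\textbf{Step 1: invariance.} Suppose $f\circ\Phi=g$, where $\Phi$ is an analytic (or formal, then approximated by an analytic) local isomorphism fixing $0$. Take any resolution $\rho$ of $g$ as in Definition~\ref{rlctalg}. Then $\Phi\circ\rho$ is a resolution of $f$. The exponents $k_i$ are unchanged because $f(\Phi(\rho(u)))=g(\rho(u))$. The Jacobian factors as $\det D\Phi(\rho(u))\cdot \mathrm{Jac}(\rho)$, and the first factor is a unit near $0$, so the $h_i$ are unchanged. Hence $\rlct_0(f)=\rlct_0(g)$, and it suffices to show $\rlct_0(g)=1/\delta_g$ for normalized $g$.

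\textbf{Step 2: characterize the RLCT by an integral.} I would use the standard fact, a consequence of Definition~\ref{rlctalg} via the change of variables $\rho$, that $\rlct_0(g)=\sup\{c : \int_W |g|^{-c}\,dx\,dy<\infty\}$ for small neighbourhoods $W$ of $0$.

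\textbf{Step 3: upper bound, $\rlct_0(g)\le 1/\delta_g$.} This holds for any $g$, normalized or not. Let $\Delta$ be the main face, with supporting line $\{a\alpha+b\beta=N\}$ and $a,b>0$; the case where the main face is a vertex is handled by picking either adjacent edge. On the curvilinear boxes $\{|x|\sim 2^{-aj},\ |y|\sim 2^{-bj}\}$ we have $|g|\lesssim 2^{-Nj}$, and each box has measure $\sim 2^{-(a+b)j}$. Summing over $j$, the integral diverges unless $c\,N<a+b$. Since $(\delta,\delta)\in\Delta$ gives $N=(a+b)\delta$, this forces $c<1/\delta$.

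\textbf{Step 4: lower bound, $\rlct_0(g)\ge 1/\delta_g$.} This is where normalization is needed, and I expect it to be the main obstacle.

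1. Decompose a neighbourhood of $0$ (say in the quadrant $x,y>0$; the other quadrants are symmetric) into regions. There is one region for each vertex of $\newton(g)$, where the corresponding monomial dominates and $|g|\gtrsim |x^\alpha y^\beta|$. There is one thin region for each facet $\Delta_i$ with slope $-a_i/b_i$, namely $\{y\sim x^{a_i/b_i}\}$.

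2. On a vertex region, the integral of $|x^\alpha y^\beta|^{-c}$ over the wedge converges precisely when $c<1/\delta$. This follows by comparing the vertex with the diagonal point, since the diagonal lies in the polygon.

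3. On a facet region, write $y=z\,x^{a_i/b_i}$. Then $g\approx x^{N_i/b_i}F_{\Delta_i}(z)$ up to higher-order terms. Away from the roots $a_{ij}$ of $F_{\Delta_i}$ this behaves like the vertex case. Near a root of multiplicity $m_{ij}$, the integral in $z$ of $|z-a_{ij}|^{-c\,m_{ij}}$ converges when $c\,m_{ij}<1$. Normalization, $\delta\ge m_{ij}$, guarantees this for every $c<1/\delta$.

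4. The genuine difficulty is that near a root the higher-order terms may dominate, so $F_{\Delta_i}$ alone does not control $g$. I would handle this by a van der Corput / sublevel-set estimate in the style of \cite{phong1999growth}. Since $\partial_z^{m_{ij}}F_{\Delta_i}(a_{ij})\neq 0$, a derivative of order $m_{ij}$ of $g$ in the $z$-direction is bounded below by $x^{N_i/b_i}$ on that region. The sublevel estimate then gives $|\{z: |g|<t\}|\lesssim (t\,x^{-N_i/b_i})^{1/m_{ij}}$, uniformly in $x$, because this is exactly the estimate that ignores lower-order perturbations.

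5. Integrating the distribution function over $x$ and using $N_i/b_i$ together with the area factor $x^{a_i/b_i}$ shows convergence whenever $c<\min(1/\delta,1/m_{ij})=1/\delta$.

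Combining Steps 3 and 4 gives $\rlct_0(g)=1/\delta_g$, and with Step 1, $\rlct_0(f)=1/\delta_g$.
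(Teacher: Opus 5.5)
The paper gives no proof of this proposition; it quotes it from \cite{collins2018log}, which builds on \cite{varchenko1976newton,phong1999growth}. Your proposal is a correct, self-contained version of the standard analytic argument in that line of work. It has three parts:
\begin{itemize}
\item the integrability exponent of Definition~\ref{rlctan} is invariant under analytic changes of coordinates;
\item the upper bound $\rlct_0(g)\le 1/\delta_g$ comes from dyadic boxes adapted to the main face;
\item the lower bound comes from splitting a neighbourhood into monomial-dominated wedges and thin regions along each edge.
\end{itemize}
In the thin regions, a van der Corput sublevel estimate $|\{z\in I:|\phi(z)|<s\}|\lesssim s^{1/m}$ at a root of multiplicity $m$ absorbs the higher-order terms uniformly in $x$, since those terms are $o(1)$ in $C^m$ as $x\to 0$. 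This is exactly where $m_{ij}\le\delta_g$ is used. Compared with quoting the reference, or with computing through a resolution as in Definition~\ref{rlctalg}, your route never builds a resolution. It also shows that the upper bound holds in every coordinate system, so normalization is needed only for the lower bound.

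Several details need tightening, though none is structural.
\begin{itemize}
\item In Step~4 the exponent is inverted relative to Step~3. An edge on $a_i\alpha+b_i\beta=N_i$ (slope $-a_i/b_i$) corresponds to $y\sim x^{b_i/a_i}$. Your formulas $g\approx x^{N_i/b_i}F(z)$ and the area factor $x^{a_i/b_i}$ are those of the edge $b_i\alpha+a_i\beta=N_i$.
\item When $q_i>1$, the polynomial in $z=y\,x^{-p_i/q_i}$ is $z^{\beta_{i+1}}F_{\Delta_i}(z^{q_i})$. Its nonzero roots have the same multiplicities as the roots of $F_{\Delta_i}$, so the normalization hypothesis does transfer, but this should be said.
\item In Step~3, ``either adjacent edge'' may be a half-line, which gives $a=0$ or $b=0$ and breaks the box argument. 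Take instead any supporting line through $(\delta_g,\delta_g)$ with strictly positive normal. If the main face is itself a half-line $\{\beta=m\}$, simply use $|g|\lesssim|y|^m$.
\item In 4.2, ``precisely when'' should be ``whenever''. A wedge integral converges for $c<\min(1/\delta_i,1/\delta_{i+1})$, where $\delta_i\le\delta_g$ (by convexity) are the diagonal intercepts of the lines through the adjacent edges. The same edge intercept, not $\delta_g$ itself, is what the $x$-integration in 4.5 produces.
\item The parenthetical ``formal, then approximated by an analytic'' is not automatic. Artin approximation gives an analytic $\bar\Phi$ with $f\circ\bar\Phi=g$ only when $g$ is itself convergent. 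Merely truncating $\Phi$ can destroy normalization: $y^m$ is normalized but $(y-x^K)^m$ is not, which is exactly the phenomenon this paper studies.
\end{itemize}
For the last point, state Step~1 for real-analytic $\Phi$. That is all the paper needs, because its changes of variables are $y\mapsto y-P_f(x)$ with $P_f$ either a polynomial or a convergent $y$-root with rational coefficients (Lemma~\ref{lem:rational}).
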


The following proposition plays an important role in our work. The version that we present 
appears in \cite[Theorem~5]{phong1999growth} and \cite[Theorem~1.4]{collins2018log};
a first version for real isolated plane curve singularities appears in \cite[Theorem~0.6]{varchenko1976newton}.
It guarantees that for any bivariate analytic function $f$, one can construct a right equivalence $\Phi$, such that $f$ is normalized.
For completeness, we present a self-contained proof in the Appendix~\ref{subsec:proof_prop:main},
where we also show that $P_f(x)$ or $Q_f(y)$ are truncations of Puiseux roots of $f(x,y)$.

\begin{proposition}[\cite{phong1999growth,collins2018log}] \label{prop:main}
    Any analytic function $f \in \KK\{x,y\}$ is right equivalent to a normalized power series. Moreover, this right equivalence is given by a change of variable $\Phi: \KK[x,y] \to \KK[x,y], (x,y) \mapsto (x,y-P_f(x))$ or $\Phi: \KK[x,y] \to \KK[x,y], (x,y) \mapsto (x-Q_f(y),y)$, where $P_f(x)$ and $Q_f(y)$ are power series of strictly positive order in $\bar{\KK}[[x]]$ and $\bar{\KK}[[y]]$ respectively.
\end{proposition}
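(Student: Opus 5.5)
We plan to follow the Varchenko–Phong–Stein–Sturm strategy: repeatedly modify $f$ by a shift $y \mapsto y - c x^{q}$ with $q$ rational, or the symmetric shift in $x$, until every facet is normalized. All shifts are then collected into a single series $P_f(x)$ or $Q_f(y)$. First we dispose of trivial cases. If $f$ is a unit, or is already normalized, take $P_f=0$. Otherwise some facet $\Delta$ has a root $a$ of its facet polynomial $F_\Delta$ with multiplicity $m>\delta_f$.

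We then show that such a bad root can occur only on one specific facet. Let $\Delta$ have slope $-1/q$, so it is the set where $\alpha+q\beta$ is constant, and assume first that $q\ge 1$, so that $\Delta$ lies above the diagonal side. The facet polynomial $F_\Delta(z)$ has degree equal to the vertical extent of $\Delta$, that is $\beta_{\max}-\beta_{\min}$. A root of multiplicity $m>\delta_f$ forces $\beta_{\max}\ge m>\delta_f$. A convexity argument then shows that $\Delta$ must be the main face, or the unique facet meeting the diagonal-crossing region, and that at most one root can be bad, since $2m>\deg F_\Delta$ would be impossible otherwise. The symmetric case $q\le 1$ is handled with the roles of $x$ and $y$ exchanged. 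This fixes which coordinate we shift. The choice is then consistent at all later steps, because after a $y$-shift the new bad facet always has slope of the same type.

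Next comes the shift itself. Substitute $y \mapsto y + a x^{q}$. After passing to $x=t^{N}$, which is the standard device for reading the result on Newton polygons over $\bar{\KK}$, the vertices of $\newton(f)$ with $\beta\le\beta_{\min}$ are unchanged. The facet $\Delta$ is replaced by new facets whose slopes are strictly steeper, and the new Newton distance $\delta'$ satisfies $\delta'\ge\delta_f$. If the new polygon is normalized we stop. Otherwise we iterate, producing exponents $q_1<q_2<\cdots$ and coefficients $a_1,a_2,\dots$.

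The main obstacle is termination, or convergence. The bad multiplicity $m_k$ is non-increasing and bounded below by $\delta$, so it eventually stabilizes at some $m$. If the process never stops, the series $\sum_k a_k x^{q_k}$ is a Puiseux root $y=\phi(x)$ of $f$ of multiplicity $m$ with $m>\delta$. The denominators of the $q_k$ are bounded, since they divide the denominators of a Puiseux expansion by Newton–Puiseux. Hence the $q_k$ tend to infinity, and $P_f(x)=\sum_k a_k x^{q_k}$ is a well-defined series of positive order. After the change of variable, $f$ becomes $(y-\cdot)^m u$ locally. Its Newton polygon then has a horizontal edge at height $m$, so $\delta=m$ and the normalization condition holds in the limit.

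Two points remain. First, $P_f$ must have integer exponents for $\Phi$ to be an automorphism of $\KK[[x]]$. We argue as follows: a root of multiplicity $m>\delta\ge \deg F_\Delta/2$ is unique, so it is fixed by the Galois action of conjugating $x^{1/N}$. This forces $q_k\in\ZZ$ and $a_k\in\KK$, or at worst $a_k\in\bar{\KK}$. Second, one must check that truncating $P_f$ once $q_k>$ the relevant bound already normalizes $f$. This gives the finite version used later and identifies $P_f$ as a truncation of a Puiseux root, as the statement claims.
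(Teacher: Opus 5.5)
Your skeleton (iterated monomial shifts, the bad multiplicity stabilizing, an $m$-fold Puiseux root in the non-terminating case) is an iterative version of the paper's one-shot choice of $P_f$ as the maximal truncation shared by more than $\delta_f$ roots. However, the step that carries the whole proof is only asserted, and the substitute you spell out does not work as written.

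Write the slope of the main face $\Delta$ as $-q/p$ with $p,q$ coprime, and let $h$ be its height. The paper's key estimate is that the upper vertex of $\Delta$ has height at least $h$. Hence $\delta_f\ge\frac{p}{p+q}h\ge\frac{pq}{p+q}m$, using $m\le\deg F_\Delta=h/q$. So $m>\delta_f$ forces $pq<p+q$, i.e.\ $p=1$ or $q=1$, which is exactly what makes the shift exponent an integer. Non-main facets are harmless because their width or height is at most $\delta_f$.

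You instead invoke ``a convexity argument'' and a Galois argument built on the claim that $\deg F_\Delta$ equals the vertical extent of $\Delta$. In $x$-coordinates this is false for $q\ge 2$: then $\deg F_\Delta=h/q$. A root $a$ of $F_\Delta$ of multiplicity $m$ corresponds to $q$ distinct leading terms $c\,x^{p/q}$ with $c^q=a$, each shared by $m$ Puiseux roots. The action $x^{1/N}\mapsto\zeta x^{1/N}$ fixes $a$ and permutes these $q$ terms. So uniqueness of the bad root of $F_\Delta$, which is all that $m>\deg F_\Delta/2$ gives, says nothing about integrality. (In $t=x^{1/N}$ coordinates the facet polynomial does have degree $h$, but $\delta_f$ and the normalization condition live in $x$-coordinates.) Your Galois route does work once you know $m>h/2$. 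That is precisely the estimate $\delta_f\ge\frac{p}{p+q}h\ge h/2$ for a flat face, so you cannot avoid proving it.

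Second, your description of the shift is upside down. Under $y\mapsto y+ax^{p}$ (bad main face of slope $-1/p$), it is the part of $\partial\newton(f)$ at heights $\ge m$ that is preserved; this is the paper's $\partial\newton(f)\cap\partial\newton(\tilde f)=\{\beta\ge e\}$. The part of $\Delta$ below height $m$, and everything to its right, is replaced by strictly \emph{flatter} facets, possibly ending in a horizontal ray. These come from the $m$ roots whose order now exceeds $p$. For instance, for $f=(y-x)^3+x^{10}$ the vertex $(3,0)$, which has $\beta\le\beta_{\min}$, disappears, and the new facet $(0,3)$--$(10,0)$ has slope $-3/10$. This correct picture is what justifies $\delta'\ge\delta_f$, the increase $q_1<q_2<\cdots$, and the claim that the next bad face is again flat.

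It is also what your limit step needs. After shifting by the $m$-fold root $\phi$, the cofactor of $y^m$ is in general not a unit, since other branches of $f$ may pass through the origin. A horizontal edge at height $m$ gives $\delta=m$ only because its left endpoint is the preserved boundary point at height $m$. That point lies strictly left of the diagonal because $\delta_{f^{(k)}}<m$ for all $k$.

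Finally, your closing claim that a long enough truncation of $P_f$ already normalizes $f$ is false when $P_f$ is not a polynomial: no finite truncation does (compare Lemma~\ref{lem:non_terminating}). It is not needed for this proposition and should be dropped.
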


Proposition~\ref{prop:main} suggests that the local RLCT of $f(x,y)$ is computable up to arbitrary precision, but it is not effective, as $P_f(x)$ can have infinitely many terms. Our work resolves this issue.

\subsection{Main Results} \label{subsec:main_results}
In this work, we show that it is sufficient to compute $P_f(x)$ up to a \textit{finite} degree in order to compute the local RLCT of a polynomial $f(x,y)$ at $0$. We obtain a computable bound on this degree, and thus on the maximum number of iterations necessary for computing the local RLCT. We then provide an exact, effective algorithm, which takes as input any polynomial $f(x,y)\in \QQ[x,y]$ and outputs $\rlct_0(f) \in \QQ$. Without loss of generality, we henceforth focus only on the situation where $P_f(x)$ gives the required right equivalence. Otherwise, one can consider the change of variable $x\mapsto y$, $y \mapsto x$. We show that if $P_f(x) \in \KK[[x]]$ is such that $f(x,y-P_f(x))$ is normalized, then $P_f(x)$ is unique. Furthermore, if the coefficients of $f(x,y)$ are rational, then so are the coefficients of $P_f(x)$.
The proof of the following lemma in Appendix~\ref{subsec:proof_lem:unique}.

\begin{proposition} \label{prop:unique}
    For any unnormalized analytic function $f(x,y)\in\KK\{x,y\}$, the power series $P_f(x)$ computed in the proof of Prop.~\ref{prop:main} is unique. Moreover, if $\KK=\QQ$, then $P_f(x)\in\QQ[[x]]$.
\end{proposition}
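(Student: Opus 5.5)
We need to reconstruct how $P_f$ is built in the proof of Prop.~\ref{prop:main}. This is the Varchenko/Phong–Stein iteration. Start from the unnormalized $f$. Some facet $\Delta$ of $\newton(f)$ has a facet polynomial with a root $a$ of multiplicity $m>\delta_f$. Since $\sum_j m_{ij}$ is at most the height of the facet, at most one root per facet can have multiplicity exceeding $\delta_f$. The same counting argument shows that only one facet, namely the main face $\Delta$ or the edge whose slope governs the diagonal, can carry such a root.

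The step is then to substitute $y\mapsto y+a x^{q}$, where $q$ is the inverse slope of the facet. This produces a new Newton polygon, and we iterate. In the limit, $P_f(x)=\sum_k a_k x^{q_k}$, with $q_k$ strictly increasing, and $P_f$ is a truncation of a Puiseux root of $f$.

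I would prove uniqueness by showing that each choice in this iteration is forced. At every stage $k$, with current function $f_k=f(x,y+\sum_{l<k}a_l x^{q_l})$, I will argue three things:
\begin{enumerate}
\item If $f_k$ is not normalized, then exactly one facet and exactly one root violate the condition. The reason is that $\sum_j m_{ij}\le$ the height of $\Delta_i$. Also, a root with $m>\delta_f$ forces that facet to cross the horizontal line $\beta=\delta_f$ in a specific way, which excludes the other facets.
\item The exponent $q_k$ equals the reciprocal slope of that facet, and $a_k$ equals the unique bad root.
\item The iteration stops exactly when normalization holds.
\end{enumerate}
Together these show that the sequence $(q_k,a_k)$ is uniquely determined by $f$, so $P_f$ is unique.

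For the rationality claim, the bad root $a_k$ is the unique root of multiplicity $m>\delta\ge$ half the degree of the facet polynomial $F_\Delta(z)$ (up to a power of $z$). Such a root is fixed by the Galois group of $\bar\QQ/\QQ$, because conjugate roots have equal multiplicity. Since it is unique, it lies in $\QQ$; concretely, it can be extracted as a root of $\gcd(F_\Delta,F_\Delta^{(m-1)})$, which has rational coefficients. It remains to check that the exponents $q_k$ are integers, so that $P_f\in\QQ[[x]]$ rather than a Puiseux series. I would get this from the fact that a root of multiplicity $m>\delta$ on a facet with lattice slope $p/q$ forces the facet polynomial to be a polynomial in $z^{q}$ if $q>1$. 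The multiplicity would then be shared by the $q$ roots $\zeta a$, contradicting uniqueness unless $q=1$. Induction preserves $\QQ$-coefficients of $f_k$.

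The main obstacle I expect is the first point: showing that both the facet and the root are unique. This is needed to rule out the case where two distinct modifications, on different facets or via different roots, both lead to normalized series. My approach would be to use the geometric inequality that the main face $\Delta$ passes through $(\delta,\delta)$. A root of multiplicity $m>\delta$ means the facet has vertical extent greater than $\delta$, and for the corresponding steep facet this pins it down as the one intersecting the diagonal, or the unique facet below it. I would also check that after each substitution the bad facet lies strictly further along, so the choice at later stages cannot revert to an earlier alternative.
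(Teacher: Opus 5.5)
Your skeleton is the paper's: a bad root on the main face has multiplicity above $\frac12\deg F_\Delta$ (Lemma~\ref{lem:maxmult}), so each step is forced; conjugate roots share multiplicities (Lemma~\ref{lem:rational}); induction does the rest. But your argument for integer exponents fails, because it mixes two incompatible versions of the facet polynomial. With $y=zx^{p/q}$ you get $z^{\beta_2}F_\Delta(z^q)$. Its nonzero roots do come in $q$-orbits of equal multiplicity, but its degree is $ql$, and $\delta_f\ge ql/2$ fails for $q\ge2$, so uniqueness of the bad root is not available. With the paper's $F_\Delta$ (variable $y^q/x^p$, degree $l$) the bad root is unique, but there is no $\zeta$-symmetry.

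The conclusion you want is in fact false without the paper's convention. For $f=(y^2-x)^3$ the only compact face joins $(0,6)$ to $(3,0)$, $\delta_f=2$, the weight is $(p,q)=(1,2)$, and $F_\Delta=(z-1)^3$ is bad. In your normalization, $(z^2-1)^3$ has two roots of multiplicity $3>\delta_f$, so there is no contradiction and $q=2$. What is true is that a bad root forces $\min(p,q)=1$, from $m\le l$ and $(p+q)\delta_f\ge pq\,l$. The case $p=1$ is the $x\mapsto x-Q_f(y)$ alternative of Proposition~\ref{prop:main}, which the paper removes by swapping $x$ and $y$. So integrality of the exponents is part of the set-up, not a consequence of uniqueness.

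Your justification of the counting claims is also insufficient. The bound $\sum_j m_{ij}\le h_i$ handles facets to the right of the main face, whose heights are at most $\delta_f$. A facet to the left lies on or above $\beta=\delta_f$ and can be much taller than $2\delta_f$, so your claim that a bad root forces its facet to cross $\beta=\delta_f$ is unproved. What rules those facets out is their width: $m_{ij}\le l_i\le w_i\le\delta_f$.

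On the main face, uniqueness of the bad root needs $\delta_f\ge\frac12\deg F_\Delta$, which you assert but never derive. It comes from the diagonal point lying on the face. If $(\alpha_2,\beta_2)$ is its lower-right vertex, then $(p+q)\delta_f=q\alpha_2+p\beta_2\ge q\cdot pl$. Hence $\delta_f\ge\frac{pq}{p+q}\,l\ge\frac l2$ once $\min(p,q)=1$ (Equation~\eqref{eq:newt_dist_lower_bound}). With these repairs, your forward ``each step is forced'' argument is equivalent to the paper's contradiction at the first truncation where two candidate series diverge.
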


We can thus speak of the normalizing power series of a bivariate polynomial. 

\begin{definition}[Normalizing power series]
    Let $f(x,y)$ be an analytic function. 
    We call a power series $P_f(x)$ the \textit{normalizing power series} of $f(x,y)$
    if $\tilde{f}(x,y) = f(x,y-P_f(x))$ is normalized.
\end{definition}

\begin{remark}\label{rem:two_cases}
    Recall that a $y$-root of $f(x,y)$ is an element $\phi(x)$ of the algebraic closure of $\KK[x]$, such that $f(x,\phi(x))=0$ (see Appendix~\ref{app:yroots}). 
    In the course of the proof of Proposition~\ref{prop:main}, it is shown that $P_f(x)$ corresponds either:
    \begin{enumerate}[label=(\arabic*)]
        \item to a finite truncation of some  Puiseux root of strictly positive order of $f(x,y)$, or,
        \item to a power series of strictly positive order $\phi(x) \in \bar\KK[[x]]\setminus\KK[x]$ which is a $y$-root of $f(x,y)$.
    \end{enumerate}
     Using the Newton-Puiseux algorithm, $P_f(x)$ can thus be computed entirely, in the first case, or up to an arbitrary number of terms in the second case.
\end{remark}

\begin{comment}

As we can see, the only obstacle to an effective algorithm is Case $2$ of Remark~\ref{rem:two_cases}, that is when the normalizing power series $P_f(x)$ of $f(x,y)$ has infinitely many terms. As a simple example, consider $f(x,y)=y^2+y+x \in \QQ[x]$. It is known (for example, by \cite[Theorem 1]{banderier2013coefficients}), that $f(x,y) = y^2 + y + x$ has a unique root of strictly positive order $\phi(x) \in \QQ[[x]]\setminus\QQ[x]$. As a root of $f(x,y)$, $\phi(x)$ has multiplicity $1$, and therefore $m^{(i)}=1$ for all $i \in \NN$. However, it can be shown that $\delta_{f^{(i)}} < m^{(i)}=1$, where $f^{(i)}$ is a transformation of $f$, as defined in the sketch of proof of Proposition~\ref{prop:main}, and so $f^{(i)}$ would only be normalized after an infinite amount of steps.

\end{comment}

We first define the finite part $P^{fin}_f(x)$ of $P_f(x)$, where we use the notion of singular parts of Puiseux series (Definition~\ref{def:singpart}). For the remainder of the section, we let $\KK =\QQ$.

\begin{definition}[Finite part of a normalizing power series]\label{def:finitepart}
    Let $f(x,y)\in\QQ[x,y]$, with normalizing power series $P_f(x)$. If there exists a $y$-root $\phi(x)$ of $f(x,y)$, such that $P_f(x)=\phi(x)$, then we define the finite part $\finitepart{f}{x}$ of $P_f(x)$ to be the singular part (Definition~\ref{def:singpart}) of $P_f(x)$; that is  
    	$\finitepart{f}{x}\coloneqq S_{P_f}(x)$.
    Otherwise, we let $\finitepart{f}{x}\coloneqq P_f(x)$.
\end{definition}

The following result justifies the algorithm we propose in the next section. It establishes that a finite part of $P_f(x)$, which distinguishes it from all the other Puiseux roots $\phi(x)$ of strictly positive order, is sufficient to compute the local RLCT of a polynomial $f(x,y)$.

\begin{theorem}\label{thm:finite_part_main}
    Let $f(x,y)\in\QQ[x,y]$, with normalizing power series $P_f(x) \in \QQ[[x]]$. Let $\finitepart{f}{x}$ be its finite part. Let $\tilde{f}(x,y)= f(x,y-\finitepart{f}{x})$. Let $\Delta$ be the main face of $\newton(f)$. If $\tilde{f}$ is normalized, then $\rlct_0(f)=\frac{1}{\delta_{\tilde{f}}}$. If not, $\rlct_0(f)=\frac{1}{m}$, where $m>\delta_f$ is the largest multiplicity of the roots of $\tilde{F}_\Delta(z)$.
\end{theorem}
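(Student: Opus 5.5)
The plan is to split according to the two cases of Remark~\ref{rem:two_cases}. In Case (1), $P_f(x)$ is a finite truncation of a Puiseux root, and by Definition~\ref{def:finitepart} we have $\finitepart{f}{x}=P_f(x)$. Then $\tilde f$ is exactly the normalized series produced by Proposition~\ref{prop:main}. The translation $(x,y)\mapsto(x,y-P_f(x))$ is an analytic automorphism, so $f$ is right equivalent to the normalized $\tilde f$, and Proposition~\ref{prop:normalized} gives $\rlct_0(f)=1/\delta_{\tilde f}$. So the first assertion holds whenever $\tilde f$ is normalized. The same argument covers Case (2) if, by chance, the truncation $\tilde f$ is already normalized, since $\tilde f$ is still right equivalent to $f$.

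The remaining situation is Case (2) with $\tilde f$ unnormalized. Here $P_f=\phi$ is a genuine power-series $y$-root, and $\finitepart{f}{x}=S_\phi$ is its singular part. By Definition~\ref{def:singpart}, I expect $S_\phi$ to be the shortest truncation of $\phi$ that separates it from every other Puiseux root of positive order. Beyond $S_\phi$, the tail of $\phi$ then lives on a single branch of multiplicity $m$.

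Writing $\tilde f(x,y)=f(x,y-S_\phi(x))$, the roots of $\tilde f$ are the $\psi-S_\phi$ for the Puiseux roots $\psi$ of $f$. The key step is to show that exactly $m$ of these roots, namely the copies of $\phi-S_\phi$, have order strictly larger than the slope of the main face $\Delta$. All the other roots have order at most that slope. On the main face, this means the facet polynomial $\tilde F_\Delta(z)$ has a root of multiplicity $m$ (I expect at $z=0$ in suitable coordinates, or at the leading coefficient of the tail), and non-normalization forces $m>\delta_{\tilde f}$.

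Next I would continue Varchenko's iteration (the proof of Proposition~\ref{prop:main}) along the tail $\phi-S_\phi$. At each step only the branch of multiplicity $m$ is refined, because no other root shares any further terms. The facets created after this step therefore come from a factor of $\tilde f$ of the form $u\cdot(y-\text{tail})^m$ times factors unaffected near the diagonal. In the limit, the normalized series $f(x,y-\phi(x))$ has $y^m$ dividing it, up to a unit, in the relevant region. Its Newton polygon then has a vertical edge at height $m$ crossing the diagonal, so its Newton distance is $m$. Proposition~\ref{prop:normalized} then gives $\rlct_0(f)=1/m$. The inequality $m>\delta_f$ follows because Newton distances are monotone nondecreasing along the iteration, while non-normalization of the unresolved face gives $\delta<m$ at every finite stage.

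I expect the main obstacle to be this last paragraph: controlling the limit Newton polygon of $f(x,y-\phi(x))$ and showing that its diagonal intersection is exactly $(m,m)$. This requires two facts. First, the remaining roots, all separated from $\phi$ by $S_\phi$, contribute only points whose $y$-exponent is below $m$ and whose total weight keeps them off the diagonal. Second, $m$ equals the largest multiplicity of the roots of $\tilde F_\Delta$. I would attack this by factoring $f=u\prod_\psi(y-\psi)$ with the Newton–Puiseux theorem and computing the Newton polygon of each factor after the shift. The Newton polygon of the product is the Minkowski sum of these, which reduces the problem to comparing orders of the differences $\psi-\phi$.
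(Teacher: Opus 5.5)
Your overall route matches the paper's proof. When $\tilde f$ is normalized, you use right equivalence plus Proposition~\ref{prop:normalized}. Otherwise, maximality in Definition~\ref{def:singpart} makes everything beyond $\finitepart{f}{x}$ a single root of multiplicity $m$, and after the full shift the Newton distance is $m$. However, two steps fail as written. The first is your case split. Case (1) of Remark~\ref{rem:two_cases} does not give $\finitepart{f}{x}=P_f(x)$: Definition~\ref{def:finitepart} takes the singular part whenever $P_f$ is a $y$-root, and a polynomial $P_f$ can be one. For $f=(y-x-x^3)^3(y-x-x^2)$ the iteration produces $P_f=x+x^3$, a triple root whose singular part is $x$. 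The shifted $\tilde f$ then has roots $x^3$ (triple) and $x^2$, main face of weight $(3,1)$, $\tilde F_\Delta$ a multiple of $(z-1)^3$, and $\delta_{\tilde f}=11/4<3$. So the second branch of the theorem occurs inside Case (1). Your first paragraph wrongly declares this $\tilde f$ normalized, and your ``remaining situation'' misses it. The fix is to split on whether $P_f$ is a $y$-root at all. Your tail argument never uses $\phi\notin\QQ[x]$; for a polynomial root the iteration simply terminates.

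The second problem is the key step, which is exactly where the theorem's $m$ gets identified, and as stated it is false. Let $(p,1)$ be the weight of the main face of $\tilde f$. The copies of $\phi-S_\phi$ have order exactly $p$, not larger, and their common leading coefficient $a$ is the multiple root of $\tilde F_\Delta$. Moreover, with the paper's definition, $S_\phi$ is the longest truncation shared with another root (see Example~\ref{ex:singparts}), so other roots can have order $>p$. For $f=y\,((1-x)y-x^2)^3$ one has $S_\phi=0$, $\tilde f=f$, $p=2$ and $\delta_{\tilde f}=8/3<3$, while the root $y=0$ has infinite order. Your picture, exactly $m$ roots of order $>p$ and the rest of order $\le p$, holds only after one further shift by $ax^p$.

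What you need is the following. Since $\finitepart{f}{x}$ truncates $P_f$, $\tilde f$ is a stage of the iteration. Its main face therefore carries the unique (Lemma~\ref{lem:maxmult}) root $a$ of multiplicity $>\delta_{\tilde f}$, and $ax^p$ is the next term of $\phi$. Any root of $\tilde f$ with leading term $ax^p$ other than $\phi-S_\phi$ would share with $\phi$ a truncation strictly longer than $S_\phi$. That contradicts the maximality of $\tau^*$, so the multiplicity is exactly $m$.

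The obstacle you flag then closes directly. Set $k=(1+p)\delta_{\tilde f}$. After the shift by $ax^p$, the $(1,p)$-initial form $\tilde f_\Delta(x,y+ax^p)$ contains $x^{k-pm}y^m$ with nonzero coefficient, because $a\neq 0$ has exact multiplicity $m$. The remaining shifts have order $>p$, so they only add terms of larger $x$-order to the coefficient of $y^m$. Hence the horizontal (not vertical) edge at height $m$ starts at $k-pm=\delta_{\tilde f}-p(m-\delta_{\tilde f})<m$, and the diagonal meets it at $(m,m)$. The paper's own proof glosses over this point through Corollary~\ref{cor:multiplicity_and_rlct}.
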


\begin{proof}
This theorem follows from the unicity of $P_f(x)$, and from the maximality of the degree of singular parts of Puiseux roots of polynomials. We defer the detailed proof to Appendix~\ref{subsec:proof_thm:finite_part_main}.
\end{proof}

This result tells us precisely under which conditions Varchenko's method fails to terminate, and recovers the RLCT even in those pathological cases.
Our second theoretical result generalizes a univariate root separation bound \citep{tsigaridas2008complexity}, to an upper bound on the degree of $\finitepart{f}{x}$. This allows us to define a stopping criterion for our algorithm and thus is also a bound on the number of steps that our algorithm takes before terminating. 

\begin{theorem}\label{thm:complexity_main}
    Let $f(x,y)\in\QQ[x,y]$, such that $\deg_yf=d_y$ and $\deg_xf=d_x$. Consider the finite part of its normalizing power series, $\finitepart{f}{x}$. Let $d$ be the degree of $\finitepart{f}{x}$. Then $d < (d_y+\frac{1}{2})d_x$.
\end{theorem}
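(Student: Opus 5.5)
We aim to bound the degree $d$ of $\finitepart{f}{x}$ by separating two cases, following Definition~\ref{def:finitepart}. In the first case, $P_f(x)$ is itself a finite truncation of a Puiseux root of positive order (case (1) of Remark~\ref{rem:two_cases}). In the second case, $P_f(x)=\phi(x)$ is a genuine power-series $y$-root and $\finitepart{f}{x}=S_{P_f}(x)$ is its singular part. In both cases the degree of $\finitepart{f}{x}$ is controlled by the exponent at which the Puiseux root $\phi$ being followed separates from all the other Puiseux roots $\phi_j$ of $f$ of positive order. Indeed, the truncation stops at the first exponent where normalization holds or where $\phi$ is distinguished. By Theorem~\ref{thm:finite_part_main}, beyond that exponent the facet polynomial of the main face no longer changes multiplicity structure.

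So the key quantity is the valuation $\ord_x(\phi-\phi_j)$ for $\phi_j\neq\phi$. The singular part of a root is by definition cut at the largest contact exponent $\max_j \ord_x(\phi-\phi_j)$ (Definition~\ref{def:singpart}). First, we will show that $d \le \max_{j}\ord_x(\phi-\phi_j)$ in both cases. In case (1) the iteration of Prop.~\ref{prop:main} only adds a term while the current root of the facet polynomial has multiplicity exceeding $\delta$, which forces at least two Puiseux roots to still agree to that order.

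Next, we bound contact exponents by a discriminant argument, generalizing the univariate root-separation bound of \citep{tsigaridas2008complexity}. View $f\in\QQ[x][y]$ of degree $d_y$ in $y$, with Puiseux roots $\phi_1,\dots,\phi_{d_y}$ in $\overline{\QQ((x))}$. Assume first that $f$ is squarefree in $y$; otherwise replace it by its squarefree part, which has no larger degrees by Gelfond/Mignotte-type bounds. Then the discriminant $\disc_y f$ is nonzero. As the determinant of the Sylvester matrix $\Syl(f,\partial_y f)$, which has $2d_y-1$ rows with entries of $x$-degree $\le d_x$, it is a polynomial in $x$ of degree at most $(2d_y-1)d_x$. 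On the other hand,
\[\disc_y f = \pm\, \mathrm{lc}_y(f)^{2d_y-2}\prod_{i<j}(\phi_i-\phi_j)^2,\]
so its $x$-order equals $2\,\ord_x \mathrm{lc}_y(f)\cdot(d_y-1)+2\sum_{i<j}\ord_x(\phi_i-\phi_j)$. Roots of negative order (poles) contribute negative valuations, and these must be compensated by the order of the leading coefficient. This is the step needing care: pairs involving roots of negative order have $\ord_x(\phi_i-\phi_j)=\min(\ord\phi_i,\ord\phi_j)$, and I would use the Newton polygon of $f$ in $y$ to show that the total negative contribution is at most $\ord_x\mathrm{lc}_y(f)$ times an appropriate count.

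Finally, we take the single pair $(\phi,\phi_j)$ realizing the maximal contact $\ord_x(\phi-\phi_j)=d$. The other pairs between positive-order roots contribute nonnegatively, and the remaining pairs are handled as above. Then $2d\le \ord_x\disc_y f\le (2d_y-1)d_x$, giving $d\le (d_y-\tfrac12)d_x<(d_y+\tfrac12)d_x$. The slack up to $d_y+\tfrac12$ absorbs the leading-coefficient correction and the possible squarefree reduction, whose degrees may grow by the Mignotte factor in $x$. I expect the main obstacle to be exactly this bookkeeping of negative-order roots and the vanishing of the leading coefficient at $x=0$. If it fails, I would first reduce to the case $\mathrm{lc}_y(f)(0)\ne0$ by the substitution $y\mapsto y/x^{k}$ with a resulting degree bound in $x$ increased by at most $d_x$. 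That substitution is what I suspect produces the $+\tfrac12$ rather than $-\tfrac12$.
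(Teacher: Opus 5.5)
Your route is the paper's. You reduce to two distinct $y$-roots sharing $\finitepart{f}{x}$, pass to the square-free part, and bound their contact order through $\deg_x\disc_y\le(2d_y-1)d_x$, with $\ord\mathrm{lc}_y(f)$ absorbing the negative-order roots. This is Proposition~\ref{prop:complexity} with $k=1$.

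The bookkeeping you leave open does close, with exactly the count you need:
\begin{itemize}
\item For every pair, $\ord(\phi_i-\phi_j)\ge\min(\ord\phi_i,0)+\min(\ord\phi_j,0)$. Your claimed equality $\ord(\phi_i-\phi_j)=\min(\ord\phi_i,\ord\phi_j)$ fails in general (take $x^{-1}+1$ and $x^{-1}+2$), but only this inequality is used.
\item Each root lies in $d_y-1$ pairs, and the Newton polygon of $f$ over $\QQ[x]$ gives $\sum_i\min(\ord\phi_i,0)\ge-\ord\mathrm{lc}_y(f)$.
\item The isolated pair consists of positive-order roots, so its term in this lower bound is $0$. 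The remaining pairs therefore contribute at least $-2(d_y-1)\ord\mathrm{lc}_y(f)$, which cancels the leading-coefficient term exactly.
\end{itemize}
Hence $2\ord(\phi_a-\phi_b)\le\ord\disc_y\le(2d_y-1)d_x$, which gives the sharper $d<(d_y-\tfrac12)d_x$. The paper only reaches $(d_y+\tfrac12)d_x$ because it bounds the removed pair by $\abs{\phi_j}^{-1}\le e^{d_x}$.

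So there is nothing for your ``slack'' to absorb, and the hedges should go. The square-free part divides $f$, so its $x$- and $y$-degrees are at most $d_x$ and $d_y$ by additivity of degree. Mignotte bounds concern coefficient size, and degrees cannot grow. Your fallback $y\mapsto y/x^k$, followed by clearing denominators, leaves $\mathrm{lc}_y(f)$ unchanged, so it does not achieve what you want.

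The genuine gap is in step~1. The discriminant is applied to the square-free part, so you need two \emph{distinct} roots agreeing through order $d$. ``Multiplicity exceeding $\delta$'' only gives roots counted with multiplicity, and even $m=1$ occurs when $\delta<1$ (e.g.\ $y-x^2$).

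The missing argument is as follows.
\begin{itemize}
\item Suppose that at the last step all roots with initial term $bx^p$ are copies of a single root $\psi$, with $m>\delta$.
\item After the shift, the old main line of slope $-1/p$ meets height $m$ at abscissa $<\delta<m$. The diagonal therefore crosses the new bottom segment carrying $\psi-bx^p$.
\item That segment's facet polynomial is $(z-b')^m$ with $m>\delta$ again. So the iteration follows $\psi$ to the end, and $P_f=\psi$ is a root.
\end{itemize}
Consequently, if $P_f$ is not a root, two distinct roots share it. If $P_f$ is a root, polynomial or not, Definition~\ref{def:finitepart} makes $\finitepart{f}{x}$ its singular part, which is shared with a distinct root by definition (or is $0$ if there is none).

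Your case split must follow that definition. If you put a polynomial root into your case (1) with $\finitepart{f}{x}=P_f$, the step-1 inequality breaks. For $f=(y-x-x^{100})^3(y+x)$ one gets $P_f=x+x^{100}$, while the distinct roots separate already at order $1$.

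The paper obtains distinctness by asserting that square-free reduction leaves the finite part unchanged. The distinct-roots statement above is what the argument actually uses.
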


\begin{sproof}
    The proof in full detail can be found in Appendix~\ref{subsec:proof_thm:complexity_main}. Suppose that the finite part $\finitepart{f}{x}$ is shared by at least two Puiseux roots of $f(x,y)$, say $\phi(x)$ and $\psi(x)$. The order of the difference between $\phi$ and $\psi$, $\ord(\phi(x)-\psi(x))$ is an upper bound on the degree of $\finitepart{f}{x}$. We are thus interested in finding upper bounds on the order of the differences of the roots of $f(x,y)$, based on the degree in $x$ and $y$ of $f(x,y)$. This is analogous to the root separation problem in the univariate case: finding a lower bound on the absolute difference between the roots of a given polynomial $f(z)$. We generalize the approach of \cite{tsigaridas2008complexity}, which lower bounds the root separation of a polynomial $g(z) \in \ZZ[z]$, using the discriminant of $g$. We argue that the degree in $x$ of the discriminant of $f(x,y)$ gives a similar upper bound in the bivariate case. 
\end{sproof}

Theorem~\ref{thm:complexity_main} provides a bound on the number of steps necessary to implement the check in Theorem~\ref{thm:finite_part_main}. Combining Theorem~\ref{thm:finite_part_main} and Theorem~\ref{thm:complexity_main}, we are now ready to construct our algorithm.

\section{An exact algorithm to compute the RLCT of 2D polynomials}
\begin{algorithm}[h!]
\caption{\texttt{ComputeRLCT2D}}
\label{alg:RLCT_main}
\LinesNumbered
\SetAlgoLined
\KwIn{A polynomial $f(x,y)\in \QQ[x,y]$.}
\KwOut{$\rlct_0(f)$}

$d_y \gets \deg_yf$\;

$d_x \gets \deg_x f$\;

$B\gets (d_y+\frac{1}{2})d_x$\;

$d \gets 0$\;

\While{$B > 0$}{
    Compute the Newton polygon $\newton(f)$ of $f$\;
    
    Compute the Newton distance $\delta_f$ of $f$\;
    
    \uIf{$\texttt{Normalized}(f,\newton(f),\delta_f)=\text{True}$}{
        \Return{$\frac{1}{\delta_f}$}\;
        }
    \ElseIf{$\texttt{Normalized}(f,\newton(f),\delta_f)=(b,p,m)$}{
            $f(x,y)\gets f(x,y-bx^p)$\;
            
            $B\gets B-(p-d)$\;

            $d \gets p$\;
            }
    }
\uIf{$\texttt{Normalized}(f,\newton(f),\delta_f)=\text{True}$}{
    \Return{$\frac{1}{\delta_f}$}\;
    }
\ElseIf{$\texttt{Normalized}(f,\newton(f),\delta_f)=(b,p,m)$}{
        \Return{$\frac{1}{m}$}\;
        }
\end{algorithm}

In this section, we propose an algorithm for computing the local RLCT at the origin of any polynomial $f\in\QQ[x,y]$ (Algorithm~\ref{alg:RLCT_main}). This algorithm implements the constructive proof of the existence of a normalizing power series, as described in \cite{phong1999growth,collins2018log}, but includes a stopping criterion derived from Theorem~\ref{thm:complexity_main}, such that the normalizing power series is only computed up to $\finitepart{f}{x}$. By Theorem~\ref{thm:finite_part_main}, this is sufficient to compute $\rlct_0(f)$. The definition is self-contained, except for the \texttt{Normalized} subroutine, which checks if a polynomial $f$ satisfies the normalization condition (Definition \ref{def:normalization_condition}). The definition of the \texttt{Normalized} subroutine is deferred to Appendix~\ref{app:aux_alg}. We prove the correctness of Algorithm~\ref{alg:RLCT_main}. The proof is deferred to Appendix~\ref{subsec:proof_thm:correctness}.
	
\begin{theorem}[Correctness]\label{thm:correctness}
    Let $f(x,y) \in \QQ[x,y]$. Then Algorithm~\ref{alg:RLCT_main} computes the real log canonical threshold at the origin of $f(x,y)$.
\end{theorem}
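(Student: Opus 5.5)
The plan is to show that Algorithm~\ref{alg:RLCT_main} follows the constructive normalization procedure of Proposition~\ref{prop:main}, and that it stops exactly when Theorem~\ref{thm:finite_part_main} applies. Throughout we assume that the variable swap $x\leftrightarrow y$ has been chosen so that the normalizing series is $P_f(x)$; the symmetric case is identical.

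\textbf{Step 1: the loop builds the normalizing series term by term.} When \texttt{Normalized} returns $(b,p,m)$, the substitution $y\mapsto y-bx^p$ is exactly the elementary step in the proof of Proposition~\ref{prop:main}. There, $bx^{p}$ is the leading term coming from a root $b$ of multiplicity $m>\delta$ of the facet polynomial of the main face. Proposition~\ref{prop:unique} says the normalizing series is unique and has rational coefficients. So by induction on the number of iterations, after $k$ iterations the current polynomial equals
\[
f\Bigl(x,\,y-\textstyle\sum_{j\le k} b_j x^{p_j}\Bigr),
\]
where $\sum_{j\le k} b_j x^{p_j}$ is the truncation of $P_f(x)$ to its first $k$ terms, with $p_1<p_2<\cdots$. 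Two further facts follow:
\begin{itemize}
\item each substitution is a right equivalence, so by Proposition~\ref{prop:normalized} the RLCT is unchanged;
\item the bookkeeping $B\gets B-(p-d)$, $d\gets p$ keeps $B=(d_y+\tfrac12)d_x-p_k$, so the loop runs exactly while the current truncation degree is below the bound of Theorem~\ref{thm:complexity_main}.
\end{itemize}
Because the exponents strictly increase and are bounded, the loop terminates.

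\textbf{Step 2: early return.} If at some iteration the current polynomial $g$ is normalized, then $f$ is right equivalent to a normalized series $g$. Proposition~\ref{prop:normalized} then gives $\rlct_0(f)=1/\delta_g$, which is what the algorithm returns.

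\textbf{Step 3: exit through the bound.} If the loop ends because $B\le 0$, the current truncation has degree at least $(d_y+\tfrac12)d_x$. By Theorem~\ref{thm:complexity_main}, $\deg \finitepart{f}{x}<(d_y+\tfrac12)d_x$, so the truncation contains all of $\finitepart{f}{x}$. We must then show that the final tests agree with Theorem~\ref{thm:finite_part_main}, which is stated for $\tilde f=f(x,y-\finitepart{f}{x})$ itself and not for longer truncations.

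\begin{itemize}
\item In Case~(1) of Remark~\ref{rem:two_cases}, $P_f$ is a finite polynomial equal to $\finitepart{f}{x}$, and the procedure would already have normalized $f$ once $\finitepart{f}{x}$ was reached. So we never exit through the bound in this case.
\item In Case~(2), $P_f=\phi$ is an actual $y$-root of $f$ and $\finitepart{f}{x}$ is its singular part. Beyond the singular part, $\phi$ is the unique Puiseux root with that expansion, so its further terms come from simple roots of the successive facet polynomials. We would argue that adding these terms does not change the main-face data relevant to the test: the Newton distance and the largest facet-root multiplicity $m$ are preserved, or at least the returned value $1/m$ (or $1/\delta$) is. Concretely, one tracks the main-face facet polynomial under substitutions $y\mapsto y-bx^p$ with $p$ beyond the separation order of the roots.
\end{itemize}
Once this invariance is established, the final \texttt{Normalized} call returns \texttt{True} exactly when $\tilde f$ is normalized, in which case the output is $1/\delta$. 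Otherwise it returns $(b,p,m)$ with the $m$ of Theorem~\ref{thm:finite_part_main}, and the output is $1/m$. This matches Theorem~\ref{thm:finite_part_main} in both cases.

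\textbf{Main obstacle.} The hard part is the invariance argument in Step~3. We must compare the polynomial obtained after overshooting $\finitepart{f}{x}$ with $\tilde f$, and show that the multiplicity $m$ reported at exit is the largest root multiplicity of $\tilde F_\Delta$, with $m>\delta_f$. The approach we would try first is to write the overshoot terms as $y$-shifts of order greater than $\ord(\phi-\psi)$ for every other root $\psi$. One then checks via the Puiseux factorization $f=u\prod(y-\psi_i)$ that these shifts leave the relevant edge of the Newton polygon, and the multiplicity of the repeated factor there, unchanged.
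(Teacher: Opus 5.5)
Your Steps~1 and~2 and the Case~(1) half of Step~3 are sound. The proof has a genuine gap, however, exactly where the theorem has content: Case~(2) of Step~3. This is the only situation in which the returned value is not certified by Proposition~\ref{prop:normalized}. There you only announce an invariance to be established later, and as you formulate it, it is partly false. An overshoot substitution $y\mapsto y-bx^{p}$ with $m>\delta$ sends the $m$ clustered roots to order $>p$. It leaves unchanged the boundary point of the old main face at height $m$, and that point lies strictly left of the diagonal. So the diagonal now meets the boundary strictly between heights $\delta$ and $m$: the Newton distance strictly increases at every such step rather than being preserved. Likewise, the terms of $\phi$ beyond its singular part do not come from simple roots (unless $m=1$). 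Each comes from a root of multiplicity exactly $m$ of the current facet polynomial, and proving this is precisely the missing step. Routing the argument through $\tilde f=f(x,y-\finitepart{f}{x})$ and Theorem~\ref{thm:finite_part_main} only adds a comparison between two different polynomials, which you never carry out.

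The paper bypasses $\tilde f$ entirely. In Case~(2), $P_f=\phi\in\QQ[[x]]\setminus\QQ[x]$ is a $y$-root of Puiseux multiplicity $m>\delta_f$, so $\rlct_0(f)=1/m$ directly by Corollary~\ref{cor:multiplicity_and_rlct}. What remains is to check that the final call to \texttt{Normalized} reports this Puiseux multiplicity; this is the separation argument in the proof of Lemma~\ref{lem:non_terminating}.

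Write $P^{(N)}=\sum_{j\le N}b_jx^{p_j}$ for the accumulated shift. At exit $p_N\ge(d_y+\tfrac12)d_x$. Proposition~\ref{prop:complexity} with $k=1$, applied to the square-free part of $f$, gives $\ord(\psi-\phi)\le(d_y+\tfrac12)d_x$ for every root $\psi\ne\phi$. Hence the roots of $f^{(N)}$ of order $>p_N$ are exactly the $m$ copies of $\phi-P^{(N)}$. These are nonzero and share a single integer order $o>p_N$, while every other root has order $\ord(\psi-\phi)\le p_N$. So the lowest edge of $\newton(f^{(N)})$ spans heights $0$ to $m$, has weight $(o,1)$ and facet polynomial $c(z-a)^m$. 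By the observation above, the diagonal crosses this edge below height $m$. Therefore \texttt{Normalized} returns $(a,o,m)$ with $m>\delta_{f^{(N)}}$, and the algorithm outputs $1/m=\rlct_0(f)$.
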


We obtain upper bounds on the number of iterations required to compute the RLCT using Algorithm~\ref{alg:RLCT_main}. 

\begin{corollary}
\label{cor:rlctiters}
    Let $f(x,y)\in\QQ[x,y]$, such that $\deg_yf=d_y$ and $\deg_xf=d_x$. Then, Algorithm~\ref{alg:RLCT_main} computes the $\rlct_0(f)$ in less than $(d_y+\tfrac{1}{2})d_x$ iterations.
\end{corollary}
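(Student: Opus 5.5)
The plan is to read the corollary off the loop invariant of Algorithm~\ref{alg:RLCT_main}, using Theorem~\ref{thm:complexity_main} for the quantitative budget and Theorem~\ref{thm:correctness} for the output. The key bookkeeping observation is that each pass through the \texttt{ElseIf} branch replaces $f$ by $f(x,y-bx^p)$, so after $k$ such passes the cumulative substitution is $y\mapsto y-\sum_{i\le k} b_i x^{p_i}$. Here the $p_i$ are the successive exponents returned by \texttt{Normalized}. By the construction in the proof of Prop.~\ref{prop:main}, made unique by Prop.~\ref{prop:unique}, this partial sum is exactly the truncation of the normalizing series $P_f(x)$ to its first $k$ monomials.

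First I show that the exponents are strictly increasing positive integers, $0<p_1<p_2<\cdots$. Positivity of the order comes from Prop.~\ref{prop:main}. Integrality comes from $P_f\in\QQ[[x]]$ in Prop.~\ref{prop:unique}. Strict increase holds because after the shift by $b_kx^{p_k}$, the root of the facet polynomial corresponding to slope $p_k$ has been removed, so the next correction comes from a steeper facet. Consequently each iteration decreases $B$ by $p-d=p_k-p_{k-1}\ge 1$. Since $d$ starts at $0$, after $k$ iterations we have $B=(d_y+\tfrac12)d_x-p_k$, which forces $k\le p_k$.

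Next I bound the number of loop passes. The loop runs while $B>0$, that is, while the current exponent satisfies $p_k<(d_y+\tfrac12)d_x$. The number of iterations $k$ satisfies $k\le p_k$, and one more iteration is possible only while $p_k<(d_y+\tfrac12)d_x$. Hence the total number of passes is at most $\lceil (d_y+\tfrac12)d_x\rceil$. Because $(d_y+\tfrac12)d_x$ is either an integer or a half-integer, a finer count is needed to obtain the strict inequality. To get it, I argue that the loop either returns early or stops once $p$ exceeds $\deg \finitepart{f}{x}$. Theorem~\ref{thm:complexity_main} gives $\deg\finitepart{f}{x}<(d_y+\tfrac12)d_x$. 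Every genuinely performed substitution uses a monomial of $\finitepart{f}{x}$, whose exponents are distinct integers in $[1,\deg\finitepart{f}{x}]$. So the number of substitutions is at most $\deg \finitepart{f}{x}<(d_y+\tfrac12)d_x$. Once $\finitepart{f}{x}$ is exhausted, either $\tilde f$ is normalized and the next check returns $1/\delta_{\tilde f}$, or Theorem~\ref{thm:finite_part_main} identifies the output $1/m$; Theorem~\ref{thm:correctness} ensures this value is $\rlct_0(f)$.

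The main obstacle is the middle step: checking that the algorithm does not waste iterations beyond $\finitepart{f}{x}$ in Case~(2) of Remark~\ref{rem:two_cases}, where $P_f$ is an infinite series. There, the budget $B$ rather than $\deg\finitepart{f}{x}$ controls termination. I would handle this by counting iterations by distinct exponents $p\le$ the final exponent before $B$ drops to $\le 0$. This gives at most $\lfloor (d_y+\tfrac12)d_x\rfloor$ loop iterations when counted together with the final return check. I would then verify that the off-by-one is absorbed by the fact that the first exponent is at least $1$, so the number of substitutions is at most the final exponent, which is strictly less than the bound.
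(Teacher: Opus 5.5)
Your budget argument is the paper's own route: its entire proof is ``correctness is Theorem~\ref{thm:correctness}, and the while loop runs at most $B$ times''. You also supply a step the paper omits. The successive exponents returned by \texttt{Normalized} are strictly increasing positive integers, so each pass lowers $B$ by $p-d\ge 1$, and after $k$ passes $B=(d_y+\tfrac12)d_x-p_k$. (This requires line~11 to be the shift that sends the root $bx^p$ of the main facet to the origin. Read literally, $f(x,y-bx^p)$ moves that root to $2bx^p$, \texttt{Normalized} returns the same $p$ again, and $B$ stops decreasing.) What this establishes is at most $\lceil (d_y+\tfrac12)d_x\rceil$ passes.

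The gap is your sharpening to a strict inequality, and it cannot be repaired because the strict bound is false.
\begin{itemize}
\item Your claim that every performed substitution uses a monomial of $\finitepart{f}{x}$ fails in Case~(2) of Remark~\ref{rem:two_cases}. There $f(x,y-\finitepart{f}{x})$ is not normalized (Theorem~\ref{thm:finite_part_main}). So \texttt{Normalized} keeps returning triples, and the loop keeps subtracting further monomials of the infinite series $P_f$ until $B\le 0$; this is exactly the situation of Lemma~\ref{lem:non_terminating}. Theorem~\ref{thm:complexity_main} bounds $\deg\finitepart{f}{x}$, not these extra passes.
\item Your fallback fails for the same reason. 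The ``final exponent'' is by definition the one that makes $B\le 0$, so it is at least $(d_y+\tfrac12)d_x$, not below it.
\end{itemize}

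Here is a counterexample. Let $g=(1-x)y-x$ and $f=g^3$, so $d_x=d_y=3$ and the claimed bound is $\tfrac{21}{2}$. The only positive-order $y$-root is $x/(1-x)=x+x^2+\cdots$, with multiplicity $3$. After $k$ passes the current polynomial is $(y-xy-x^{k+1})^3$. Its main face has weight $(k+1,1)$ and Newton distance $\tfrac{3(k+1)}{k+2}<3$, so it is never normalized. Hence pass $k$ uses $p=k$, and the loop runs $11>\tfrac{21}{2}$ times before returning $\tfrac13$. This happens even though $\finitepart{f}{x}=0$ here, as in the proof of Theorem~\ref{thm:complexity_main}. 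For $f=g^2$ the loop runs exactly $5=(d_y+\tfrac12)d_x$ times, so strictness fails even when the bound is an integer.

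The correct conclusion of your argument is therefore ``at most $\lceil (d_y+\tfrac12)d_x\rceil$ iterations''. The paper's one-line proof glosses over the same off-by-one.
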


\begin{proof}
    Theorem~\ref{thm:correctness} asserts that Algorithm~\ref{alg:RLCT_main} computes the $\rlct_0(f)$. The number of iterations of the while loop (lines \texttt{14-15}) is upper bounded by $B =(d_y+\tfrac{1}{2})d_x$.
\end{proof}

{\color{black}
\section{Application to polynomial models}\label{sec:polynomial_param}

We demonstrate the applicability of our algorithm by computing local learning coefficients of biparametric models. We define an equivalence relation on the space of real functions which allows us to apply our algorithm, even when the Kullback-Leibler distance of the considered model is not polynomial. We illustrate this for Polynomial Neural Networks in Section~\ref{subsec:PNNs}.

\subsection{Polynomial Neural Networks}\label{subsec:PNNs}

Polynomial neural networks (PNNs) are a class of neural network models whose activations are %piecewise
monomial functions, which recently have shown state-of-the-art performance on a variety of tasks, from image generation and classification \citep{chrysos2020p,huang2003face}, to trading signals forecasting \citep{ghazali2011dynamic}, signal representation \citep{yang2022polynomial} and to the resolution of inverse problems in physics \citep{bu2021quadratic}, among others.

\begin{definition}[Polynomial Neural Network \citep{kubjas2024geometry}]

A (bias-less) \emph{polynomial neural network} $f_{\bm{\theta}}$, with architecture $d=(d_0,\ldots,d_L)$ is a function $f_{\bm{\theta}}:\RR^{d_0} \to \RR^{d_L}$, defined as:
\[
f_{\bm{\theta}} = W_L \circ\sigma_{L-1}\circ W_{L-1} \circ\sigma_{L-2}\circ \cdots\circ\sigma_{1} \circ W_1,
\]
where $W_i \in \RR^{d_i \times d_{i-1}}$ are linear maps, and the monomial activation functions $\sigma_i : \RR^n \to \RR^n$ act component-wise and are given by:
\[\sigma_i(x) = (x_1^r,\ldots,x_n^r).\] The integer $r$ is called the activation degree of the network. The parameters $\bm{\theta}$ are given by the entries of the matrices (or weights) $W_1,\ldots,W_L$. 
\end{definition}

As PNNs define an algebraic map from weights to polynomials, their polynomial nature makes them particularly amenable to the tools of algebraic geometry. For example, authors have used algebraic dimensions \citep{kileel2019expressive}, algebraic degrees \citep{kubjas2024geometry}, singularity theory \citep{shahverdi2025learning}, and low-rank tensor decompositions \citep{usevich2025identifiability}, to study the expressivity of PNNs, to bound their number of learnable functions, characterize subnetworks and analyze their identifiability, respectively. In this section, we apply Algorithm \ref{alg:RLCT_main} to a regression model induced from a PNN with repeated weights, and compute its local learning coefficient at the origin.

\subsection{Experimental Setup}

We consider PNNs of varying depth $L$, with parameters $\theta=(\theta_1,\theta_2) \in \Theta \subset \RR^2$ and repeat $L$ times the weight 
\[
W  = \begin{pmatrix}
    p_{11}(\theta_1,\theta_2) & p_{12}(\theta_1,\theta_2) \\
    p_{12}(\theta_1,\theta_2) & p_{22}(\theta_1,\theta_2) 
\end{pmatrix}\in \QQ[\theta_1,\theta_2]^{2\times2}.
\]
We fix the activation degree of the network to be $r=2$, such that:

\begin{equation}
\label{eq:PNN_repeated_weights}
f_{\theta}(x_1,x_2)=  W \circ\sigma_{L-1}\circ W \circ\sigma_{L-2}\circ \cdots\circ\sigma_{1} \circ W(x_1,x_2).
\end{equation}

Write $x=(x_1,x_2)$. We will furthermore require that, for any polynomial entry of the weight $p_{ij}(\theta_1,\theta_2)$, that $p_{ij}(0,0)=0$. Letting $\theta^* = (0,0)$, we observe that $f_{\theta^*}(x)=(0,0)$.

As in \citep{wei2022deep,aoyagi2024consideration}, we focus on the regression task. We consider the family of models:

\[\forall y,x\in \mathbb{R}^2 \quad
p(y\mid x,\theta) = \frac{1}{2\pi} \exp(-\frac{1}{2}\lVert y - f_{\bm{\theta}}(x)\rVert^2).
\]

Let the parameter space $\Theta$ be a compact subset of $\RR^2$, containing the origin. Let $q(y \mid x)$ be the true generating distribution of the data. Assume that a continuous distribution $q(x)$ on $\RR^2$ is fixed, with respect to which $p(x, y) = p(y|x)q(x)$ and $q(x, y) = q(y|x)q(x)$. We assume that $p(y \mid x,\theta)$ is a realizable model, that is to say, the set $\Theta_0 = \{ \theta \in \Theta \mid p(y\mid x, \theta)=q(y\mid x)\}$ is non-empty. Furthermore, let us assume that $(0,0)=\theta^*\in \Theta_0$. We let $K(\theta)=\mathbb{E}_{q(x)}\left[D_{KL}(p(y \mid x, \theta) \,\lVert \,q(y\mid x))\right]$.  Then, by straightforward calculations \citep[Appendix A.1]{wei2022deep},

\begin{equation*}
\begin{split}
 K(\theta) &= \int_X \lVert f_\theta(x) - f_{\theta^*}(x)\rVert^2q(x)dx 
  = \int_X \lVert f_\theta(x) \rVert^2q(x)dx.
\end{split}
\end{equation*}

Recall that the local learning coefficient at $\theta_0$ of $p(y \mid x, \theta_0)$ is none other than the local real log canonical threshold of $K(\theta)$ at $\theta_0$. However, $K(\theta)$ is not a polynomial, and we cannot apply Algorithm~\ref{alg:RLCT_main}. This is resolved by considering an auxiliary "contact-equivalent" polynomial $H(\theta)$ whose local RLCT at the origin is equal to that of $K(\theta)$ (Proposition~\ref{prop:rlct_contact_invariant}). Appendix~\ref{appendix:contact_equivalence} discusses contact equivalence and how to compute contact equivalent polynomials. Proposition~\ref{prop:aoyagi} is readily applicable to $K(\theta)$.

\subsection{Examples} \label{subsec:examples}

Let 
$W = 
\begin{psmallmatrix}
    \theta_1 + \theta_2 & \,\theta_1^2 \\
    \theta_1^2 & \,\theta_2^2 
\end{psmallmatrix}$. For each $1 \leq L \leq  5$ and $2 \leq  r \leq 4$, we compute $H_{L,r}(\theta)$, the sum-of-squares polynomial contact equivalent to $K_{L,r}(\theta)$, the Kullback--Leibler distance of the depth-$L$, activation-$r$ PNN $f_\theta$ of Equation~\ref{eq:PNN_repeated_weights}. The $H_{L,r}(\theta)$ are not normalized, so Algorithm~\ref{alg:RLCT_main} takes at least one step before outputting $\lambda_{L,r} = \rlct_0(H_{L,r})$; such degenerate polynomials lie outside the reach of prior Newton polygon methods like \citep[Section~4.2.1]{lin2011algebraic}.

\begin{table}[h]
\centering
\caption{Local RLCT at the origin of $H_{L,r}(\theta)$: exact rationals $\lambda_{L,r}$ from Algorithm~\ref{alg:RLCT_main} (\texttt{SageMath}, Intel Core Ultra~7) versus SGLD estimates $\hat\lambda^{\text{SGLD}}$ (absolute error in parentheses; $\gamma\!=\!1$, $\epsilon\!=\!10^{-5}$, $C\!=\!5$ chains, $T\!=\!10{,}000$; Google Colab, 2~vCPUs). \texttt{Exp} is the time to expand $H_{L,r}(\theta)$; \texttt{Alg.~1} is Algorithm~\ref{alg:RLCT_main} alone. \texttt{NaN} marks numerical instability; $>$1\,hr marks runs that did not finish for Algorithm~\ref{alg:RLCT_main}.}
\label{tab:rlcts}
\scriptsize
\begin{tabular}{ |c|l|c|c|c|c|c| }
\hline
& & $L=1$  & $L=2$ & $L=3$ & $L=4$  & $L=5$ \\
\hline
\multirow{6}{*}{$r=2$}
& $\lambda_{L,r}$           & $3/4$   & $1/4$   & $3/28$  & $1/20$  & $3/124$ \\
& \texttt{Exp}              & 0.001s  & 0.004s  & 0.031s  & 2.09s   & 153.7s  \\
& \texttt{Alg.~1}           & 0.045s  & 0.059s  & 0.271s  & 3.17s   & 66.4s   \\
& \texttt{Total}            & 0.047s  & 0.063s  & 0.302s  & 5.26s   & 220.2s  \\
\cline{2-7}
& $\hat\lambda^{\text{SGLD}}$ & .677\,{\scriptsize(.072)} & .154\,{\scriptsize(.095)} & .038\,{\scriptsize(.068)} & .014\,{\scriptsize(.035)} & .006\,{\scriptsize(.017)} \\
& \texttt{Time (SGLD)}      & 461s    & 430s    & 421s    & 473s    & 581s    \\
\hline
\multirow{6}{*}{$r=3$}
& $\lambda_{L,r}$           & $3/4$   & $3/16$  & $3/52$  & $3/160$ & $>$1\,hr \\
& \texttt{Exp}              & 0.001s  & 0.007s  & 1.92s   & 868s    & --       \\
& \texttt{Alg.~1}           & 0.045s  & 0.141s  & 2.73s   & 304s    & --       \\
& \texttt{Total}            & 0.047s  & 0.148s  & 4.65s   & 1173s   & --       \\
\cline{2-7}
& $\hat\lambda^{\text{SGLD}}$ & .677\,{\scriptsize(.072)} & .081\,{\scriptsize(.106)} & .025\,{\scriptsize(.033)} & .007\,{\scriptsize(.011)} & \texttt{NaN} \\
& \texttt{Time (SGLD)}      & 576s    & 474s    & 485s    & 495s    & --       \\
\hline
\multirow{6}{*}{$r=4$}
& $\lambda_{L,r}$           & $3/4$   & $3/20$  & $1/28$  & $>$1\,hr & $>$1\,hr \\
& \texttt{Exp}              & 0.001s  & 0.004s  & 6.13s   & --       & --       \\
& \texttt{Alg.~1}           & 0.045s  & 0.079s  & 12.5s   & --       & --       \\
& \texttt{Total}            & 0.047s  & 0.083s  & 18.6s   & --       & --       \\
\cline{2-7}
& $\hat\lambda^{\text{SGLD}}$ & .677\,{\scriptsize(.072)} & .093\,{\scriptsize(.142)} & .019\,{\scriptsize(.017)} & \texttt{NaN} & \texttt{NaN} \\
& \texttt{Time (SGLD)}      & 456s    & 431s    & 443s    & --       & --       \\
\hline
\end{tabular}
\end{table}

The pre-processing step of expanding $H_{L,r}(\theta)$ becomes the computational bottleneck at greater depth and activation degree, but is independent of the RLCT computation itself, which remains competitive against SGLD sampling and requires no hyperparameter tuning. The local learning coefficients decrease as a function of the depth $L$ and the activation degree $r$, suggesting that PNNs with repeated weights get more degenerate when the number of layers is increased: this is likely because repeated weights increase the multiplicity of the origin as a zero of $K_{L,r}(\theta)$. As the number of layers increase, so does the degree and the size of the coefficients $H_{L,r}(\theta)$, leading to increased wall-clock time. To run our SGLD experiments, we used Timaeus' \texttt{DevInterp} library~\cite{devinterp2026}.

\section{Discussion} \label{sec:discussion}

\textbf{Towards exact algorithms for the learning theory of low-dimensional models:} We developed an effective algorithm that computes exactly the real log canonical threshold of any input polynomial $f(x,y) \in \QQ[x,y]$ in a number of steps quadratic in $\deg f$, although the tightness of this bound remains open. The algorithm is model-agnostic, broadening the class of two-dimensional models for which local learning coefficients can be computed. 

We demonstrated it by computing local learning coefficients of PNNs with repeated weights of increasing depth. Exact values are crucial for model selection in smaller models and can be used to calibrate sampling-based estimators \citep{lau2025the} for larger models. Effective model complexity, as measured by the RLCT, can decrease with depth - revealing non-trivial identifiability structure already in the two-dimensional case.

\textbf{Limitations and beyond:} Algorithm~\ref{alg:RLCT_main} is limited to biparametric models. As observed in \citep[Section~III]{phong1999growth}, in dimensions $>2$ we still do not know how to characterize good coordinate systems with respect to Newton diagrams. A promising approach is that of \citep{collins2013multi}, where the normalizing changes of variables are given by multivariate fractional power series. A second limitation is locality: the assumption in this work is that a singular point of $K(\theta)$ is given. For practical applications, such as \emph{internal model selection} (see~\cite{chen2023dynamical}), it would require a stratification of the singular locus of $f$, whose effective computation is studied in e.g.~\citep{helmer2023effective}.

\bibliographystyle{plainnat}
\bibliography{bib}

\appendix

\section{Hironaka's Resolution of Singularities}\label{app:resolution}

Here we recall the statement of Hironaka's resolution of singularities, following \cite{watanabe2007almost}.

\begin{theorem}[Resolution of singularities, \citep{hironaka1964resolution}]
\label{thm:ros}
    Let $f:\RR^n \to \RR$ be a non-constant real analytic function, such that $f(\mathbf{0})=0$. Let $V(f) = \{ x\in\RR^n \mid f(x)=0\}$. Then there exists $W \subseteq \RR^n$ an open set containing $\mathbf{0}$, $U$ an $n$-dimensional real analytic manifold and $\rho : U \to W$ is a real analytic map, such that the triplet $(W,U,\rho)$ satisfies the following conditions:

    \begin{enumerate}
        \item $\rho$ is a proper map.
        \item Writing $W_0 = V(f) \cap W$ and $U_0 = V(f\circ \rho) \cap U$, $\rho: U \setminus U_0 \to W \setminus W_0$ is a real analytic isomorphism.
        \item For any $p\in U_0$, there exists a local coordinate $(u_1, \ldots,u_n)$ of $U$ in which $p$ is the origin and $$f(\rho(u))=S u_1^{k_1} \cdots u_n^{k_n}$$
        where $S = 1$ or $S=-1$ is a constant, $k_1, \ldots, k_n$ are non-negative integers, and the Jacobian of $\rho$ satisfies $$\text{Jac}(\rho) = b(u) u_1^{h_1}\cdots u_n^{h_n},$$ where $b(u) \neq 0$ is a real analytic function not vanishing at $p$ and $h_1,\ldots,h_n$ are non-negative integers.
    \end{enumerate}
\end{theorem}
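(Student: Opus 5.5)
The plan is not to reprove resolution of singularities from scratch. Instead, I would deduce the stated normal-crossing form from Hironaka's embedded resolution theorem for real analytic functions \citep{hironaka1964resolution}, in its local form (principalization of the ideal generated by $f$ near $\mathbf{0}$). So the work is to extract conditions 1--3 from that theorem and carry out the small normalizations that give exactly the form used in Definition~\ref{rlctalg}.

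\textbf{Step 1: obtain $\rho$.} Choose a relatively compact open neighbourhood $W$ of $\mathbf{0}$ on which $f$ is analytic. Apply embedded resolution to the hypersurface $V(f)\cap W$ to get $\rho\colon U\to W$. This $\rho$ is a finite composition of blow-ups along closed smooth analytic centers. Each center is chosen inside the locus where the total transform of $V(f)$ fails to be a simple normal crossing divisor, and that locus lies inside the preimage of $W_0=V(f)\cap W$.
\begin{itemize}
\item \emph{Condition 1.} Each blow-up along a closed smooth center is proper, and a finite composition of proper maps is proper.
\item \emph{Condition 2.} Every center lies over $W_0$, so each blow-up is an analytic isomorphism off the preimage of its center. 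Hence $\rho$ restricts to an isomorphism $U\setminus U_0\to W\setminus W_0$.
\end{itemize}

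\textbf{Step 2: condition 3.} Fix $p\in U_0$. Hironaka's theorem gives local coordinates $(u_1,\ldots,u_n)$ centred at $p$ with
\[
f(\rho(u))=a(u)\,u_1^{k_1}\cdots u_n^{k_n},
\]
where $a$ is analytic and $a(p)\neq 0$.
\begin{itemize}
\item \emph{Some exponent is positive.} Since $p\in U_0$, $f\circ\rho$ vanishes at $p$, so some $k_i>0$.
\item \emph{Removing the unit.} Set $S=\operatorname{sign}a(p)$ and replace $u_i$ by $u_i|a(u)|^{1/k_i}$. Because $|a|^{1/k_i}$ is analytic and nonvanishing near $p$, this is a local analytic change of coordinates, and it gives $f(\rho(u))=S\,u_1^{k_1}\cdots u_n^{k_n}$.
\item \emph{Monomial Jacobian.} In local coordinates adapted to the exceptional divisor, the Jacobian of a single blow-up along a smooth center of codimension $c$ equals $u_j^{c-1}$ times a unit. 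Composing these, $\operatorname{Jac}(\rho)$ is a unit times a product of powers of the local equations of the exceptional divisors.
\end{itemize}

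\textbf{Main obstacle: simultaneity.} The remaining difficulty is to put $f\circ\rho$ and $\operatorname{Jac}(\rho)$ in monomial form in the \emph{same} coordinates. The exceptional divisors are part of the total transform of $V(f)$, since every center lies over $W_0$. Hironaka's theorem makes the whole total transform a simple normal crossing divisor, not only the strict transform. So all exceptional components through $p$ appear among the coordinate hyperplanes $\{u_i=0\}$, which gives $\operatorname{Jac}(\rho)=b(u)\,u_1^{h_1}\cdots u_n^{h_n}$ with $b(p)\neq 0$.

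The coordinate change in Step 2 rescales each $u_i$ by a unit. This multiplies $b$ by another unit but leaves the exponents $h_i$ unchanged. I expect the bookkeeping here, which rests on the ``total transform is SNC'' form of Hironaka, to be the only delicate point.

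In the two-dimensional setting of this paper one could avoid invoking full Hironaka. A direct argument would use iterated point blow-ups for plane curve germs over $\RR$: the complexified germ is resolved by finitely many point blow-ups, and these are defined over $\RR$ when the centers are real points.
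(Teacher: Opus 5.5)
Your proposal is correct and takes the same approach as the paper. The paper gives no proof: it recalls this result from Hironaka in Watanabe's formulation, and your reduction supplies the routine bookkeeping behind that formulation, namely principalizing $(f)$, absorbing the unit into a single coordinate $u_i$ with $k_i>0$, and noting that the exceptional divisors lie in the normal-crossing total transform so that $\operatorname{Jac}(\rho)$ is monomial in the same chart. One small caveat concerns only your optional two-dimensional aside: normal crossings of the complexified germ at a real point do not give the real form (e.g.\ $x^2+y^2$), so real points where conjugate complex branches meet must still be blown up.
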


\section{Polynomial Contact Equivalence}\label{appendix:contact_equivalence}

In this section, a more analytic definition of the local RLCT will be particularly useful. 

\begin{definition}[Analytic definition of the real log canonical threshold \citep{collins2018log}] \label{rlctan}
Let $f:\RR^n \to \RR$ be a real analytic function. The local real log canonical threshold at $x$ is defined as: 
$$ \rlct_{x}(f) = \sup \Bigl\{ s \in \RR_{\geq 0} \mid \exists \epsilon>0 \, \text{s.t.} \, \int_{B_\epsilon(x)} \lvert f(x) \rvert^{-s} dx < \infty \Bigr\}$$
\end{definition}

Recall that (Section~\ref{par:RLCT}) the local learning coefficient of a parametric model $p(x\mid\theta)$ at a parameter $\theta^*$ is the local RLCT of the Kullback-Leibler distance $K(\theta) = D_{KL}(p(x\vert\theta) \Vert q(x))$ at $\theta^*$, where $q(x)$ is the true data generating distribution. $K(\theta)$ is not necessarily a polynomial function, and we cannot directly apply Algorithm~\ref{alg:RLCT_main}. A workaround is to find a polynomial $H(\theta) \in \RR[\theta]$, equivalent to $K(\theta)$ in the following sense.

\begin{definition}[Contact equivalence {\cite[Definition 5.6.5]{ruas2020basics}}] Two real functions $f,g:\RR^n\to\RR$ are contact equivalent a point $x_0 \in \RR^n$ if there exists a neighbourhood $\mathcal{U}_{x_0}$ of $x_0$ in $\RR^n$ such that, there exists positive constants $c_1,c_2>0$, such that for all $x \in \mathcal{U}_{x_0}$, we have:
$$ c_1 \lvert g(x) \rvert \leq \lvert f(x) \rvert \leq c_2 \lvert g(x) \rvert$$
and $f(x)g(x)\geq 0$. We write $f(x) \sim g(x)$.

\end{definition}

Contact equivalence at a point $x_0$ preserves the local real log canonical threshold at $x_0$.

\begin{proposition}\label{prop:rlct_contact_invariant}
Let $f,g:\RR^n \to \RR$ s.t. $f \sim g$ at $x_0$. Then, $\rlct_{x_0} (f) = \rlct_{x_0} (g)$.
\end{proposition}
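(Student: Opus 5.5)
The plan is to use the analytic characterization of the local RLCT (Definition~\ref{rlctan}), which only involves integrability of $\lvert f\rvert^{-s}$ near $x_0$. In that form, contact equivalence reduces to a comparison of integrands, so no resolution of singularities is needed.

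Let $\mathcal{U}_{x_0}$ and $c_1,c_2>0$ be as in the definition of $f\sim g$. Then $c_1\lvert g(x)\rvert\le \lvert f(x)\rvert\le c_2\lvert g(x)\rvert$ on $\mathcal{U}_{x_0}$. One consequence is that $f$ and $g$ have the same zero set in $\mathcal{U}_{x_0}$; the sign condition $fg\ge0$ is not even needed for this argument. For any $s\ge 0$ and any $x\in\mathcal{U}_{x_0}$ with $g(x)\neq 0$, we get
\[
c_2^{-s}\lvert g(x)\rvert^{-s}\le \lvert f(x)\rvert^{-s}\le c_1^{-s}\lvert g(x)\rvert^{-s}.
\]
On the common zero set both integrands equal $+\infty$. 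The two inequalities therefore hold pointwise as inequalities in $[0,\infty]$ on all of $\mathcal{U}_{x_0}$.

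Next, fix $s\ge0$ and suppose there is $\epsilon>0$ with $\int_{B_\epsilon(x_0)}\lvert g\rvert^{-s}<\infty$. Choose $\epsilon'\le\epsilon$ such that $B_{\epsilon'}(x_0)\subseteq\mathcal{U}_{x_0}$, which is possible because $\mathcal{U}_{x_0}$ is a neighbourhood of $x_0$. By monotonicity of the integral of nonnegative functions, $\int_{B_{\epsilon'}(x_0)}\lvert g\rvert^{-s}<\infty$. The right-hand inequality above then gives $\int_{B_{\epsilon'}(x_0)}\lvert f\rvert^{-s}\le c_1^{-s}\int_{B_{\epsilon'}(x_0)}\lvert g\rvert^{-s}<\infty$. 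So the admissible set for $g$ in Definition~\ref{rlctan} is contained in that for $f$. The left-hand inequality gives the reverse inclusion by the same argument. The two sets coincide, hence so do their suprema, and $\rlct_{x_0}(f)=\rlct_{x_0}(g)$.

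The only step needing care is the compatibility of Definition~\ref{rlctan} with Definition~\ref{rlctalg}. For analytic $f$ and $g$ this equivalence is standard: the integral $\int\lvert f\rvert^{-s}$ is computed in resolved coordinates, where it becomes a product of one-dimensional integrals $\int u^{h_i-k_is}\,du$, finite exactly when $s<(h_i+1)/k_i$. I will invoke that equivalence, following \cite{collins2018log}.

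The proposition itself should be read with the analytic definition. This matters because in the application $K(\theta)$ is analytic, $H(\theta)$ is polynomial, and both RLCTs are then well defined by either definition. A small technical point is that the supremum is taken over a set containing $s=0$, so the set is nonempty. I expect no real obstacle beyond stating this equivalence of definitions cleanly.
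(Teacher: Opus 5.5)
Your proposal is correct and follows essentially the same route as the paper. Both arguments use the integrability characterization of $\rlct_{x_0}$ (Definition~\ref{rlctan}) and the two-sided bound $c_1\lvert g\rvert\le\lvert f\rvert\le c_2\lvert g\rvert$ to compare $\int\lvert f\rvert^{-s}$ with $\int\lvert g\rvert^{-s}$ on small balls, which gives inclusion of the admissible sets in both directions. Your version is somewhat more careful than the paper's: you work with absolute values instead of assuming $f,g>0$, you shrink the ball into $\mathcal{U}_{x_0}$ explicitly, you treat the common zero set, and you note that the sign condition $fg\ge 0$ is never used.
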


\begin{proof}
For simplicity, let $x_0=0$ be the origin in $\RR^n$ and let $f,g:\RR^n\to\RR$ be positive functions such that $f \sim g$. Let $\rlct_0(f)=\lambda$. Since $f \sim g$, there exists a positive constant $c$ such that $f(x)\leq cg(x)$ for all $x$ in some neighborhood of the origin. For all $s\in(0,\lambda)$, there exists $\epsilon>0$ sufficiently small such that $\infty > c^s\int_{B_{\epsilon}(0)} f(x)^{-s}dx\geq \int_{B_{\epsilon}(0)} g(x)^{-s}dx $ and $g(x)^{-s}$ is integrable. Therefore, $$\Bigl\{ s \in \RR_{\geq 0} \mid \exists \epsilon>0 \, \text{s.t.} \, \int_{B_\epsilon(0)}  f(x)^{-s} dx < \infty \Bigr\} \subseteq \Bigl\{ s \in \RR_{\geq 0} \mid \exists \epsilon>0 \, \text{s.t.} \, \int_{B_\epsilon(0)}  g(x)^{-s} dx < \infty \Bigr\}$$ and hence $\rlct_0(f)\leq \rlct_0(g)$. Now, using the left side inequality of the equivalence relation, we find in the same fashion that $\rlct_0(f) \geq \rlct_0(g)$, therefore they must be equal.
\end{proof}

\begin{remark}
    The real log canonical threshold is an invariant of a broader type of equivalence of functions, namely \textit{bi-Lipschitz $\mathcal{K}$ equivalence} \cite[Theorem 7.3]{bivia2016mixed}. Two function germs (i.e. defined locally around the origin) $f,g: (\RR^n,0) \to (\RR,0)$ are said to be bi-Lipschitz $\mathcal{K}$ equivalent if there exists a bi-Lipschitz homeomorphism $\varphi: (\RR^n,0) \to (\RR^n,0)$ and a bi-Lipschitz homeomorphism $\Phi: (\RR^n \times \RR,0) \to (\RR^n \times \RR,0), (x,y) \mapsto (\varphi(x),\phi(x,y))$, such that $\Phi(\RR^n \times \{0\})= \RR^n \times \{0\}$ and $\phi(x,f(x))=g(\varphi(x))$ for all $x$ in a neighbourhood of the origin. A weaker equivalence is that of bi-Lipschitz $\mathcal{C}$ equivalence, which requires that $\varphi = \text{id}$. In \cite[Theorem 5.6.7]{ruas2020basics}, it is shown that two Lipschitz functions are of the same contact if and only if they are bi-Lipschitz $\mathcal{C}$ equivalent.

\end{remark}

The following proposition will be used to construct a polynomial equivalent to the Kullback-Leibler distance of the model, when the latter can be written in a specific form.  

\begin{proposition}[\cite{aoyagi2024consideration}, Theorem $1$]\label{prop:aoyagi}
Let $x \in \RR^n$ and let $w \in \RR^m$. Using multi-index notation, write $x^{\bm{\alpha}} = x_1^{\alpha_1}\cdot\ldots\cdot x_n^{\alpha_n}$. 
Let \[h(x,w) = \sum_{0 \leq \abs{\bm{\alpha}} \leq H} \tilde{h}_{\bm{\alpha}}(w)x^{\bm{\alpha}}\] be a polynomial in $x$, where the coefficients $\tilde{h}_{\bm{\alpha}}(w)$ are continuous functions. Let $q(x)$ be a positive, continuous function on $X \subset \RR^n$, such that $\int_Xq(x)dx > 0$. Define

\[
K(w) = \int_X h^2(x,w)q(x)dx.
\]

Then $K(w)$ is contact equivalent to the sum of the squared coefficients of $h(x,w)$:

\[
K(w) \sim H(w) = \sum_{0 \leq \abs{\bm{\alpha}} \leq H} \tilde{h}^2_{\bm{\alpha}}(w).
\]    
\end{proposition}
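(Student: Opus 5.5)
We have to prove Proposition~\ref{prop:aoyagi}: $K(w)=\int_X h^2(x,w)q(x)\,dx$ is contact equivalent near a point $w_0$ to $H(w)=\sum_{\bm\alpha}\tilde h_{\bm\alpha}(w)^2$. The approach is to read $K$ as a quadratic form in the coefficient vector. Write $N$ for the number of multi-indices with $\abs{\bm\alpha}\le H$, let $v(w)=(\tilde h_{\bm\alpha}(w))_{\bm\alpha}\in\RR^N$, and let $m(x)=(x^{\bm\alpha})_{\bm\alpha}$. Then $h(x,w)=\langle v(w),m(x)\rangle$, and expanding the square gives
\[
K(w)=v(w)^{\top} G\, v(w),\qquad G_{\bm\alpha\bm\beta}=\int_X x^{\bm\alpha+\bm\beta}q(x)\,dx .
\]
Since $H(w)=\lVert v(w)\rVert^2$, the whole statement reduces to showing that the Gram matrix $G$ is symmetric positive definite. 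Once that is known, $\lambda_{\min}(G)\lVert v\rVert^2\le v^\top G v\le \lambda_{\max}(G)\lVert v\rVert^2$ yields the two-sided bound with $c_1=\lambda_{\min}(G)$ and $c_2=\lambda_{\max}(G)$. The sign condition $K H\ge0$ holds trivially because both functions are nonnegative. The inequalities hold for every $w$, so any neighbourhood works, and the continuity of the $\tilde h_{\bm\alpha}$ is needed only for $K$ and $H$ to be continuous functions.

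\textbf{Finiteness of $G$.} For the moment integrals to converge I will assume, as the statement implicitly does, that $X$ is compact, or more generally that $q$ has finite moments up to order $2H$. This is the natural reading in the regression setting, where $q(x)$ is a data density.

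\textbf{Positive definiteness, the main point.} Clearly $c^\top G c=\int_X \bigl(\sum_{\bm\alpha} c_{\bm\alpha}x^{\bm\alpha}\bigr)^2 q(x)\,dx\ge 0$. If this vanishes, then since $q>0$ is continuous, the continuous polynomial $p_c(x)=\sum_{\bm\alpha} c_{\bm\alpha}x^{\bm\alpha}$ vanishes on $X$ except on a null set, and by continuity it vanishes on the support of $q\,dx$ restricted to $X$. I then need $X$ not to lie in the zero set of any nonzero polynomial of degree at most $H$. The hypothesis $\int_X q>0$ only guarantees that $X$ has positive Lebesgue measure. That is enough, because the zero set of a nonzero polynomial has Lebesgue measure zero, which I would prove by induction on $n$ using Fubini. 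Hence $p_c=0$ almost everywhere on a set of positive measure forces $p_c\equiv0$, that is, $c=0$.

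I expect this step to be the only real obstacle. It is where one must check that the positivity and positive-measure hypotheses suffice, instead of needing $X$ to have nonempty interior. Once it is settled, the eigenvalue bounds finish the proof immediately.
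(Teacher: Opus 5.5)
Your proof is correct. The paper gives no proof of its own here; it only cites Aoyagi's Theorem~1. Your route is the standard argument for this fact: write $K(w)=v(w)^\top G\,v(w)$, then get positive definiteness of the moment matrix $G$ from the fact that a nonzero polynomial cannot vanish almost everywhere on a set of positive Lebesgue measure. The finite-moment assumption you add is implicit in the statement, since it is what makes $K$ finite. You are also right that positivity of $q$ together with $\int_X q>0$ suffices, so neither interior points of $X$ nor continuity of $q$ is needed.
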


\section{Geometry of the Newton polygon} \label{app:newton_polygon}

In this section, $\KK$ is a field of characteristic zero.
We give a short presentation of Newton polygons and Puiseux series; we closely follow \cite[Chapter~1]{casas2000singularities}, where we also refer the reader for further details. 
Consider a polynomial $f(x,y)= \sum_{\alpha,\beta=0}^\infty c_{\alpha\beta} x^\alpha y^\beta \in \KK[x,y]$, such that $d_y=\deg_yf$ and $d_x=\deg_xf$. By convention, we define the degree of the zero polynomial to be $\deg0=-\infty$. A term of $f(x,y)$ is a monomial scaled by a non-zero constant, such as $c_{\alpha\beta} x^\alpha y^\beta$. An exponent of $f(x,y)$ is a tuple $(\alpha,\beta)$ such that $c_{\alpha\beta} \neq 0$. We associate to $f$ a convex polygon, $\newton(f)$, that encodes geometric information about the exponents that occur in $f$. 

\begin{definition}(Newton Polygon)
Let $D(f)=\{(\alpha,\beta)\mid c_{\alpha\beta}\neq0\} \subseteq \NN^2$ be the Newton diagram of $f(x,y)$.
We obtain the Newton polygon, $\newton(f)$, of $f$ by attaching a copy of the positive quadrant $\RR_{\geq 0}^2$ to each point of $D(f)$ and considering the convex hull of the union. In particular,
we have 
\[
\newton(f) \coloneqq \text{Conv}( D(f)\oplus \RR_{\geq0}^2) .
\]
\end{definition}

\begin{comment}
Let $\alpha^* \in \NN$ be the largest integer such that $x^\alpha$ divides $f(x,y)$ and let $\beta \in\NN$ be the largest integer such that $y^\beta$ divides $f(x,y)$. 
\end{comment}

 Let $\bm{p}_i = (\alpha_i,\beta_i), i\in[K+1]$ be the vertices of $\newton(f)$, ordered from left to right. Let $\Delta_j$, for $0 \leq j \leq K+1$, denote the facets of $\newton(f)$, ordered from left to right; then, $\Delta_0$ and $\Delta_{K+1}$ are the two non-compact facets (in our case, half-lines) of $\newton(f)$. 
 Let $\Delta_i = \left(\bm{p}_i,\bm{p}_{i+1}\right), i\in[K]$ be the compact facets (in our case, segments) of $\newton(f)$ with vertices $\bm{p}_i$ and $\bm{p}_{i+1}$. 
For a compact facet $\Delta_i$, the height and the width  are  $h_i=\beta_{i-1}-\beta_i$ and $w_i=\alpha_i -\alpha_{i-1}$, respectively. The slope of $\Delta_i$ is given by $s_i = - \frac{q_i}{p_i}$, for $p_i$ and $q_i$ two co-prime positive integers. 
We call $wt_i=(p_i,q_i)$ the weight of a compact facet $\Delta_i$. The length of a compact facet $\Delta_i$ is the integer $l_i=\frac{h_i}{q_i}=\frac{w_i}{p_i}$. For the non-compact facets, we set $l_0=l_{K+1}=\infty$. Finally, let $\bar{\alpha}$ be the largest integer such that $x^{\bar{\alpha}}$ divides $f(x,y)$ and $\bar{\beta}$ be the largest integer such that $y^{\bar{\beta}}$ divides $f(x,y)$. Then, the height and the width of the Newton polygon of $f$ are $h_f=  \beta_0 - \bar{\beta}$ and $w_f = \alpha_k - \bar{\alpha}$, respectively. An illustration of the Newton polygon of a given polynomial is given in Figure~\ref{fig:newton_polygon}.

\begin{figure}[htp]
    \centering
    \includegraphics[width=7cm]{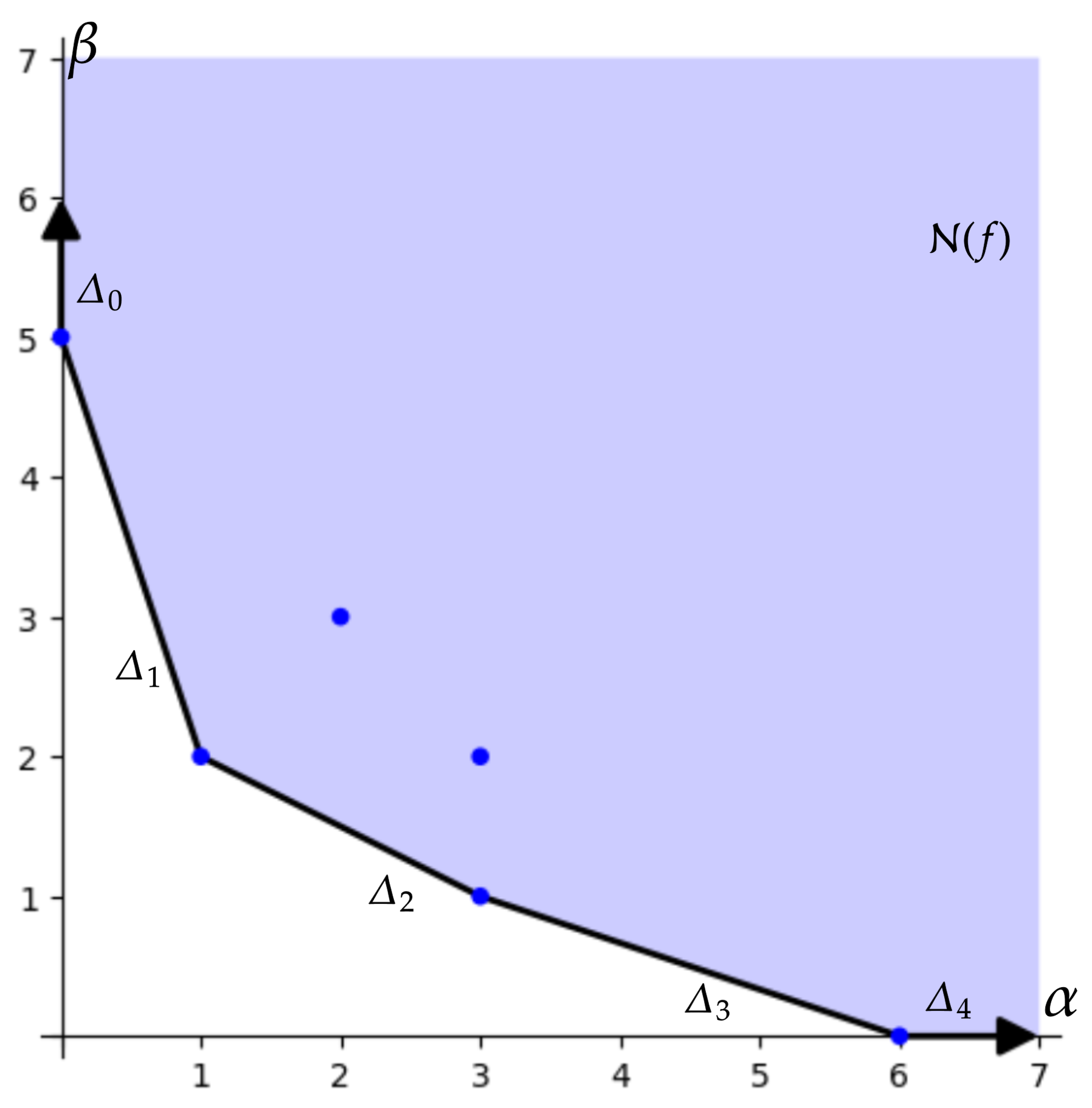} 
    \caption{Newton polygon $\newton(f)$ of $f(x,y) = y^5 + y^2x + yx^3+x^6+y^2x^3+y^3x^2$, with $K=3$ compact faces.}
    \label{fig:newton_polygon}
\end{figure}

For each compact facet $\Delta_i$ of weight $wt_i=(p_i,q_i)$, we can consider the restriction of $f(x,y)$ to this facet:
    \begin{equation}
    f_{\Delta_i}(x,y) \coloneqq \sum_{(\alpha,\beta) \in D(f) \cap \Delta_i} c_{\alpha\beta}x^\alpha y^\beta \in \KK[x,y].
    \end{equation}

The exponents $(\alpha,\beta)$ of the restriction $f_{\Delta_i}$, 
all lie on the lattice points of the segment supporting $\Delta_i$. Hence, we have that $\alpha\, p_i + \beta \, q_i = k_i$, 
for some integer $k_i$, where $-\tfrac{q_i}{p_i}$ is the slope of $\Delta_i$. In other words, it is a $(p_i,q_i)$-weighted homogeneous polynomial. Therefore, $f_{\Delta_i}$  corresponds to a univariate polynomial, which we call the \textit{facet polynomial} of $\Delta_i$:

\begin{definition}[Facet polynomial]\label{def:ufp}
    Let $f(x,y) \in \KK[x,y]$, with Newton polygon $\newton(f)$. Take a compact facet $\Delta_i$ of weight $(p_i,q_i)$ and length $l_i$. Consider the restriction $f_{\Delta_i}(x,y) = \sum_{(\alpha,\beta) \in D(f) \cap \Delta_i} c_{\alpha\beta}x^\alpha y^\beta$ as above. Then, $$f_{\Delta_i}(x,y) = x^{\alpha_{i+1}}y^{\beta_{i+1}}\sum_{t= 0}^{l_i} c_t (x^{-p_i}y^{q_i})^t,$$where $c_t = c_{\alpha\beta}$ for $(\alpha,\beta) = (\alpha_{i+1}-p_it,\beta_{i+1}+q_it)$. We call the polynomial \[F_{\Delta_i}(z)\coloneq\sum_{t=0}^{l_i}c_tz^t,\] the \textit{facet polynomial} of $\Delta_i$ in $\newton(f)$. Note that if $\Delta$ is a non-compact facet then $f_\Delta(x,y)=x^{\bar{\alpha}}a(y)$ or $f_\Delta(x,y)=y^{\bar{\beta}}b(y)$, when $\Delta$ is a vertical or horizontal facet, respectively. In this case, we let $F_\Delta(z)=z^{\bar{\alpha}}$ or $F_\Delta(z)=z^{\bar{\beta}}$, respectively.
\end{definition}

Note that, for any facet $\Delta_i$ of $\newton(f)$, $F_{\Delta_i}(z)\in\KK[z]$. Moreover, if $\Delta_i$ is compact, then $\deg(F_{\Delta_i}(z))=l_i$, the length of $\Delta_i$. Let $r_i$ be the number of distinct roots of $F_{\Delta_i}$. We denote by $a_{ij}$, for $j\in[r_{i}]$, the distinct roots of $F_{\Delta_i}$, and by $m_{ij}$, for $j\in[r_{i}]$, their respective multiplicities.

\begin{definition}[Main face]
    Let $\mathcal{D} = \{(\alpha,\beta) \in \RR^2_{\geq 0}\mid \alpha=\beta \}$ be the diagonal of the quadrant $\RR^2_{\geq 0}$. The \textit{main face} of $f$ is the face of the boundary $\partial\newton(f)$ of the Newton polygon, intersected by $\mathcal{D}$.
\end{definition}

\begin{definition}[Newton distance]
    Let $\bm{p} = (\delta,\delta) \in \QQ_{\geq0}^2$ be the point of intersection of $\mathcal{D}$ with $\partial\newton(f)$. Then, $\delta_f \coloneq \delta \in \QQ_{\geq0}$ is the \textit{Newton distance} of $f$.
\end{definition}

For any $f(x,y) \in \KK[x,y]$, the main face of $f$ is either a vertex, a compact facet or a non-compact facet;
the latter are a segment or a half-line, respectively. In Fig.~\ref{fig:Netwon-polygon-w-diagonal}, $\Delta_2$ is the main face; in this case a segment.
The Newton distance is $\tfrac{5}{3}$.

\begin{figure}[htp]
    \centering
    \includegraphics[width=7cm]{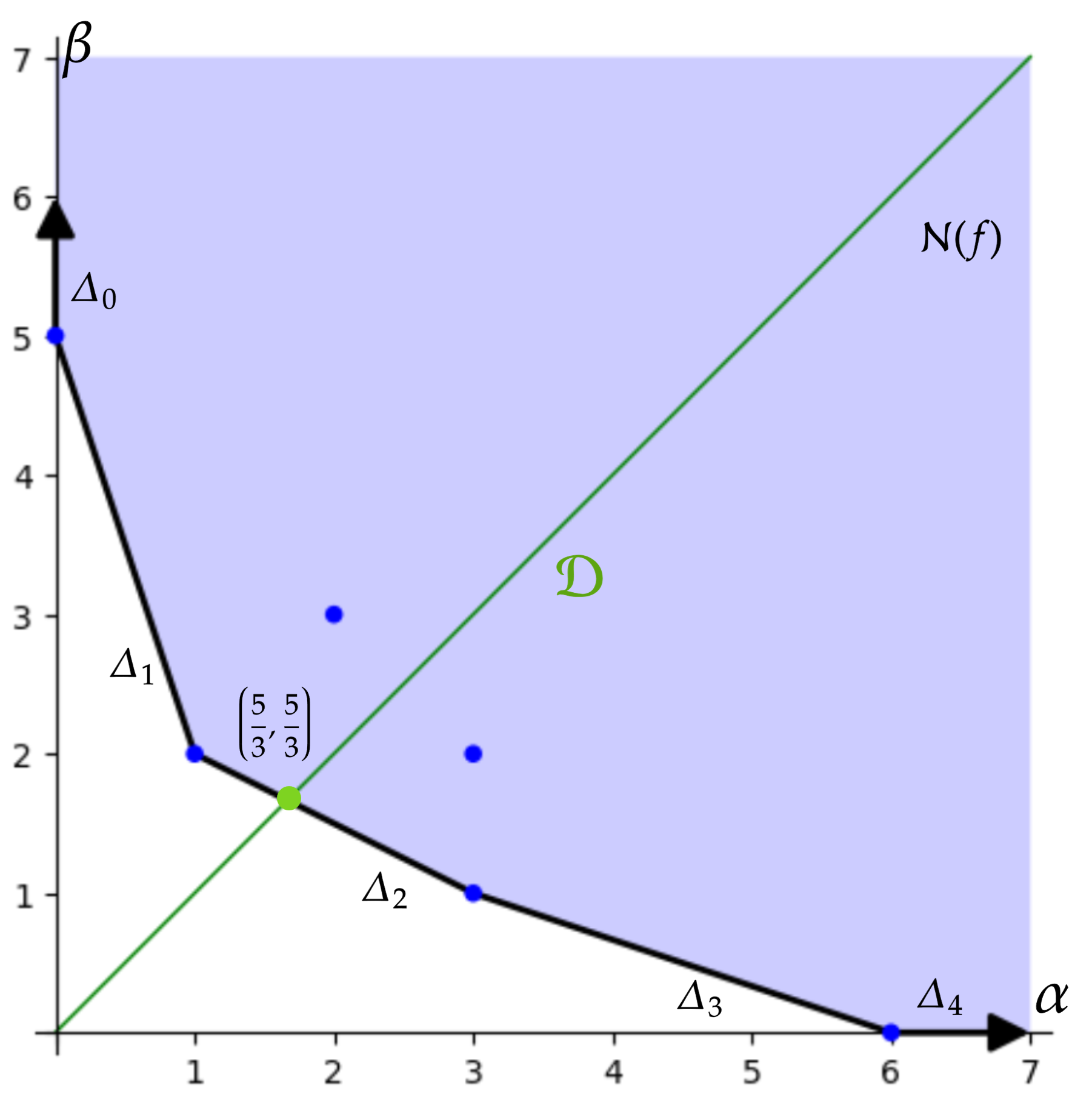} 
    \caption{Newton polygon $\newton(f)$ of $f(x,y) = y^5 + y^2x + yx^3+x^6+y^2x^3+y^3x^2$, with $K=3$ compact faces. The main face of $\newton(f)$ is $\Delta_2$, and the Newton distance of $f$ is $\delta_f = \frac{5}{3}$.}
    \label{fig:Netwon-polygon-w-diagonal}
\end{figure}

If a polynomial factors, then its Newton polygon is related to the Newton polygons of its factors, through the Minkowski sum operation.
The following classical lemma illustrate this.

\begin{lemma}\label{lem:minkowski}
    Let $f_1,f_2$ and $f$ be polynomials in $\KK[x,y]$ such that $f=f_1f_2$. Then, $\newton(f)=\newton(f_1) \oplus \newton(f_2)$. Furthermore, $\newton(f) \subseteq \newton(f_i)$ and $\delta_{f_i} \leq \delta_f$ for $i=1,2$, 
\end{lemma}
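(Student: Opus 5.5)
The plan is to first establish the Minkowski sum identity and then deduce the inclusion and the inequality on Newton distances from it.

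\textbf{Minkowski sum.} The key observation is that for any weight vector $\bm{w}=(w_1,w_2)\in\RR_{>0}^2$, the initial form of $f$ with respect to $\bm{w}$ equals the product of the initial forms of $f_1$ and $f_2$. Here the initial form is the sum of the terms of $f$ whose exponents minimize $\langle \bm{w},(\alpha,\beta)\rangle$. The product of initial forms is nonzero because $\KK[x,y]$ is an integral domain.

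From this I get two consequences:
\[
\min_{D(f)}\langle \bm{w},\cdot\rangle=\min_{D(f_1)}\langle \bm{w},\cdot\rangle+\min_{D(f_2)}\langle \bm{w},\cdot\rangle,
\]
and, more precisely, the face of $\newton(f)$ minimizing $\bm{w}$ is the Minkowski sum of the corresponding faces of $\newton(f_1)$ and $\newton(f_2)$. Boundary directions with $w_1=0$ or $w_2=0$ are handled by the same argument, or by continuity.

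The support functions of $\newton(f)$ and of $\newton(f_1)\oplus\newton(f_2)$ therefore agree on $\RR_{\geq0}^2$. Both sets are closed, convex and stable under adding $\RR_{\geq0}^2$, so each equals the intersection of the half-planes $\{\langle \bm{w},\cdot\rangle\geq h(\bm{w})\}$ over $\bm{w}\in\RR^2_{\geq0}$. Hence the two sets are equal.

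I expect the main obstacle to be the cancellation question. Directly, $D(f)\subseteq D(f_1)+D(f_2)$ gives only $\newton(f)\subseteq\newton(f_1)\oplus\newton(f_2)$. The reverse inclusion requires showing that vertices of the Minkowski sum actually survive in $f$. This is exactly what the initial-form argument provides: a vertex of the sum is the unique sum of vertices minimizing a generic $\bm{w}$, so its coefficient is a product of two nonzero coefficients.

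\textbf{Inclusion.} Since $\bm{0}\in\RR^2_{\geq0}$ and the exponents are nonnegative, $\newton(f_2)\subseteq\RR^2_{\geq0}$. Therefore
\[
\newton(f)=\newton(f_1)\oplus\newton(f_2)\subseteq\newton(f_1)\oplus\RR^2_{\geq0}=\newton(f_1),
\]
where the last equality holds because $\newton(f_1)$ is stable under adding the quadrant. The same argument applies to $f_2$.

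\textbf{Newton distance.} The point $(\delta_f,\delta_f)$ lies in $\newton(f)\subseteq\newton(f_i)$. The diagonal meets $\newton(f_i)$ in the half-line $\{(t,t): t\geq\delta_{f_i}\}$, since $\newton(f_i)$ is convex and upward-closed. It follows that $\delta_f\geq\delta_{f_i}$.

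This yields both the inclusion and the inequality of the lemma.
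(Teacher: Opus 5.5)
Your proof is correct. The second half, namely $\newton(f)\subseteq\newton(f_i)$ via $\newton(f_j)\subseteq\RR^2_{\geq0}$ followed by comparing the diagonal points, is essentially the paper's argument.

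The difference is in the Minkowski identity. The paper does not prove the no-cancellation statement itself. It cites Ostrowski's theorem $\Conv(D(f_1f_2))=\Conv(D(f_1))\oplus\Conv(D(f_2))$ and transfers it to the Newton polygon by formal set manipulations: $\Conv(A\oplus B)=\Conv(A)\oplus\Conv(B)$ and $\RR^2_{\geq0}\oplus\RR^2_{\geq0}=\RR^2_{\geq0}$. You instead prove the needed statement directly:
\begin{itemize}
\item initial forms with respect to nonnegative weights multiply, because $\KK[x,y]$ is a domain;
\item hence the lower support functions add on $\RR^2_{\geq0}$;
\item a closed, convex, upward-closed set is determined by those values.
\end{itemize}

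The paper's route is shorter given the citation. Yours is self-contained and uses only the directions the Newton polygon actually sees. As a side benefit, it works verbatim for power series and analytic germs, where the full convex-hull statement fails: for example $(1-x)\sum_{k\geq0}x^k=1$. That matters because the paper also applies Newton polygons to analytic functions.

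Two small remarks:
\begin{itemize}
\item The closedness of $\newton(f_1)\oplus\newton(f_2)$, which you assert without proof, follows from writing $\newton(f_i)=\Conv(D(f_i))\oplus\RR^2_{\geq0}$, a compact set plus a closed cone.
\item The inclusion and the inequality $\delta_{f_i}\leq\delta_f$ need only the trivial inclusion $\newton(f)\subseteq\newton(f_1)\oplus\newton(f_2)$, which comes from $D(f)\subseteq D(f_1)\oplus D(f_2)$. So the cancellation argument is needed only for the equality itself.
\end{itemize}
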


\begin{proof}
    This result for the \textit{convex hull} of the Newton diagram goes back to \cite{ostrowski1975multiplication}. A simple proof can be found in \cite[Lemma~2.1]{gao2001absolute}. To adapt it to the \textit{lower convex hull} case, notice that Minkowski addition is associative and commutative (from the associativity and commutativity of Euclidean vector addition). Notice also that $\RR^2_{\geq 0} \oplus \RR^2_{\geq 0} = \RR^2_{\geq 0}$ and recall that for any sets $A \subseteq \RR^n$ and $B \subseteq \RR^n$, $\text{Conv}(A \oplus B) = \text{Conv}(A)\oplus\text{Conv}(B)$. Then, we have:

    \begin{equation}
    \begin{split}
        \newton(f) &= \text{Conv}(D(f) \oplus \RR^2_{\geq 0}) \\
        &= \text{Conv}(D(f_1f_2) \oplus \RR^2_{\geq 0}) \\
        &= \text{Conv}(D(f_1f_2)) \oplus \RR^2_{\geq 0} \\
        &= (\text{Conv}(D(f_1)) \oplus \text{Conv}(D(f_2)) ) \oplus \RR^2_{\geq 0} \\
        & = (\text{Conv}(D(f_1) \oplus \RR^2_{\geq 0} )\oplus (\text{Conv}(D(f_2)  \oplus \RR^2_{\geq 0})) \\
        &= \newton(f_1)\oplus\newton(f_2).
    \end{split}
    \end{equation}

For the second part of the lemma, consider a point $\bm{p} \in \newton(f)$. Then, there exists $\bm{p_1} \in \newton(f_1)$ and $\bm{p_2} \in \newton(f_2)$ such that $\bm{p} = \bm{p_1}+ \bm{p_2}$. Since $\bm{p_i} \in \RR^2_{\geq 0}$ for $i=1,2$, we have that $\bm{p} \in \bm{p_i} \oplus \RR^2_{\geq 0} \subseteq \newton(f_i)$ for $i=1,2$. Since $\newton(f) \subseteq\newton(f_i)$, $\mathcal{D}\cap\newton(f) \subseteq \mathcal{D}\cap\newton(f_i)$, and hence $\delta_{f_i}\leq \delta_f$, for $i=1,2$.

\end{proof}

The following lemma relates the Newton distance of $f$ 
with the Newton distance of $f^m$, for some positive integer $m$.
\begin{lemma} \label{lem:dilation}
    Let $f(x,y) \in \KK[x,y]$ and $m\in\NN_{>0}$. Then, $\delta_{f^m} = m\delta_f$.
\end{lemma}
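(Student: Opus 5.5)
We prove Lemma~\ref{lem:dilation} by reducing it to the Minkowski sum description in Lemma~\ref{lem:minkowski}. Applying Lemma~\ref{lem:minkowski} inductively to $f^m = f^{m-1}\cdot f$ gives
\[
\newton(f^m)=\newton(f)\oplus\cdots\oplus\newton(f) \quad (m \text{ summands}).
\]
Since $\newton(f)$ is convex, this $m$-fold Minkowski sum equals the dilation $m\,\newton(f)=\{m\bm{p}\mid \bm{p}\in\newton(f)\}$.

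To justify the convexity step, take $\bm{p}_1,\ldots,\bm{p}_m\in\newton(f)$. By convexity, $\frac1m\sum_i\bm{p}_i\in\newton(f)$, so $\sum_i\bm{p}_i\in m\,\newton(f)$. Conversely, $m\bm{p}=\bm{p}+\cdots+\bm{p}$ for any $\bm{p}\in\newton(f)$. Hence $\newton(f^m)=m\,\newton(f)$.

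The dilation by $m>0$ is a linear homeomorphism of $\RR^2$. It therefore maps $\partial\newton(f)$ onto $\partial(m\,\newton(f))$, and it preserves the diagonal $\mathcal{D}$. Thus $\mathcal{D}\cap\partial\newton(f^m)=m\,(\mathcal{D}\cap\partial\newton(f))$. If $(\delta_f,\delta_f)$ is the intersection point for $f$, then $(m\delta_f,m\delta_f)$ is the intersection point for $f^m$, so $\delta_{f^m}=m\delta_f$.

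The delicate step is the first one, since Lemma~\ref{lem:minkowski} is stated for products of two polynomials. The induction is straightforward once we note that $f^{m-1}$ is again a polynomial. The identification of an iterated Minkowski sum with a dilation uses only convexity, which holds by definition of $\newton(f)$ as a convex hull.

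Finally, we must check that the diagonal meets the boundary in a single point, so that $\delta$ is well defined. This holds because $\newton(f)$ is a convex set invariant under adding $\RR^2_{\geq0}$. Any ray from the origin along $\mathcal{D}$ therefore enters the polygon exactly once, assuming $f\neq 0$. This property is clearly preserved under dilation.
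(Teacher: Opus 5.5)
Your proof is correct and follows essentially the same route as the paper's: Lemma~\ref{lem:minkowski} gives $\newton(f^m)$ as an $m$-fold Minkowski sum, convexity identifies this sum with $m\,\newton(f)$, and the dilation sends the diagonal's boundary point $(\delta_f,\delta_f)$ to $(m\delta_f,m\delta_f)$. You additionally prove the convexity identity and check that $\mathcal{D}$ meets the boundary in exactly one point; the paper delegates the first to a citation of Schneider and leaves the second implicit in the definition of $\delta_f$.
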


\begin{proof}
It holds 
$\newton(f^m)= \bigoplus_{i=1}^m \newton(f) = m\newton(f)$, where the first equality is from Lemma~\ref{lem:minkowski}. For the second equality, see, for example, \cite[Section~3.1]{schneider2013convex}. Suppose that $\mathcal{D}$ intersects $\partial\newton(f)$ at the point $\bm{p}=(\delta,\delta)$. Then, $\mathcal{D}$ intersects $\newton(f^m)$ at the point $m\cdot\bm{p}=(m\delta,m\delta)$. Therefore, $\delta_{f^m} = m\delta_f$ as required. 
\end{proof}

\section{Puiseux series} \label{app:yroots}

The classical theory of Newton-Puiseux expansions relates the geometry of Newton polygon to the roots of $f(x,y) \in \KK[x,y]$, when considered as a polynomial in $y$ (resp. $x$), with coefficients in $\KK[x]$ (resp. $\KK[y]$). Without loss of generality,  we consider in the sequel that polynomials in $\KK[x,y]$ are polynomials in $y$ with coefficients in $\KK[x]$. That is to say, we write $f(x,y) = \sum_{\alpha,\beta=0}^\infty (c_{\alpha\beta} x^\alpha) y^\beta \in (\mathbb{K}[x])[y]$. A $y$-root of $f(x,y)$ is an element $\phi(x)$ of the algebraic closure of $\KK[x]$, such that $f(x,\phi(x))=0$. It is not sufficient for $\phi(x)$ to be an element of the field of Laurent series $\bar{\KK}((x))$. For example, the polynomial $f(x,y) = y^2+x \in \KK[x,y]$ has two $y$-roots $\phi_{\pm}(x)=\pm \sqrt{x} = \pm x^{\frac{1}{2}}$, none of which have a Taylor expansion at the origin. Instead, $y$-roots are series which admit fractional exponents. The Puiseux Theorem \citep[Theorem 1.5.4]{casas2000singularities} asserts that the algebraic closure of $\KK((x))$ is contained in the field of Puiseux series $\puiseux{\bar{K}}{x}=\bigcup_{n\in\NN_{>0}} \mathbb{\bar{K}}((x^\frac{1}{n}))$. The field of Puiseux series is the union of all Laurent series over $\bar{\KK}$ with fractional exponents of bounded denominator. 

Let $\phi(x) = \sum_{i\geq l} b_i x^\frac{i}{n}$ be a Puiseux series. Similarly to the case of general power series, we define the order of $\phi(x)$ to be $\ord(\phi(x))=\frac{1}{n}{\min\{i \,\vert\, b_i \neq 0\}}$. If $\phi(x) \in \puiseux{\bar{\KK}}{x}$ has finitely many terms, we define the degree of $\phi(x)$ to be $\deg\phi(x)=\frac{1}{n}{\max\{i \,\vert\, b_i \neq0\}}$. By convention, we define the order of the zero series to be $\ord0=+\infty$. Equipped with the order, the field of Puiseux series is a \emph{valuation ring}.

If $\phi(x)$ is a $y$-root of $f(x,y)$ of strictly positive order, then 
we can compute its terms inductively (up to an arbitrary number of terms) by exploiting the geometry of the Newton polygon, following the Newton-Puiseux algorithm \cite[Section~1.4]{casas2000singularities}. The Newton polygon $\newton(f)$ characterizes the first order terms of the $y$-roots of strictly positive order of $f(x,y)$ in the following way. Suppose that $\newton(f)$ has $K$ compact faces. Let $\Delta_i$ be a face of $\newton(f)$ of weight $wt_i=(p_i,q_i)$ and let $a_{ij}$ be a root of $F_{\Delta_i}(z)$ of multiplicity $m_{ij}$ (see Definition \ref{def:ufp}). Let $\zeta_{q_i}$ be a $q_i$-th root of unity. Then, for all $(i, j, k) \in[K]\times[r_i]\times[q_i]$, there exist $m_{ij}$ $y$-roots $\phi(x)$ of strictly positive order, such that $\phi(x)=(\zeta_{q_i})^ka_{ij}x^{\frac{p_i}{q_i}}+\ldots$. Moreover, there exist $m_i = h_i=q_i\sum_{j=1}^{r_i}m_{ij}=q_il_i$ $y$-roots $\phi(x)$ of strictly positive order, such that $\ord(\phi)=\frac{p_i}{q_i}$. In total, there are  $m=\sum_{i=1}^K m_i=\sum_{i=1}^K h_i = h_f$ $y$-roots of strictly positive order. 

The largest denominator $n$ of the exponents of $\phi(x)$ is called the ramification index of $\phi(x)$. Let $\zeta_n$ be an $n$-th root of unity. Then, the map $\sigma_{\zeta_n}:\mathbb{\bar{K}}((x^\frac{1}{n}))\to\mathbb{\bar{K}}((x^\frac{1}{n})),\,  x^{\frac{1}{n}} \mapsto \zeta_n x^\frac{1}{n}$ is an automorphism of $\mathbb{\bar{K}}((x^\frac{1}{n}))$ over $\mathbb{\bar{K}}((x))$. Therefore, any Puiseux series $\phi(x)$ of ramification index $n$ has $n$ conjugates of the form $\sigma_{({\zeta_n})^j}(\phi(x)) = \sum_{i \geq l} b_i (\zeta_n)^{ij}x^{\frac{i}{n}}$ for $j \in [n]$.

A consequence of the Puiseux Theorem is the unique factorization of a polynomial $f(x,y)$ over $\bar{\KK}((x))$ and $\puiseux{\bar{\KK}}{x}$ \cite[Corollaries~1.5.5 and 1.5.6]{casas2000singularities}. For a set of conjugate $y$-roots $\{\phi_{ij}(x)\}_{i=1}^{n_j}$ of ramification index $n_j$, we let $f_j(x,y) = \prod_{i=1}^{n_j}(y-\phi_{ij}(x)) \in \bar{\KK}((x))[y]$. Note that $\deg_yf_j = n_j$. Let $f(x,y)$ be a polynomial of $y$-degree $d$. Let $n_1,\ldots,n_l$ be the set of ramification indices of the conjugacy classes of its $y$-roots. Then, there exists a unit $u \in \KK((x,y))$, such that $f(x,y)$ factors as:
\[
    f(x,y) = u \, x^r\prod_{j=1}^l f_j(x,y) = u \, x^r\prod_{i=1}^{d}(y-\phi_i(x)),
\]
where the first equality corresponds to irreducible elements of $\bar{\KK}((x))[y]$, and the second to the linear elements of $\puiseux{\KK}{x}[y]$. 
In addition, the sum of the ramification indices of the $y$-roots of $f$ equals the $y$-degree of $f$, that is to say,
$n_1 + \cdots+n_l=d_y$.

\subsection{Truncated Puiseux series}
For our purposes, we need to introduce a truncation of Puiseux series, which plays a central role in the analysis of Puiseux expansions of polynomials \citep{walsh2000polynomial, poteaux2021computing}. We use notation from \cite{poteaux2021computing}.

\begin{definition}[Truncation of Puiseux series]
Let $\phi(x) = \sum_{i=k}^\infty b_i x^\frac{i}{n} \in \puiseux{\bar{K}}{x}$ be a Puiseux series and let $\tau \geq k$. Then, the \textit{truncation} of $\phi(x)$ at $\tau \neq \infty$ is $\trunc{\phi(x)}{\tau} \coloneq \sum_{i \leq \tau} b_i x^\frac{i}{n}$.
If   $\tau=\infty$, then $\trunc{\phi(x)}{\infty} = \phi(x)$.  
\end{definition}
The singular part of a $y$-root $\phi(x)$ of a polynomial $f(x,y)$ is a truncation that contains the necessary information to characterize the singular locus at the origin. The computation of singular parts of the $y$-roots and its complexity have been discussed in several articles, e.g. \cite{duval1989rational, walsh2000polynomial,poteaux2021computing}.

\begin{definition}[Generalized multiplicity {\cite[Proof of Theorem 5]{phong1999growth}}]
Let $f(x,y) \in \KK[x,y]$ be a polynomial of degree $\deg_yf=d$. Let $\phi(x) \in \puiseux{\bar{K}}{x}$. Let $R_f = \{\phi_i(x)\}_{i=1}^d$ be the set of all (not necessarily distinct) $y$-roots of $f$, counted with multiplicity. 
We say that $\phi(x)$ has generalized multiplicity $e \leq d$ with respect to $R_f$,
if there exists exactly $e$ $y$-roots $\{\phi_{i_1},\ldots,\phi_{i_e}\} \subseteq R_f$  such that $\phi(x) = \trunc{\phi_{i_1}(x)}{\tau}=\ldots=\trunc{\phi_{i_e}(x)}{\tau}$ for some $\tau \in \mathbb{Z} \cup \{\mathbb{\infty}\}$. 
\end{definition}

Note that if $\tau = \infty$ then, $f(x,y)$ has a $y$-root $\phi(x)$ of multiplicity $m_\phi = e$.

\begin{definition}[Singular part of a $y$-root {\cite[Section~2]{walsh2000polynomial}}] \label{def:singpart}
Let $f(x,y) \in \KK[x,y]$. Let $\phi(x) \in R_f$ be a $y$-root of $f(x,y)$. Then, the \textit{singular part} $S_\phi (x)$ of $\phi(x)$ is the truncation 
\[
S_\phi (x)=\trunc{\phi(x)}{{\tau^*}}, 
\]
where $\tau^*$ is such that 
\[
\tau^* = \max\{\tau \in\mathbb{Z} \mid \exists \tilde{\phi}(x)\neq\phi(x) \in R_f \, \text{s.t.} \, \trunc{\tilde{\phi}(x)}{\tau}=\trunc{\phi(x)}{\tau}\}.
\]
We call  $\tau^*$ the regularity index of $\phi(x)$.
\end{definition}

In other words, the regularity index $\tau^*$ is the smallest truncation order which distinguishes $\phi(x)$ from the other distinct $y$-roots of $f(x,y)$.

\begin{example}
\label{ex:singparts}
    Singular parts of $y$-roots can have fractional exponents. As an example, consider the Puiseux series $\phi(x)=x+x^{\frac{3}{2}}+x^{\frac{7}{4}}$. The $4$-th roots of unity are $\zeta^0=1,\zeta^1=i,\zeta^2=-1$ and $\zeta^3=-i$. We let $\phi_i=\sigma_{\zeta^i}(\phi(x))$. Then, $f(x,y) = \prod_{i=0}^3(y-\phi_i(x))$ is a polynomial in $\KK[x,y]$. The singular parts of this polynomial are $S_{\phi_0}(x)=S_{\phi_3}(x)=x+x^{\frac{3}{2}}$ and $S_{\phi_1}(x)=S_{\phi_2}(x)=x-x^\frac{3}{2}$, and their regularity indices are equal to $6$.
\end{example}

We introduce a further subdivision of the singular part, which only retains the initial terms of the singular part with integer exponents.

\begin{definition}[Polynomial part of a $y$-root]
    \label{def:poly-part}
Let $\phi(x)$ be a $y$-root of $f(x,y)$ and let $S_\phi$ be its singular part. The polynomial part $P_\phi(x)$ of $\phi(x)$ is the largest polynomial contained in its singular part. That is 
\[
P_\phi(x) = \trunc{S_\phi(x)}{d} ,
\]
where $d = \max\{\tau \in \ZZ \mid \trunc{S_\phi(x)}{\tau} \in \KK[x,y]\}$ is the degree of the polynomial part of $P_\phi(x)$. If $d=0$, then $P_\phi(x)$ is the zero polynomial.
\end{definition}

\begin{example}
The polynomials parts of the $y$-roots of $f(x,y)$ in Example \ref{ex:singparts} are $P_{\phi_i}(x)=x$ for $0 \leq i \leq 3$.
\end{example}

\section{Deferred Proofs of Subsection~\ref{subsec:newton_polygons_rlcts}}
\label{sec:deff_proof}

\subsection{Proof of Proposition~\ref{prop:main}}
\label{subsec:proof_prop:main}

\begin{proof}

We follow the proof of \cite{collins2018log} and the proof of \cite{phong1999growth}. Let $f(x,y)$ be an analytic function. Suppose that $\Delta_i$ is to the \textit{right} of the main face. Let $a_{ij}$ be a root of $F_{\Delta_i}(z)$ of multiplicity $m_{ij}$. Then, by Section \ref{app:yroots}, we have that $\delta_f \geq h_i \geq m_{ij}$. Now suppose that $\phi(x) = a_{ij} x^{\frac{p_i}{q_i}}+\ldots $ is a $y$-root of $f(x,y)$, such that $a_{ij}$ is a root of multiplicity $m_{ij}$ of $F_{\Delta_i}(z)$, and that $p_i$ and $q_i$ are coprime integers. Let $\zeta_{q_i}$ be a $q_i$-th root of unity. Then, all the $q_i$ conjugates $\sigma_{\zeta_{q_i}^0}(\phi(x)),\ldots,\sigma_{\zeta_{q_i}^{q_i -1}}(\phi(x))$ are also $y$-roots of $f(x,y)$. Therefore, $m_{ij} \leq \frac{h_i}{q_i}$. Furthermore, $\frac{h_i}{w_i}=\frac{q_i}{p_i}$. This implies that $h_i \geq q_i  m_{ij}$ and $w_i \geq p_i  m_{ij}$. Now if $\Delta_i$ is to the left of the main face, we have that $\delta_f \geq w_i \geq m_{ij}$. Therefore, facets to the left and to the right of the main face satisfy the normalization condition.

If the main face is a vertex, we are done - since all facets satisfy the normalization condition. Moreover, if the main face is a non-compact facet $\Delta$, with facet polynomial $F_\Delta(z)=z^m$, for some $m\in\NN_{>0}$. But $\delta_f=m$, and so $\Delta$ is normalized. Suppose now that the main face is a compact facet. Let $\Delta_i$ be this facet and let $s_i = - \frac{q_i}{p_i}$ be its slope. Then, the equation of the line cutting out $\Delta_i$ is given by $$L(\alpha,\beta) = \frac{q_i\alpha+p_i\beta}{p_i + q_i}-\delta_f =0.$$ Note that $L(0,h_i) = \frac{h_ip_i}{p_i+q_i}-\delta_f \leq \delta_f(\frac{p_i}{p_i+q_i}-1)\leq 0$. Hence, 

\begin{equation}\label{eq:newt_dist_lower_bound}
\delta_f \geq \frac{h_i p_i}{p_i + q_i} \geq m_{ij} \frac{p_iq_i}{p_i+q_i}.
\end{equation}

If $m_{ij} > \delta_f$ then, $p_i + q_i > p_iq_i$. If $p_i >1$ and $q_i > 1$, then $p_i+ q_i < 2\max(p_i,q_i) \leq p_iq_i$. Therefore, either $p_i$ or $q_i$ is equal to $1$. Without loss of generality, we assume that $q_i = 1$. Recalling from Section \ref{app:yroots} that $m_{ij}$ corresponds to the number of $y$-roots of $f(x,y)$ with a given initial term $a_{ij} x^{\frac{p_i}{q_i}}$ and since $q_i=1$, we have that $f(x,y)$ has $m_{ij} > \delta_f$ $y$-roots of the form $\phi(x) = a_{ij} x^{p_i} + \ldots$.

Let $P_f(x) \in \mathbb{K}[[x]]$ be a power series such that $P_f(x)$ has initial term $a_{ij} x^{p_i}$. Let $P_f(x)$ of maximal degree (possibly infinite) such that the generalized multiplicity of $P_f(x)$ is $e > \delta_f$. Then, consider the right equivalence $\Phi:(x,y) \mapsto(x,y-P_f(x))$. Let $\tilde{f} = f\circ\Phi$. 

Then, \[\partial \newton(f) \cap \partial \newton(\tilde{f}) = \{(\alpha,\beta) \mid (\alpha,\beta)\in\partial \newton (f) \, \text{s.t.} \, \beta \geq e \}. \] If $P_f(x)$ lies in the set $\KK[[x]]\setminus \KK[x]$ then, $\phi(x)=0$ is a $y$-root of multiplicity $e$ of $\tilde{f}$. Therefore, $\delta_{\tilde{f}}=e$, since $\mathcal{D}$ intersects a non-compact horizontal facet of $\partial \newton(\tilde{f})$. 

Otherwise, $P_f(x)$ is polynomial, and the main facet $\tilde{\Delta}_i$ of $\tilde{f}$ has a finite slope $\tilde{s}_i < 0$. If $\tilde{s}_i = -\frac{\tilde{q}_i}{\tilde{p}_i}$ with $\tilde{p}_i\geq2$ and $\tilde{q}_i\geq2$ then, $\tilde{F}_{\tilde{\Delta}_i}$ has no root of multiplicity $\tilde{m}_{ij}$ greater than $\delta_{\tilde{f}}$. Otherwise, if, without loss of generality,  $\tilde{s}_i= - \frac{1}{\tilde{p}_i}$ and $\tilde{f}_{\Delta_i}$ has a factor of multiplicity greater than $\delta_{\tilde{f}}$, the maximality assumption on the degree of $P_f(x)$ is contradicted. Therefore, $\tilde{f}$ is normalized, and $\tilde{f} = f \circ \Phi$ is right equivalent to $f$. 

\end{proof}

\begin{corollary}\label{cor:multiplicity_and_rlct}
    Let $f(x,y)\in\KK[x,y]$ and let $\phi(x)\in\KK[[x]]\setminus\KK[x]$ be a $y$-root of $f(x,y)$ of multiplicity $m > \delta_f$. Then, $\rlct_0(f) = \frac{1}{m}$.
\end{corollary}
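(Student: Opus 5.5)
The idea is to change coordinates so that the root $\phi$ becomes the zero root, read off the Newton distance of the new function, and invoke Proposition~\ref{prop:normalized}. Concretely, I would take $\Phi:(x,y)\mapsto(x,y+\phi(x))$, or in the paper's convention $(x,y)\mapsto(x,y-P(x))$ with $P=-\phi$. Since $\phi\in\KK[[x]]$ has strictly positive order, $\Phi$ is an automorphism of $\KK[[x,y]]$, and it is analytic when $\phi$ is convergent; this holds because a root of a polynomial is algebraic, hence convergent. So $\tilde f:=f\circ\Phi$ is right equivalent to $f$.

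The key steps are as follows.

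\textbf{Step 1: factor $\tilde f$.} Using the Puiseux factorization of Appendix~\ref{app:yroots}, write
\[
f=u\,x^r\prod_i\bigl(y-\phi_i(x)\bigr).
\]
Then $\tilde f=\tilde u\,x^r\,y^m\,g(x,y)$, where $g$ collects the factors $(y-(\phi_i-\phi))$ with $\phi_i\neq\phi$. In particular $g(x,0)\not\equiv0$, so $g$ contains a pure power of $x$.

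\textbf{Step 2: locate the main face.} Since $y^m$ divides $\tilde f$ exactly, every point of $\newton(\tilde f)$ has $\beta\ge m$. Lemma~\ref{lem:minkowski} gives $\newton(\tilde f)=\newton(x^ry^m)\oplus\newton(g)$, and $\newton(g)$ contains a point $(k,0)$. Hence the bottom non-compact facet of $\newton(\tilde f)$ is the horizontal half-line at height $m$ (when $r=0$), starting at the $x$-coordinate of that leftmost point. I would then show that the diagonal meets this horizontal facet, which gives $\delta_{\tilde f}=m$. This requires the horizontal facet to start at $x$-coordinate at most $m$. Equivalently, I need $\delta_{\tilde f}\le m$; the reverse inequality $\delta_{\tilde f}\ge m$ is automatic since all points have $\beta\ge m$.

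\textbf{Step 3: conclude normalization.} Once $\delta_{\tilde f}=m$ holds, every facet polynomial has roots of multiplicity at most $m$, by the argument in the proof of Proposition~\ref{prop:main}. The horizontal facet has $F=z^m$, and the compact facets above it correspond to the other roots. So $\tilde f$ is normalized, and Proposition~\ref{prop:normalized} gives $\rlct_0(f)=1/m$.

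\textbf{Main obstacle: Step 2.} The worry is that the $x$-order of $g(x,0)=\prod(\phi-\phi_i)$ could exceed $m$. Then the horizontal facet would lie to the right of the diagonal and $\delta_{\tilde f}>m$, and a compact facet would become the main face.

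My first attempt would be to transfer the hypothesis $m>\delta_f$ through Proposition~\ref{prop:main}. That proof produces the normalizing series $P_f$ as the maximal truncation with generalized multiplicity $e>\delta_f$. Because $\phi$ has multiplicity $m>\delta_f$ and is not a polynomial, maximality forces $P_f=\phi$ and $e=m$. The proof of Proposition~\ref{prop:main} then states directly that $\delta_{\tilde f}=e$ and that $\tilde f$ is normalized, so Step 2 follows from that established case.

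The remaining point to check is uniqueness: no other root could absorb the truncation with a larger generalized multiplicity. I would argue this using Proposition~\ref{prop:unique}.
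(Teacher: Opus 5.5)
\textbf{Step 2 cannot be completed: the corollary is false as stated.} Your route is the paper's own. Its proof of this corollary is a one-line appeal to the last part of the proof of Proposition~\ref{prop:main}, namely the claim that when $P_f=\phi$ is a non-polynomial series, $\delta_{\tilde f}=e$ because the diagonal meets the horizontal facet. The obstacle you isolate in Step~2 is real, and citing that claim or Proposition~\ref{prop:unique} does not remove it. A counterexample: take
\[
f=(y^2+y+x)^2\,(y+x+x^2),\qquad \phi(x)=\tfrac{-1+\sqrt{1-4x}}{2}=-x-x^2-2x^3-5x^4-\cdots\in\QQ[[x]]\setminus\QQ[x].
\]
The lowest-degree form of $f$ is $(x+y)^3$, so $\delta_f=3/2$. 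The series $\phi$ is a $y$-root of multiplicity $m=2>\delta_f$. Since $(Y+\phi)^2+(Y+\phi)+x=Y(Y+\sqrt{1-4x})$, we get
\[
\tilde f(x,Y)=f\bigl(x,Y+\phi(x)\bigr)=Y^2\bigl(Y+\sqrt{1-4x}\bigr)^2\bigl(Y-2x^3-5x^4-\cdots\bigr).
\]
Its Newton polygon has vertices $(0,3)$ and $(3,2)$, followed by the horizontal half-line at height $2$. The diagonal meets the segment at $9/4$, so $\delta_{\tilde f}=9/4>m$, which is exactly the scenario you feared. Here $\tilde f$ is normalized: the main face has lattice length one, and the horizontal facet has $F=z^2$ with $2\le 9/4$. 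So $\rlct_0(f)=4/9$. This does not depend on Proposition~\ref{prop:normalized}: the chart $x=t$, $Y=t^3v$ pulls $\tilde f$ back to a unit times $t^9v^2(v-2+O(t))$ with Jacobian $t^3$. Hence $|f|^{-s}$ is not locally integrable for $s\ge 4/9$, and $\rlct_0(f)\le 4/9<1/2=1/m$.

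\textbf{Why it fails.} The horizontal facet of $\newton(\tilde f)$ at height $m$ starts at $\alpha_0$, the $\ord_x$ of the $Y^m$-coefficient of $\tilde f$. This equals $\bar\alpha+\sum\ord(\phi_i-\phi)$, summed over the positive-order roots $\phi_i\neq\phi$ counted with multiplicity. A root that agrees with $\phi$ to high order (here $-x-x^2$, to order $3$) pushes $\alpha_0$ beyond $m$. The hypothesis $m>\delta_f$ only sees the initial Newton polygon and cannot prevent this. The proof of Proposition~\ref{prop:main} misses it because it compares generalized multiplicities with the fixed $\delta_f$, while the Newton distance grows along the substitutions ($3/2$, $2$, $9/4$ in this example). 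Uniqueness also fails here: both $y\mapsto y+\phi(x)$ and the polynomial shift $y\mapsto y-x-x^2$ give normalized series with Newton distance $9/4$. So Proposition~\ref{prop:unique} cannot rescue the argument either.

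\textbf{What survives.} Your Steps 1--3 do establish $\rlct_0(f)\le 1/m$, which is already clear from $|f|\le C\,|y-\phi(x)|^m$ near $0$. They also give equality under the extra hypothesis $\alpha_0\le m$, equivalently $\delta_{\tilde f}=m$. Some such hypothesis must be added to the statement.
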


\begin{proof}
    This follows from the last part of the proof of Proposition~\ref{prop:main}.
\end{proof}

\section{Deferred Proofs of Main Results }\label{sec:proof_of_main}

\subsection{Proof of Proposition~\ref{prop:unique}}
\label{subsec:proof_lem:unique}
First, we show two auxiliary lemmas used in the proof of Proposition~\ref{prop:unique}.
\begin{lemma}\label{lem:maxmult}
    Suppose that $f(x,y) \in \KK\{x,y\}$ is an unnormalized analytic function. Let $\Delta_i$ be its main face and $m_{ij}$ be the multiplicity of some root of the univariate facet polynomial $F_{\Delta_i}(z)$, such that $m_{ij} > \delta_f$. Let $m_i = \deg F_{\Delta_i}(z)$. Then, $m_{ij} > \frac{m_i}{2}$.
\end{lemma}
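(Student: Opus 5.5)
We want: if the main face $\Delta_i$ has a root of $F_{\Delta_i}$ with multiplicity $m_{ij}>\delta_f$, then $m_{ij}>m_i/2$, where $m_i=l_i=\deg F_{\Delta_i}$. The plan is to bound $\delta_f$ below by a multiple of $m_i$ using only the geometry of the main face, and then combine this with the hypothesis $m_{ij}>\delta_f$.

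First we fix the facet data. By the argument already given in the proof of Proposition~\ref{prop:main}, the existence of $m_{ij}>\delta_f$ forces the main face to be a compact facet. Indeed, if the main face is a vertex, or a non-compact facet with $F_\Delta(z)=z^m$ and $\delta_f=m$, the facet is normalized. For the compact case, the inequality \eqref{eq:newt_dist_lower_bound} together with $m_{ij}>\delta_f$ gives $p_i+q_i>p_iq_i$, so $p_i=1$ or $q_i=1$. By the symmetry $x\leftrightarrow y$ we may assume $q_i=1$. Then the facet has height $h_i=q_il_i=l_i=m_i$ and width $w_i=p_im_i$.

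Next we bound $\delta_f$ from below in terms of the endpoints of $\Delta_i$. Let the endpoints be $(\alpha,\beta+m_i)$ and $(\alpha+p_im_i,\beta)$, with $\alpha,\beta\ge 0$. The diagonal meets the segment at the point $(\delta_f,\delta_f)$. Along the segment, the coordinate $\beta$ runs over $[\beta,\beta+m_i]$ and $\alpha$ runs over $[\alpha,\alpha+p_im_i]$. The diagonal point is $(\alpha+p_it,\;\beta+m_i-t)$ for some $t\in[0,m_i]$, which gives $\delta_f=\beta+m_i-t=\alpha+p_it$. Adding $p_i$ times the first expression to the second yields
\[
(p_i+1)\delta_f=p_i(\beta+m_i)+\alpha\ \ge\ p_im_i ,
\]
so $\delta_f\ge \frac{p_i}{p_i+1}m_i$. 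Since $p_i\ge1$, this gives $\delta_f\ge m_i/2$, and hence $m_{ij}>\delta_f\ge m_i/2$, as required.

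The only delicate point is the reduction to a compact main face with one of $p_i,q_i$ equal to $1$. That step is already contained in the proof of Proposition~\ref{prop:main}, so I would cite it there. In the case $p_i=1$ instead of $q_i=1$, the symmetric computation gives $\delta_f\ge \frac{q_i}{q_i+1}w_i$ with $w_i=m_i$, so the same conclusion holds. The bound is in fact strict unless $p_i=1$ and $\alpha=\beta=0$. In all cases it yields $m_{ij}>m_i/2$. In particular, at most one root of $F_{\Delta_i}$ can violate normalization, which is presumably the point used for uniqueness.
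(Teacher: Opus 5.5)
Your proof is correct and follows the paper's route. Like the paper, you reduce to a compact main face of weight $(p_i,1)$, use the lower bound $\delta_f\ge \frac{p_i m_i}{p_i+1}$ from \eqref{eq:newt_dist_lower_bound}, and combine it with $m_{ij}>\delta_f$. Your version is slightly cleaner in two places: you derive that bound directly from the endpoint coordinates, and you use only $p_i\ge 1$, whereas the paper asserts $p_i\ge 2$, which fails for e.g.\ $(y-x)^2$ and is not needed.
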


\begin{proof}
    Let $f(x,y) \in \KK\{x,y\}$ be any unnormalized analytic function. Then, $\newton(f)$ has a main face $\Delta_i$ of slope $s_i = -\frac{1}{p_i}$, such that the facet polynomial $F_{\Delta_i}(z)$ has a root of multiplicity $m_{ij} > \delta_f$. We have seen in the course of the proof of Proposition \ref{prop:main}, Equation \ref{eq:newt_dist_lower_bound}, that $\delta_f \geq \frac{p_im_i}{1+p_i}$. Since $p_i \geq 2$, we have that $m_{ij} > \delta_f\geq \frac{m_i}{2}$ which implies that $m_{ij} > \frac{m_i}{2}$. 
\end{proof}

\begin{lemma}\label{lem:rational}
    If $f(x,y)\in\QQ\{x,y\}$, then the normalizing power series $P_f(x)$ of $f(x,y)$ lies in $\mathbb{Q}[[x]]$.
\end{lemma}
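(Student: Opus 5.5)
The plan is to follow the iterative construction of $P_f(x)$ in the proof of Proposition~\ref{prop:main}: $P_f(x)=\sum_{k\ge1} b_k x^{p_k}$ is built term by term with $p_1<p_2<\cdots$. I will show by induction on $k$ that each coefficient $b_k$ is rational. At step $k$ the current function $f^{(k)}(x,y)=f\bigl(x,y-\sum_{l<k} b_l x^{p_l}\bigr)$ has rational coefficients by the inductive hypothesis. Indeed, substituting a polynomial with rational coefficients into a power series with rational coefficients gives rational coefficients; for analytic $f$ this is a coefficientwise statement and involves only finitely many terms in each degree. The base case is $f^{(1)}=f$.

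For the inductive step, suppose $f^{(k)}$ is unnormalized. By the proof of Proposition~\ref{prop:main}, its main face $\Delta$ has slope $-1/p$ with $p$ an integer, and the facet polynomial $F_\Delta(z)\in\QQ[z]$ has a root $b$ of multiplicity $m>\delta_{f^{(k)}}$. The next term is then $b\,x^{p}$. By Lemma~\ref{lem:maxmult} we have $m>\tfrac{1}{2}\deg F_\Delta$, so $b$ is the \emph{unique} root of $F_\Delta$ of multiplicity exceeding half the degree. This uniqueness is also exactly what the uniqueness statement of Proposition~\ref{prop:unique} needs.

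The key step is rationality of $b$. Every Galois conjugate $\sigma(b)$, for $\sigma\in\mathrm{Gal}(\bar\QQ/\QQ)$, is a root of $F_\Delta$ with the same multiplicity $m$, since $F_\Delta$ has rational coefficients. Two distinct roots of multiplicity $m>\deg F_\Delta/2$ would have total multiplicity exceeding $\deg F_\Delta$, which is impossible. Hence $\sigma(b)=b$ for all $\sigma$, and $b\in\QQ$. Alternatively, $b=-c/(m\,a)$ can be read off from the $(z-b)^m$ factor via $\gcd(F_\Delta,F_\Delta^{(m-1)})$, but the Galois argument is cleaner.

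Finally, the same argument covers the case where the procedure does not terminate. There every coefficient of $P_f$ arises at some finite step $k$, so $P_f\in\QQ[[x]]$. The main obstacle is checking that the construction in Proposition~\ref{prop:main} really proceeds one integer-exponent monomial at a time, with each coefficient determined by a high-multiplicity root of a rational facet polynomial. The proof of Proposition~\ref{prop:main} phrases $P_f$ via maximal generalized multiplicity of Puiseux truncations, so I would first reconcile this with the stepwise description (as Algorithm~\ref{alg:RLCT_main} does) before running the induction.
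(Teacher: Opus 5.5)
Your proposal is correct and follows essentially the same route as the paper. Both arguments run an induction over the successive monomials $b\,x^{p}$ of $P_f$. In both, Lemma~\ref{lem:maxmult} forces the chosen root of the rational facet polynomial to have multiplicity exceeding half its degree, so a distinct Galois conjugate of equal multiplicity cannot exist. Your fixed-field phrasing, and your explicit check that each $f^{(k)}$ keeps rational coefficients, are a slightly more careful version of the paper's contradiction argument.
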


\begin{proof}
    Suppose that $f(x,y) \in \QQ\{x,y\}$ and is unnormalized. Suppose that $P_f(x) = ax^p$. Then, the main face of $\newton(f)$ being $\Delta_i$, of weight $wt_i=(p_i,1)$, we have $p=p_i$ and $a=a_{ij}$, a root of the facet polynomial $F_{\Delta_i}(z)$. 
    Then, the generalized multiplicity of $P_f(x)$ is $e=m_{ij}$, the multiplicity of $a_{ij}$ as a root of $F_{\Delta_i}(z) \in \QQ[z]$. Suppose that $a_{ij}$ is algebraic over $\QQ$. Then, there exists at least one algebraic conjugate $a_{ij'}$ of $a_{ij}$, such that $a_{ij'}$ is a root of $F_{\Delta_i}(z)$. Moreover, $m_{ij'}=m_{ij}$. But, by Lemma \ref{lem:maxmult}, we have that $m_{ij}+m_{ij'}>m_i$, which is a contradiction. Therefore, $a=a_{ij}$ is not algebraic over $\QQ$ and $P_f(x)\in\QQ[[x]]$. The statement follows by induction on the number of terms of $P_f(x)$.
\end{proof}

Now we proceed to the proof of Proposition~\ref{prop:unique}.
\begin{proof}
    Suppose that $f(x,y) \in \KK\{x,y\}$ is an unnormalized analytic function, and consider $P_f(x) \in \KK\{x\}$ such that $\tilde{f}(x,y) = f(x,y-P_f(x))$ is normalized. Suppose that there exists a power series $P_f'(x) \neq P_f(x)$ whose generalized multiplicity $e'$ is greater than or equal to the generalized multiplicity $e$ of $P_f(x)$ and whose degree is equal to that of $P_f(x)$. Then, there exists a truncation $P_f''(x)$ of $P_f(x)$ and $P_f'(x)$ such that $\tilde{f}(x,y) \coloneqq f(x,y-P_f''(x))$ is an unnormalized power series, with main face $\tilde{\Delta}_i$. 
    Let $\tilde{F}_{\tilde{\Delta}_i}(z)$ be the facet polynomial associated to $\tilde{\Delta}_i$. Let $\tilde{m}_i$ be its degree, and let $\delta_{\tilde{f}}$ be the Newton distance of $\newton(f)$. Then $\tilde{F}_{\tilde{\Delta}_i}(z)$ has two roots of multiplicity $e$ and $e'$, such that $e' \geq e >\delta_{\tilde{f}}$. By definition, $e+e'\leq \tilde{m}_i$. But by Lemma \ref{lem:maxmult}, $e+e' > \frac{\tilde{m}_i}{2}+\frac{\tilde{m}_i}{2}=\tilde{m}_i$, which is a contradiction. Hence, $P_f(x)$ must be unique. Rationality of $P_f(x)$ follows from Lemma~\ref{lem:rational}.
\end{proof}

\begin{remark}
    By Lemma \ref{lem:rational}, if we seek to compute the local RLCT of a polynomial $f(x,y)$ with rational coefficients, we do not need to work over algebraic field extensions of $\QQ$. At each iteration of the algorithm, we only need to care about the unique (by Proposition~\ref{prop:unique}) rational root of $F_{\Delta_i}(z)$ of multiplicity $m_{ij} > \delta_f$. This means that we do not need a full factorization of $F_{\Delta_i}(z)$ at each step, merely a square-free factorization.
\end{remark}

\subsection{Proof of Theorem~\ref{thm:finite_part_main}}
\label{subsec:proof_thm:finite_part_main}

Since $P_f(x)\in\QQ[[x]]$, it follows that $\finitepart{f}{x} \in \QQ[x]$. The following Lemma explains how to compute the local RLCT at the origin of $f(x,y)$ using a right equivalence defined in terms of $\finitepart{f}{x}$.

\begin{proof}
    Let $f(x,y)\in\QQ[x,y]$, with normalizing power series $P_f(x) \in \QQ[x]$. Let $\finitepart{f}{x}$ be as above, and construct $\tilde{f}(x,y)=f(x,y-\finitepart{f}{x})$. If $\tilde{f}$ is normalized, then $\finitepart{f}{x}=P_f(x)$, by Proposition~\ref{prop:unique}. Then, by Proposition~\ref{prop:normalized}, we have that $\rlct_0(f)=\frac{1}{\delta_{\tilde{f}}}$. Otherwise, $\tilde{f}$ is not normalized. Then, it must be that $P_f(x)$ is a $y$-root of $f(x,y)$ and $\finitepart{f}{x}$ is its singular part. Let $\Delta$ be the main face of $\newton(\tilde{f)}$. Let $\tilde{s}=\frac{-1}{p}$ be its slope. Let $a\in\QQ$ be the root of $\tilde{F}_{\Delta}$, such that the multiplicity $m$ of $a$ is maximal, and $m>\delta_f$. By the maximality assumption on the degree of singular parts (see Definition~\ref{def:singpart}), there exists a $y$-root $\tilde{\phi}(x)$ of $\tilde{f}$ of multiplicity $m$, such that the initial term of $\tilde{\phi}(x)$ is $ax^p$. Then $P_f(x)=\finitepart{f}{x}+\tilde{\phi}(x)$ is the normalizing power series of $f(x,y)$ and it corresponds to a $y$-root of $f(x,y)$ of order $m$. Therefore, $\rlct_0(f)=\frac{1}{m}$.
\end{proof}

\subsection{Proof of Theorem~\ref{thm:complexity_main}}
\label{subsec:proof_thm:complexity_main}
An upper bound on the degree $d$ of the finite part $\finitepart{f}{x}$ of the normalizing power series $P_f(x)$ of $f(x,y)$, is an upper bound on the number of iterations of Algorithm~\ref{alg:RLCT_main}. Consider two $y$-roots $\phi_1(x) = \finitepart{f}{x}+\tilde{\phi}_1(x)$ and $\phi_2(x)=\finitepart{f}{x}+\tilde{\phi}_2(x)$, such that $\tilde{\phi}_1(x)$ and $\tilde{\phi}_2(x)$ are not necessarily distinct Puiseux series and $\ord \tilde{\phi}_i(x)>d$, for $i=1,2$. Then, $\ord (\phi_1(x)-\phi_2(x))=\min\{\ord\tilde{\phi}_1(x),\ord\tilde{\phi}_2(x)\}>d$. Therefore, finding an upper bound on the order of the differences of $y$-roots of $f(x,y)$ provides an upper bound on $d$. A natural approach is to consider the discriminant in $y$ of $f(x,y)$.

Before deriving upper bounds on the complexity of our algorithms, we remind the reader of  some basic notions. We recall the definitions and basic properties of resultants and discriminants, as well as the definition of the square-free factorization of a polynomial. A classical reference for the former can be found in \cite{cox1997ideals}, Chapter 3. 

\begin{definition}[Sylvester matrix, {\cite[Chapter 3, Section 6, Definition 2]{cox1997ideals}}]
Let $R$ be an integral domain. Let $f(z) = a_mz^m+\ldots+ a_1z+a_0$ and $g(z) = b_nz^n+ \ldots + b_1 z + b_0$ be two polynomials in $R[z]$ of degree $m$ and $n$ respectively, such that $a_m \neq 0$ and $b_n \neq 0$. The Sylvester matrix $\text{Syl}(f,g)$ of $f$ and $g$ is the $(m+n)\times(m+n)$ matrix defined as
$$
\text{Syl}(f,g) = 
\begin{pmatrix}
    a_0 & a_1 & \cdots & a_{m-1} & a_m & 0 & \cdots & 0 \\
    0 & a_0 & a_1 & \cdots & a_{m-1} & a_m & \cdots & 0 \\
    \vdots &&\ddots&\ddots && \ddots & \ddots & \vdots \\
    0 & \cdots & 0 & a_0 & a_1 & \cdots & a_{m-1} & a_m \\
    b_0 & b_1 & \cdots & b_{n-1} & b_n & 0 & \cdots & 0 \\
    0 & b_0 & b_1 & \cdots & b_{n-1} & b_n & \cdots & 0 \\
    \vdots &&\ddots&\ddots && \ddots & \ddots & \vdots \\
    0 & \cdots & 0 & b_0 & b_1 & \cdots & b_{n-1} & b_n \\
\end{pmatrix} \in R^{(m+n)\times(m+n)}.
$$
\end{definition}

\begin{definition}[Resultant, {\cite[Chapter 3, Section 6, Definition 2]{cox1997ideals}}] \label{def:res}
The resultant $\text{res}(f,g)$ of $f$ and $g$ is the determinant of their Sylvester matrix: $$\text{res}(f,g)=\det (\text{Syl}(f,g)).$$    
\end{definition}

\begin{proposition}\label{prop:diffroots}
Let $f(z)$ and $g(z)$ be as above. Then, the resultant of $f$ and $g$ can be expressed as: \[\text{res}(f,g) = a_m^n b_n^m\prod_{i,j} (x_i-y_j)\] where $x_1,\ldots,x_m$ and $y_1,\ldots,y_n$ are the roots of $f$ and $g$ respectively, counted with multiplicity, lying in the algebraic closure $\bar{F}$ of $F=\text{Frac}(R)$.
\end{proposition}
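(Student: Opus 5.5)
We prove the product formula by the standard argument over the splitting field, working in $\bar F[a_0,\dots,a_m,b_0,\dots,b_n]$ or, more simply, with the roots as given elements of $\bar F$. The plan is to treat the right-hand side and $\det\Syl(f,g)$ as polynomial expressions in the roots and identify them. First we pass from $R$ to $F=\mathrm{Frac}(R)$ and then to $\bar F$. The determinant is insensitive to this extension of scalars. Over $\bar F$ we factor $f=a_m\prod_i(z-x_i)$ and $g=b_n\prod_j(z-y_j)$.

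The key step is a universal version. We regard $x_1,\dots,x_m,y_1,\dots,y_n$ and $a_m,b_n$ as independent indeterminates, and expand the coefficients $a_k=a_m(-1)^{m-k}e_{m-k}(x)$ and $b_k=b_n(-1)^{n-k}e_{n-k}(y)$ by the elementary symmetric functions. Then $\det\Syl(f,g)$ becomes a polynomial $D$ in $\ZZ[a_m,b_n,x,y]$.

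We establish three facts about $D$.

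First, $D$ is divisible by $x_i-y_j$ for every pair $(i,j)$. Indeed, specializing $x_i=y_j$ gives $f$ and $g$ a common root $r$. The nonzero vector $(1,r,r^2,\dots,r^{m+n-1})^T$ is then annihilated by $\Syl(f,g)$, since the rows of the Sylvester matrix encode the polynomials $z^kf$ and $z^kg$ evaluated at $r$. Hence $D$ vanishes on the hyperplane $x_i=y_j$. Because the factors $x_i-y_j$ are pairwise non-associate primes in a UFD, their product $\prod_{i,j}(x_i-y_j)$ divides $D$.

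Second, we compare degrees. Each of the $n$ rows built from $f$ contributes coefficients $a_k$ that are homogeneous of degree $m-k$ in the $x$'s. The expansion of the determinant shows that $D$ is homogeneous of degree $mn$ in the combined variables $(x,y)$. This matches the degree of the product, so the quotient $D/\prod_{i,j}(x_i-y_j)$ depends only on $a_m,b_n$. Each row of the $f$-block is linear in $a_m$ and each row of the $g$-block is linear in $b_n$, so the quotient is $c\,a_m^nb_n^m$ for some integer constant $c$.

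Third, we determine $c$ by specialization. Setting all $x_i=0$ and all $y_j=1$, so that $f=a_mz^m$ and $g=b_n(z-1)^n$, is one option. An easier choice is $y_j=0$, which gives $g=b_nz^n$. The Sylvester matrix then becomes block-triangular with determinant $a_0^n b_n^m$ up to a sign, and $a_0=a_m(-1)^m\prod x_i$. Matching this against $a_m^nb_n^m\prod_{i}x_i^n$ fixes $c$, and the sign convention of the stated matrix gives $c=1$. Finally, we specialize the indeterminates back to the actual roots in $\bar F$ to conclude.

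The main obstacle is the bookkeeping in the second and third steps. The homogeneity and degree count has to be done carefully with the particular row and column ordering of $\Syl$ chosen in the paper, and that ordering determines whether the sign is exactly $+1$ or $(-1)^{mn}$. We will settle this first by computing the $y_j=0$ specialization explicitly. If the sign comes out as $(-1)^{mn}$, we will absorb it by writing the product as $\prod(x_i-y_j)$ versus $\prod(y_j-x_i)$ consistently with the paper's matrix convention. Either way, only $\ord$ and degree in $x$ of the product are used later, so the sign is harmless.
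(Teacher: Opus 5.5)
Your strategy is the standard textbook argument and is sound: a universal identity in $\ZZ[a_m,b_n,x,y]$, divisibility by each $x_i-y_j$ via the kernel vector $(1,r,\dots,r^{m+n-1})^T$, homogeneity of degree $mn$, and normalization by a specialization. The paper gives no argument of its own, only a pointer to Cox--Little--O'Shea, so you are supplying what it delegates.

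The one concrete computation you defer is where it breaks: the constant is not $1$. With the paper's ascending-power matrix, setting all $y_j=0$ makes $\Syl(f,g)$ block upper triangular with diagonal entries $a_0$ ($n$ times) and $b_n$ ($m$ times). Its determinant is therefore exactly $a_0^nb_n^m$, with no sign. Since $a_0^n=(-1)^{mn}a_m^n\prod_i x_i^n$, matching gives $c=(-1)^{mn}$. So the identity as stated is false whenever $mn$ is odd. Already for $m=n=1$,
\[
\det\begin{pmatrix} a_0 & a_1\\ b_0 & b_1\end{pmatrix}=a_0b_1-a_1b_0=a_1b_1(y_1-x_1).
\]
The cited reference uses descending powers, for which the constant is $+1$. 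Passing to the paper's matrix reverses the column order and the row order inside each block. This multiplies the determinant by $(-1)^{\binom{m+n}{2}+\binom{m}{2}+\binom{n}{2}}=(-1)^{mn}$.

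Carried out correctly, your argument proves
\[
\res(f,g)=(-1)^{mn}a_m^nb_n^m\prod_{i,j}(x_i-y_j)=a_m^nb_n^m\prod_{i,j}(y_j-x_i).
\]
Your fallback of rewriting the product as $\prod(y_j-x_i)$ is the right fix. However, it is a correction of the statement, not a bookkeeping choice, and you cannot simultaneously assert that the matrix convention gives $c=1$.

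As you anticipate, nothing downstream is affected. The paper only uses $\abs{\disc f}$ and orders of root differences. Moreover, for the pair $(f,f')$ the two degrees are consecutive integers, so their product is even and the sign is $+1$ anyway.
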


\begin{proof}
    See \cite[Chapter 3, Section 1]{cox1998using}.
\end{proof}

Note that, by Proposition~\ref{prop:diffroots}, $f$ and $g$ have a common root if and only if $\res(f,g)=0$.

\begin{definition}[Discriminant {\cite[Chapter 3, Section 6, Exercise 16]{cox1998using}}]\label{def:disc}
    The discriminant $\text{disc}(f)$ is defined as $$\text{disc}(f) = (-1)^{\frac{n(n-1)}{2}}\frac{1}{a_n} \text{res}(f,f')=(-1)^{\frac{n(n-1)}{2}} a_n^{2n-2} \prod_{i<j} (x_i - x_j)^2,$$ where $x_1, \ldots x_n$ are roots of $f(z)$.
\end{definition}

Note that $f$ has a multiple root if and only if $\disc(f)=0$.

We give an upper bound on the number of iterations of Algorithm \ref{alg:RLCT_main} given a polynomial $f(x,y) \in \QQ[x,y]$, by bounding the degree $d$ of $\finitepart{f}{x}\in\QQ[x]$. In the following, we denote by $\abs{\cdot} = e^{-\ord(\cdot)}:\puiseux{C}{x} \mapsto \RR_{>0}$ the \emph{absolute value induced by the valuation} $\ord(\cdot)$ on $\puiseux{C}{x}$. 

\begin{proposition}[Root separation for rational bivariate polynomials]\label{prop:complexity}
    Let $f(x,y) = a_{d_y}(x)y^{d_y}+\ldots+a_1(x)y+a_0(x)\in\QQ[x,y]$ be a square-free polynomial, such that $\deg_yf=d_y$ and $\deg_xf=d_x$. Let $0<\abs{\phi_1}\leq \abs{\phi_2}\leq\ldots\leq\abs{\phi_{d_y}}$ be the $y$-roots of $f(x,y)$. Let $\Omega$ be any set of $k$ pair of indices $(i,j)$, such that $1\leq i < j\leq d_y$. Then,

    \[  \sum_{(i,j)\in\Omega} \ord({\phi_i - \phi_j}) \leq (d_y+k-\frac{1}{2})d_x. \]
    
\end{proposition}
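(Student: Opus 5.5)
The plan is to mimic the discriminant-based root separation argument of \cite{tsigaridas2008complexity}, replacing the archimedean absolute value by the non-archimedean valuation $\ord(\cdot)$ on $\puiseux{\bar{\QQ}}{x}$. Since $f$ is square-free, $D(x)\coloneq\disc_y(f)\in\QQ[x]$ is a nonzero polynomial. By Definition~\ref{def:disc},
\[
D(x)=\pm\, a_{d_y}(x)^{2d_y-2}\prod_{i<j}(\phi_i-\phi_j)^2 .
\]
Taking orders gives
\[
\ord D=(2d_y-2)\ord a_{d_y}+2\sum_{i<j}\ord(\phi_i-\phi_j).
\]
The left side is at most $\deg_x D$. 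Expanding the Sylvester determinant of $f$ and $f'$ (size $2d_y-1$, with entries of $x$-degree at most $d_x$, and removing one factor $a_{d_y}$) gives $\deg_x D\le(2d_y-2)d_x$.

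The difficulty is that individual terms $\ord(\phi_i-\phi_j)$ may be negative, because roots can have negative order (poles) when $a_{d_y}(0)=0$. So we cannot simply drop the pairs outside $\Omega$. We handle this as in the Mahler/Davenport argument. Split the product over $i<j$ into the $k$ pairs in $\Omega$ and the rest, and lower-bound each remaining factor. We have
\[
\ord(\phi_i-\phi_j)\ge\min(\ord\phi_i,\ord\phi_j)=\ord\phi_j
\]
by the ordering $\abs{\phi_i}\le\abs{\phi_j}$, i.e.\ $\ord\phi_i\ge\ord\phi_j$. Hence
\[
\sum_{i<j}\ord(\phi_i-\phi_j)\ \ge\ \sum_{(i,j)\in\Omega}\ord(\phi_i-\phi_j)+\sum_{(i,j)\notin\Omega}\ord\phi_j .
\]

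Next, control the negative orders through a valuative Mahler measure. Set $\mu\coloneq\ord a_{d_y}+\sum_j\min(0,\ord\phi_j)$. Since $a_{d_y}\prod_j(y-\phi_j)$ has coefficients $a_\ell(x)$, the Gauss-lemma identity for the non-archimedean absolute value gives $\mu=\min_\ell\ord a_\ell\ge0$. Then bound $\sum_{(i,j)\notin\Omega}\ord\phi_j$ below by $(d_y-1)\sum_j\min(0,\ord\phi_j)$, adjusted by how many times each index $j$ appears as the larger element. Combining this with the discriminant identity gives
\[
2\sum_{\Omega}\ord(\phi_i-\phi_j)\le\deg_x D-(2d_y-2)\Bigl(\ord a_{d_y}+\sum_j\min(0,\ord\phi_j)\Bigr)+(\text{correction from }\Omega).
\]
Here $\ord a_{d_y}+\sum_j\min(0,\ord\phi_j)=\mu\ge0$ can be discarded. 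The correction comes from pairs in $\Omega$ whose larger root has negative order; it is bounded by $k$ times $-\min_j\min(0,\ord\phi_j)\le\deg_x a_{d_y}\le d_x$, counted with a factor of $2$. After dividing by $2$ this yields at most $(d_y-1)d_x+kd_x$. The extra $\tfrac12 d_x$ slack in the statement absorbs the looser Sylvester degree count if one uses $2d_y-1$ rows.

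The main obstacle is the bookkeeping of the roots of negative order. In particular we must check that the Gauss-lemma identity holds exactly for Puiseux series, and that each pair in $\Omega$ costs at most $d_x$ extra. If a cleaner route is needed, a first step would be to note that the application only needs roots of positive order. One could then restrict to that case, where $\ord(\phi_i-\phi_j)\ge0$ for all pairs and the dropped terms are nonnegative, so the bound follows directly from $\ord D\le\deg_x D$.
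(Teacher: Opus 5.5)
Your proposal is correct and follows essentially the paper's route: the discriminant identity, the ultrametric bound $\ord(\phi_i-\phi_j)\ge\ord\phi_j$ on the pairs outside $\Omega$, control of negative-order roots through the leading coefficient, and a Sylvester-matrix bound on $\deg_x\disc f$. Your Gauss-norm identity $\min_\ell\ord a_\ell=\ord a_{d_y}+\sum_j\min(0,\ord\phi_j)\ge 0$ is a slightly more careful form of the paper's $\prod_i\max\{\abs{\phi_i},1\}=e^{h}$, which as an equality needs $x\nmid f$.

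On the bookkeeping you flagged: once the non-$\Omega$ terms are replaced by $\min(0,\ord\phi_j)$, removing the $\Omega$ pairs deletes only nonpositive terms. So your ``correction'' has a favorable sign and can be dropped, giving a bound with no $k$ at all. By contrast, the paper's $kd_x$ comes from keeping $\ord\phi_j$ for the roots $\phi_j$ with $(i,j)\in\Omega$ and bounding $\ord\phi_j\le d_x$, i.e.\ from roots of positive order.
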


\begin{proof}
We adapt the proof of \cite[Theorem 7]{tsigaridas2008complexity}, from univariate polynomial with integer coefficients to bivariate polynomials with rational coefficients. Consider the multiset $\bar{\Omega} = \{j\mid (i,j) \in \Omega\}$ of cardinality $\lvert \bar{\Omega} \rvert =k$. We denote the order of the leading term of $f(x,y)$ by $h=\ord(a_{d_y}(x))$. We begin by observing that, for any $i \in [d_y]$,

\begin{equation}\label{eq:root_bounds}
    -h\leq\ord(\phi_i) \leq d_x \iff e^{-d_x} \leq \abs{\phi_i} \leq e^h. 
\end{equation}

Next, we obtain a bound on the absolute value of the product of the roots of $f(x,y)$. Note that 

\[\abs{\phi_i} \geq 1 \iff \ord \phi_i \leq 0.\]

Let $d_y^- \leq d_y$ be the number of $y$-roots of order $\leq 0$, and denote by $\{\phi_i^-\}_{i=1}^{d^-_y}$ the roots of negative order. We have the following relation between $h$ and the $y$-roots of negative order,

\[ -h = \sum_{i=1}^{d_y^-} \ord \phi_i^- \iff e^{h} = \exp({-\sum_{i=1}^{d_y^-} \ord \phi_i^-})=\prod_{i=1}^{d^-_y} \abs{\phi_i^-}.\]

Thus,

\begin{equation} \label{eq:product_all_roots}
    \prod_{i=1}^{d_y} \abs{\phi_i} \leq \prod_{i=1}^{d_y} \max\{\abs{\phi_i},1\} = \prod_{i=1}^{d^-_y} \abs{\phi_i^-} = e^h.
\end{equation}

Furthermore, since the absolute value $\abs{\cdot}$ is induced by a valuation, it satisfies the ultrametric inequality:

\begin{equation} \label{eq:ultrametric}
    \forall \phi,\phi'\in\puiseux{C}{x}, \,\, \abs{\phi-\phi'} \leq \max \{ \abs{\phi},\abs{\phi'}\}.
\end{equation}

Recall, by Definition~\ref{def:disc}, that $\disc f =(-1)^{\frac{d_y(d_y-1)}{2}} a_{d_y}(x)^{2d_y-2} \prod_{i<j} (\phi_i - \phi_j)^2$. Therefore, 

\begin{equation*}
\begin{split}
    \abs{\disc f} &= (e^{-h})^{2d_y-2}\prod_{i<j}\abs{\phi_i-\phi_j}^2 \\
                  &= (e^{-h})^{2d_y-2}\prod_{(i,j)\in\Omega}\abs{\phi_i-\phi_j}^2 \prod_{(i,j) \notin \Omega}\abs{\phi_i-\phi_j}^2.
\end{split}
\end{equation*}

We consider the product $\prod_{(i,j)\notin\Omega} \abs{\phi_i-\phi_j}$ and apply $\binom{d_y}{2}-k$ times the ultrametric inequality~\ref{eq:ultrametric}. Thus,

\begin{equation*}
    \begin{split}
        \prod_{(i,j)\notin\Omega} \abs{\phi_i-\phi_j} &\leq \abs{\phi_1}^0\abs{\phi_2}^1\cdots\abs{\phi_{d_y}}^{d_y-1} (\prod_{j\in\bar{\Omega}} \abs{\phi_j})^{-1} \\
        &\leq (\prod_{i=1}^{d_y^-}\abs{\phi^-_i})^{d_y-1} (\prod_{j\in\bar{\Omega}} \abs{\phi_j})^{-1} \\
        &\leq e^{h(d_y-1)}e^{kd_x},
        \end{split}
\end{equation*}

where we used Inequalities~\ref{eq:product_all_roots} and \ref{eq:root_bounds}. Then,
\begin{equation*}
    \begin{split}
        \prod_{(i,j)\in\Omega} \abs{\phi_i-\phi_j} &= e^{h(d_y-1)}(\prod_{(i,j)\notin \Omega} \abs{\phi_i-\phi_j})^{-1}\sqrt{\abs{\disc(f)}} \\
        & \geq e^{-kd_x}\sqrt{\abs{\disc(f)}}    
    \end{split} 
\end{equation*}

Now, by expressing $\disc f$ in terms of the determinant of the $(2d_y-1)\times(2d_y-1)$ matrix $\Syl(f,f')$, we obtain the following upper bound on the order of $\disc f$:

\begin{equation*} 
    \ord(\disc f) \leq \deg_x(\disc f) \leq (2d_y-1)d_x.
\end{equation*}

Altogether,

\[  \prod_{(i,j)\in\Omega} \abs{\phi_i-\phi_j} \geq e^{-kd_x} e^{-(d_y-\frac{1}{2})d_x} = e^{-(d_y+k-\frac{1}{2})d_x}.\]

Finally, we get:

\[ \sum_{(i,j)\in\Omega} \ord(\phi_i-\phi_j) \leq (d_y+k-\frac{1}{2})d_x.\]

\end{proof}

Now we can prove Theorem~\ref{thm:complexity_main}.
\begin{proof}
Given a polynomial $f$, consider its square-free factorization, $\tilde{f}$. This is a polynomial of degree $\deg_y \tilde{f} \leq d_y$ and $\deg_x \tilde{f} \leq d_x$. Since taking the square-free part of $f$ does not affect the finite part of its normalizing power series, $\finitepart{{\tilde{f}}}{x}=\finitepart{f}{x}$. If $\tilde{f}$ has a single $y$-root of strictly positive order, then $\finitepart{{\tilde{f}}}{x}=0$, therefore $\deg\finitepart{{\tilde{f}}}{x} = -\infty$ and the corollary follows trivially. Now suppose that $\finitepart{{\tilde{f}}}{x}$ has generalized multiplicity $e\geq 2$. Let $d = \deg \finitepart{{\tilde{f}}}{x}$. Then, there exist $\phi_{i_1}(x),\ldots,\phi_{i_e}(x)$ $y$-roots such that $\trunc{\phi_{i_j}(x)}{d} = \trunc{\phi_{i_{j'}}(x)}{d}$, for all $(j,j') \in [e]^2$. Since $\tilde{f}$ is square-free, at least $2$ of the $e$ $y$-roots are distinct. Let $\phi_{i_j}(x)$ and $\phi_{i_{j'}}(x)$ be such roots. Then, $(d_y + \frac{1}{2})d_x\geq\ord(\phi_{i_j}-\phi_{i_{j'}})>d$, by Proposition~\ref{prop:complexity}.
\end{proof}

\section{Deferred Proofs of Algorithmic Results}\label{sec:correctness}

Consider the while loop defined by lines \texttt{5-14} of Algorithm~\ref{alg:RLCT_main}. By construction, this loop runs for at most $B =  (d_y+\frac{1}{2})d_x$. Fix an input $f \in \QQ[x,y]$. Let $I \leq B$ be the total number of iterations ran by the algorithm before exiting the while loop, for input $f(x,y)$. Let $f^{(0)}(x,y)=f(x,y)$. Let $(b^{(0)},p^{(0)})=(0,0)$ For $i\in[I]$, let $(b^{(i)},p^{(i)},m^{(i)})$ be the triplet computed in line \texttt{10}. Let $f^{(i+1)}(x,y)=f^{(i)}(x,y-b^{(i)}x^{p^{(i)}})$ be the change of variables in line \texttt{11}, and let $B^{(i+1)}=B^{(i)}-(p^{(i)}-p^{(i-1)})$. Observe that $B^{(i)} = B - p^{(i)}$. Finally, let $P^{(i)}(x) = \sum_{j=0}^i b^{(j)}x^{p^{(j)}}$. Furthermore, denote by $\Delta^{(i)}$ the main face of $\newton(f^{(i)})$.

\begin{lemma}\label{lem:non_terminating}
    Let $f(x,y) \in \QQ[x,y]$ be an input of Algorithm~\ref{alg:RLCT_main}. $P_f(x)\in\QQ[[x]]\setminus\QQ[x]$ if and only if there exists $N\in\NN_{>0}$, such that $f^{(N)}$ is not normalized and $B^{(N)}\leq 0$.
\end{lemma}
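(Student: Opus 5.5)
We prove the two implications separately. Throughout we use the iteration invariant that $f^{(i)}(x,y)=f(x,y-P^{(i-1)}(x))$, where $P^{(i-1)}$ is a truncation of the normalizing power series $P_f$. This holds by induction on $i$. At each step, \texttt{Normalized} returns the unique (Proposition~\ref{prop:unique}, Lemma~\ref{lem:rational}) rational root $b^{(i)}$ of the main facet polynomial with multiplicity $m^{(i)}>\delta_{f^{(i)}}$, together with the slope exponent $p^{(i)}$. By the construction in the proof of Proposition~\ref{prop:main}, $b^{(i)}x^{p^{(i)}}$ is exactly the next term of $P_f$. The exponents satisfy $p^{(0)}<p^{(1)}<\cdots$, since each substitution removes the main-face term and strictly increases the order of the remaining roots sharing the truncation. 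Consequently $B^{(i)}=B-p^{(i)}$ decreases strictly along the iteration.

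($\Rightarrow$) Assume $P_f\in\QQ[[x]]\setminus\QQ[x]$, so $P_f$ has infinitely many terms. If some $f^{(i)}$ were normalized, then $P^{(i-1)}$ would itself be a normalizing series, and uniqueness (Proposition~\ref{prop:unique}) would force $P_f=P^{(i-1)}$, a polynomial — a contradiction. Hence no iterate is normalized, and the algorithm never returns inside the loop. The exponents $p^{(i)}$ are strictly increasing integers, so $p^{(N)}\ge B$ for some $N$, i.e.\ $B^{(N)}\le 0$. At that $N$, $f^{(N)}$ is still not normalized.

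($\Leftarrow$) Argue by contraposition: assume $P_f\in\QQ[x]$ and suppose some $f^{(N)}$ is unnormalized with $B^{(N)}\le 0$, i.e.\ $p^{(N)}\ge B=(d_y+\tfrac12)d_x$. Since $P_f$ is a polynomial, we are not in Case (2) of Remark~\ref{rem:two_cases}. Hence $\finitepart{f}{x}=P_f$ (Definition~\ref{def:finitepart}), and Theorem~\ref{thm:complexity_main} gives $\deg P_f<B$. Because $f^{(N)}$ is unnormalized, $P^{(N-1)}$ is a proper truncation of $P_f$, so the term $b^{(N)}x^{p^{(N)}}$ computed from $f^{(N)}$ is a genuine term of $P_f$. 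This yields $p^{(N)}\le\deg P_f<B$, contradicting $p^{(N)}\ge B$.

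The main obstacle is the exact bookkeeping relating $B^{(i)}$ to the exponents actually appended. We must check that $p^{(N)}$ really records a term of $P_f$ — the initial segments must agree with the normalizing series — and that the strict inequality in Theorem~\ref{thm:complexity_main} matches the non-strict test $B>0$. We would handle this first by proving the invariant $P^{(i)}=\trunc{P_f}{p^{(i)}}$ carefully, using the maximality and uniqueness arguments of Propositions~\ref{prop:main} and~\ref{prop:unique}.
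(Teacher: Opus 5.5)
Your $(\Rightarrow)$ argument relies on a uniqueness statement that Proposition~\ref{prop:unique} does not supply. Its proof only rules out a competitor of the \emph{same} degree with generalized multiplicity at least that of $P_f$. The strong form you use (``any series that normalizes $f$ equals $P_f$'') is false. Take $f=((1-x)y-x)^2(y-x-2x^2)$, whose $y$-roots are $x/(1-x)$ (double) and $x+2x^2$. Here $\delta_f=3/2$ and the main facet polynomial is $(z-1)^3$. After the single substitution $y\mapsto y+x$ one gets $((1-x)y-x^2)^2(y-2x^2)$, whose main face has Newton distance $2$ and facet polynomial $(z-1)^2(z-2)$. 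It is therefore normalized, and the algorithm returns $1/2$ after one step. Yet the series built in the proof of Proposition~\ref{prop:main} (maximal degree with generalized multiplicity $>\delta_f$) is $x/(1-x)\in\QQ[[x]]\setminus\QQ[x]$, and it also normalizes $f$. So with $P_f$ read literally, your contradiction step fails, and the $(\Rightarrow)$ implication of the lemma is itself false. The paper's one-line justification (``$m^{(i)}>\delta_{f^{(i)}}$ for all $i$'') breaks on the same example. The direction holds, and is then immediate, only once $P_f$ is \emph{defined} as the limit of the algorithm's own iteration, with $\delta_{f^{(i)}}$ recomputed at each step. Then $P_f\notin\QQ[x]$ just says no $f^{(i)}$ is normalized, and the strictly increasing integer exponents force $B^{(N)}\le 0$. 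That definition is exactly what your invariant needs to fix.

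For $(\Leftarrow)$ you take a genuinely different route from the paper. The paper uses root separation at depth $p^{(N)}\ge B$ to exhibit an actual $y$-root of multiplicity $m_1>\delta_f$, passes to its minimal polynomial, and appeals to Banderier--Drmota to see it is not a polynomial. Your contrapositive from the degree bound of Theorem~\ref{thm:complexity_main} is shorter and avoids that detour, but it needs one repair. Definition~\ref{def:finitepart} takes the singular part whenever $P_f$ coincides with some $y$-root, polynomial or not, so ``not Case~(2)'' does not give $\finitepart{f}{x}=P_f$. For $f=(y-x-x^2)^3(y-x)$ one has $P_f=x+x^2$, which is a root, while $\finitepart{f}{x}=x$. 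In that case Theorem~\ref{thm:complexity_main} says nothing about $\deg P_f$. In that subcase use instead that $y-P_f(x)$ divides $f$ in $\QQ[x][y]$, so $\deg P_f\le d_x<B$ by additivity of $\deg_x$.
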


\begin{proof}
    Let $f(x,y) \in \QQ[x,y]$ be an un-normalized polynomial, such that $\deg_y f=d_y$ and $\deg_x f=d_x$. Let $B = (d_y + \frac{1}{2})d_x$. Suppose that such an $N$ exists. Let $\phi_1(x),\ldots,\phi_n(x)$ be the $y$-roots of strictly positive orders, whose initial coefficients are roots of $F_{\Delta^{(0)}}$. Let $m_1\geq\ldots\geq m_n$ be their multiplicities as Puiseux $y$-roots of $f(x,y)$. Then $n \leq \deg_z F_{\Delta^{(0)}} = \sum_{i=1}^n m_i \coloneq m$. Now let $m^{(0)}=m^{(0)}_1 \geq\ldots \geq m_{r^{(0)}}^{(0)}$, for $r^{(0)} \leq n$ be the multiplicities of the distinct roots of $F_{\Delta^{(0)}}(z)$. Likewise, $m=\sum_{j=1}^{r^{(0)}} m^{(0)}_j$. Now, since $B^{(N)}\leq0$, it must be that $\deg P^{(N)}(x) = p^{(N)} \geq  B$. By Proposition~\ref{prop:complexity}, there exist no two distinct $y$-roots $\phi_i(x)$ and $\phi_j(x)$ as above, such that $\trunc{\phi_i(x)}{{p^{(N)}}}=\trunc{\phi_j(x)}{{p^{(N)}}}$. In other words, all the $y$-roots are separated, and $F_{\Delta^{(N)}}(z)$ has $r^{(N)}=n$ distinct roots, each with multiplicity $m^{(N)}_i=m_i$, for $i\in [n]$. Since $f^{(N)}$ is not yet normalized, we have that $m_1 > \delta_{f^{(N)}}$. So $\phi_1(x) \in \puiseux{\bar{\QQ}}{x}$ is a $y$-root of $f(x,y)$ of multiplicity $m_1 > \delta_f$.
    Now, Lemma~\ref{lem:maxmult} implies that $m_1 > m_i$ for all $1 < i \leq n$. Therefore, let $g(x,y)$ be the minimal polynomial of $\phi_1(x)$. Then, $f(x,y) = g(x,y)^{m_1}h(x,y)$, for some polynomial $h(x,y)\in\QQ[x,y]$. By Lemma~\ref{lem:minkowski} and Lemma~\ref{lem:dilation}, we have that $\delta_g < 1$. With this geometric constraint, $\newton(g)$ must be spanned by a single facet $\Delta = ( \bm{p}_0,\bm{p}_1)$, where $\bm{p}_0 = (0,1)$ and $\bm{p}_1=(a,0)$, for some $a\in\NN$. Then, $g(x,y)$ has a single $y$-root of strictly positive order $\phi_1(x)$. By \cite[Theorem 1]{banderier2013coefficients}, we have that $\phi_1(x) \in \QQ[[x]]\setminus\QQ[x]$. Moreover, the generalized multiplicity $m_1$ of $P_f(x)$ is greater than $\delta_f$. Therefore $P_f(x) = \phi_1(x)$. Now suppose that $P_f(x) \in \QQ[[x]]\setminus\QQ[x]$. Then, for all $i\in\NN_{>0}$, we have that $m^{(i)}>\delta_{f^{(i)}}$, and therefore $f^{(i)}$ is not normalized. 
\end{proof}

\subsection{Proof of Theorem~\ref{thm:correctness}}
\label{subsec:proof_thm:correctness}

\begin{proof}
    Suppose that $P_f(x) \in \QQ[x]$. By Lemma~\ref{lem:non_terminating}, there exists $N \in \NN_{>0}$ such that $f^{(N)}$ is normalized and $B^{(N)} > 0$. By Proposition~\ref{prop:unique}, $P^{(N)}(x)$ is the normalizing power series of $f(x,y)$, and the algorithm returns the correct output. Suppose that $P_f(x) \in \QQ[[x]]\setminus\QQ[x]$. By Corollary~\ref{cor:multiplicity_and_rlct}, $\rlct_0(f) = \frac{1}{m}$. Then, for any finite truncation $P(x)$ of $P_f(x)$, $\tilde{f}(x,y) = f(x,y-P(x))$ is not normalized. Therefore, Algorithm~\ref{alg:RLCT_main} exits the while loop defined by lines \texttt{5-14} and correctly outputs $\rlct_0(f)$.
\end{proof}

\section{Auxiliary Algorithms}\label{app:aux_alg}

Here, we present the \texttt{Normalized} algorithm (Algorithm~\ref{alg:normalized}), a necessary sub-routine of Algorithm~\ref{alg:RLCT_main}. This algorithm takes as input a polynomial $f(x,y) \in \QQ[x,y]$, and returns \texttt{True} if the polynomial is normalized or a triple of integers $(b,p,m)$, where $b$ is the root of maximal multiplicity $m$ of the facet polynomial $F_\Delta$ of the main face $\Delta$, and $(p,q)$ is the weight of $\Delta$. This triple is then used to define the change of variables in Algorithm~\ref{alg:RLCT_main}.

\begin{algorithm}[ht]
\caption{\texttt{Normalized}}\label{alg:normalized}
\LinesNumbered
\KwIn{A polynomial $f(x,y)\in\mathbb{Q}[x,y]$, its Newton polygon $\newton(f)$, and its Newton distance $\delta_f$.}
\KwOut{True if the main facet $\Delta$ of $\newton(f)$ is normalized; otherwise a tuple $(b,p,m)\in\mathbb{Q}\times\mathbb{N}\times\mathbb{N}$.}

Identify the main face $\Delta$ of $\newton(f)$\;

Compute $F_{\Delta}(z)$\;

\eIf{$F_{\Delta}(z)\text{ is a single term}$}{
    \Return True\;
    }{
    Compute the weight $wt=(p,q)$ of $\Delta$\;
    
    \eIf{$q\neq 1$}{
        \Return True\;
        }{
        Compute the square-free factorization $F_{\Delta}(z)=\prod_{i=1}^m (F_{\Delta,i}(z))^i$\;
        
        \eIf{$m \le \delta_f$}{
            \Return True\;
            }{
            Let $F_{\Delta,m}(z)=(z-b)$\;
            
            \Return{$(b,p,m)$}\;
            }
        }
    }
\end{algorithm}

Note that since we know that $P_f(x)\in\QQ[[x]]$ by Lemma~\ref{lem:rational}, it is sufficient to compute a square-free factorization of $F_{\Delta_i}(z) \in \QQ$, as opposed to a full factorization in $\bar{\QQ}$.

\end{document}